\newcommand{\qed}{\hfill $\blacksquare$}
\numberwithin{equation}{section}
\theoremstyle{plain}
\newtheorem{theorem}{Theorem}[section]
\newenvironment{proof}[1][Proof]{\begin{trivlist}
\item[\hskip \labelsep {\bfseries #1.}]}{\end{trivlist}}
\newtheorem{lemma}[theorem]{Lemma}
\newtheorem*{remark}{Remark}
\newtheorem{assumption}{Assumption}
\newtheorem{corollary}{Corollary}[theorem]
\DeclarePairedDelimiter\ceil{\lceil}{\rceil}
\DeclareMathOperator*{\argmax}{arg\,max}
\newcommand{\E}{\mathbb{E}}
\begin{document}

\title{Optimal sequential treatment allocation}
\author{Anders Bredahl Kock and Martin Thyrsgaard\footnote{Address for correspondence: anders.kock@economics.ox.ac.uk. This work was supported by CREATES which is funded by the Danish National Research Foundation (DNRF78).}\\
University of Oxford, Aarhus University and CREATES}
\date{\today}
\maketitle

\begin{abstract}

In treatment allocation problems the individuals to be treated often arrive sequentially. We study a problem in which the policy maker is not only interested in the expected cumulative welfare but is also concerned about the uncertainty/risk of the treatment outcomes. At the outset, the total number of treatment assignments to be made may even be unknown. A sequential treatment policy which attains the minimax optimal regret is proposed. We also demonstrate that the expected number of suboptimal treatments only grows slowly in the number of treatments. Finally, we study a setting where outcomes are only observed with delay.

\vspace{0.5cm}
\noindent\textit{Keywords:} Sequential treatment allocation, Outcomes observed with delay, Batched data, General welfare function, Bandits, Ethical guarantees\\
\noindent\textit{JEL classifications:} C18, C22, J68
  
\end{abstract}

%\begin{itemize}
%\item Should we come up with a new name for SE policy? Kitagawa+Tetenov call theirs EWM instead of ERM. Sell it as something new.
%\end{itemize}

\section{Introduction}
A policy maker must often assign treatments gradually as the individuals to be treated do not arrive simultaneously. For example, people become unemployed gradually throughout the year and assignment to one of several unemployment programs is often made shortly thereafter. Similarly, patients with too high blood pressure arrive gradually to a medical clinic and the doctor assigns one of several treatments to each of them. The policy maker or doctor gradually accrues information by observing the outcome of previous treatments prior to the next assignment. Throughout the paper we shall use these two examples as illustrations of our results and be particularly concerned with how treatments should be assigned in order to maximize welfare. In doing so one faces a tradeoff between exploring which treatment works best and exploiting the information gathered so far from previous assignments in order to assign the best treatment to as many individuals as possible. %Here exploring amounts to administering different treatments in order to gain information about their effectiveness. %This is an important ingredient of a good policy but also a necessary evil as a good policy does not spend more time than necessary exploring suboptimal treatments. %Put differently, there is an opportunity cost to continue exploring as the policy maker could instead have administered the treatment most likely to be the best one given the information accrued from past assignments.

The above setup is in stark contrast to typical estimation of treatment effects where one presupposes the existence of a data set of a certain size $N$ (perhaps obtained from a randomized control trial). Thus, in the typical setting, the size and composition of the data set are determined prior to estimation. Based on this given data set, treatment effects are estimated and assignments are made. We consider the case where the observed data that the treatment assignments must be based on is a part of the policy in the sense that it depends on the previous choices of the the policy maker. Thus, the policy maker enters already in the design phase of the treatment program and can adjust the experiment as data accumulates. In other words, he decides \textit{how} to draw the sample and thus its composition by the allocations he makes. Furthermore, the sample size itself may be a random variable unknown to the policy maker as he does not know a priori how many individuals will become unemployed in the course of the year that the program is scheduled to run, and the exact shape of a good treatment rule will depend on the expected number of individuals to be treated: if many individuals are expected to become unemployed in the course of the year it might be beneficial to experiment relatively more in the beginning to harvest the benefits of increased information later on. 

We contribute by considering a setting where the desirability of a treatment cannot be measured only by its expected outcome. A sensible welfare function must take into account the risk of a treatment. For example, it may well be that drug A is expected to lower the blood pressure slightly more than drug B but A might still not be preferred if it is much more risky than B. In this paper we shall measure the \textit{risk} of a treatment by its variance and take into account that mean as well as variance may be relevant in determining the most desirable treatment. Thus, we push beyond the classic focus on first moments (expected treatment outcomes) and take into account that also second moments (uncertainty about treatment outcome) play an important role in determining the most desirable treatment. We also indicate how one may incorporate more than two moments into the welfare function thus allowing welfare functions that allow for policy makers to be, say, skewness averse. This underscores the central idea of the paper: to go beyond focusing solely on the location (expected value) of the outcome distribution of treatments but also to focus on the \textit{shape} as measured by higher moments. %This generalizes the treatment effects literature in which focus is on the expected treatment effect, i.e. the first moment.
We study a treatment policy, which we call the \textit{sequential treatment policy}, and show that it achieves the minimax optimal regret compared to the infeasible policy that knows in advance which treatment is best for each individual and assigns this. An upper bound on the expected number of times that the sequential treatment policy assigns any suboptimal treatment is provided as well. This is an important ethical guarantee since it ensures that the minimax optimal regret is not obtained at the cost of wild experimentation or maltreatment of many individuals in order to achieve a greater cumulative welfare in the long run. 

In addition, we contribute by studying the properties of the sequential treatment policy when the outcomes of previous treatments are observed only with delay. In a medical trial, for example, one may choose to delay the measurement of the outcome of the treatment in order to obtain more precise information of the effect of a certain drug. As it takes time for the effect of a drug to set in the delaying of the measurement can lead to more precise information on the effect of the drug. The price of this delay is that less information is available when treating other patients prior to the measurement being made. Thus, there is a tradeoff between obtaining imprecise information quickly (by making the measurement shortly after the treatment) and obtaining more precise information later (by postponing the measurement). We quantify this tradeoff and indicate the optimal delay (when this is a choice variable) and establish that our policy is guaranteed to deliver high welfare even in this setting.

Furthermore, we allow for a setting where individuals, and thus information, may arrive in batches. For example, people do not get assigned to an unemployment program on the exact day they become unemployed as new programs might only start once a month. Thus, people are pooled and as a result data arrives in batches. This setup strikes a middle ground between the \textit{bandit} framework where individuals arrive one-by-one and the classic treatment effect framework where a data set of size $N$ is presupposed. In our setting we also allow $N$ to be an unknown random variable which is important as the length of the treatment period may not be known at the beginning of the treatment period. 

Our approach easily accommodates practical policy concerns restricting the type of treatment rules that are feasible. For instance, the policy maker may want rules that depend on the individual's characteristics in a simple way due to political or ethical reasons.

It should be noted that the goal of this paper is not to test whether one treatment is better than the other ones at the end of the treatment period. This would amount to a pure exploration problem where the sole purpose of the sampling is to maximize the amount of information at the end of the sample without regard to the welfare of the treated individuals. While this problem is interesting in its own right it is often not viable for ethical reasons in the social sciences. Instead, the problem under investigation here is how to sample (assign treatments) in order to maximize the expected cumulative welfare of the treated individuals. That being said, we also propose a policy, the \textit{out-of-sample} policy, which indicates how treat individuals \textit{after} running the sequential treatment policy in the initial period. This is done in the setting where the total number of treatments is known from the outset. The out-of-sample policy is guaranteed to yield high welfare thus indicating that a lot has been learned about the individual treatments in our welfare maximization problem even though maximizing information was not the objective. Heuristically, the reason for non-negligible learning from observing the outcomes of the sequential treatment policy even when its purpose is to maximize welfare is that it has to experiment with the available treatments to make sure that one is assigning the best one often. Thus, learning is build into the welfare maximization of the initial period.

\subsection{Related literature}
Our paper is related to two strands of literature: the literature on statistical treatment rules in econometrics and the one on bandit problems on sequential allocation. In the former \cite{Manski2004} proposed \textit{conditional empirical success} (CES) rules which take a finite partition of the covariate space and on each set of this partition dictate to assign the treatment with the highest sample average. When implementing CES rules one must decide on how fine to choose the partition of the covariate space and thus faces a tradeoff between using highly individualized rules and having enough data to accurately estimate the treatment effects for each group in the partition. Among other things, \cite{Manski2004} provides sufficient conditions for full individualization to be optimal. The tradeoff between full individualization of treatments and having sufficient data to estimate the treatment effects accurately is also found in our dynamic treatment setting.

\cite{Stoye2009} showed that if one does not restrict how outcomes vary with covariates then full individualization is alway minimax optimal. Thus, if age is a covariate, information on treatment effects for 30 year olds should not be used when making treatment decisions for 31 year olds. This result relies on the fact that without any restrictions on  how the outcome distribution varies with covariates, this relationship could be infinitely wiggly such that even similar individuals may carry no information about how treatments affect the other person. Also, as the support of the covariate vector grows, these ``no-cross-covariate" rules become no-data rules as for many values of the covariates there will be no observations. This is certainly the case for continuous covariates. Our assumptions rule out such wiggliness as no practical policy can be expected to work well in such a setting. 

Furthermore, our work is related to the recent paper by \cite{Kitagawa2015} who consider treatment allocation through an \textit{empirical welfare maximization} lens. The authors take the view that realistic policies are often constrained to be simple due to ethical, legislative, or political reasons. Using techniques from empirical risk minimization they show how their procedure is minimax optimal within the considered class of realistic policies. Our approach is related to theirs in that we also allow the policy maker to focus on simple rules in the dynamic framework. Furthermore, \cite{athey2017efficient} have used concepts from semiparametric efficiency theory to establish regret bounds that scale with the semiparametrically efficient variance.

Other papers on statistical treatment rules in econometrics focusing on the case where the sample is given include \cite{chamberlain2000econometrics, dehejia2005program, hirano2009asymptotics, bhattacharya2012inferring, stoye2012minimax, tetenov2012statistical} and \cite{kasy2014using}.

The most important distinguishing feature of our work compared to the classic literature on statistical treatment rules is that we are working in a sequential setting where the individuals to be treated arrive gradually. Thus, we do not have a data set of size $N$ at our disposal from the outset based on which the best treatment must be found. The sequential setting poses new challenges such as not maltreating too many individuals in the search for the best treatment and how to handle data that arrives in batches as well as treatment outcomes that only are observed with delay. We shall address all of these in this work. Consequently, our paper is also related to the vast literature on bandit problems. In the classic bandit problems one seeks to maximize the expected cumulative reward from pulling arms with unknown means one by one. In a seminal paper \cite{Robbins1952} introduced a class of bandit problems and proposed some initial solutions guaranteeing that the average reward will converge to the mean of the best arm.

Broadly speaking, bandit problems can be classified into three categories based on the nature of the reward process: i) stochastic bandits where the arms are iid across time, ii) the markovian setting where the state of the arms changes according to a Markov process, iii) the adversarial setting in which nature chooses (an adversarial) distribution of rewards at the same time as the experimenter pulls an arm. In this work we focus on the stochastic setting as patients to be treated or unemployed individuals to be assigned to job training programs do not generally coordinate their effort against the doctor or policy maker in an adversarial manner. In the medical example in particular, the interests of the doctor and patient are often well-aligned. Furthermore, the markovian setting is concerned with infinite time horizons amounting to infinitely many treatments being made. In this work we are interested in the case where we have to make a finite, albeit often unknown, number of treatments. That being said, we certainly believe that also the adversarial setting or the markovian setting can be of interest to study in the context of sequential treatment allocation problems. In the latter setting, the Gittins index, \cite{gittins1979bandit}, is the most famous procedure. 

The first paper which considered bandit problems where one observes a covariate prior to making an allocation decision was \cite{woodroofe1979one} who made a parametric assumption on how covariates affect outcomes. The first work allowing covariates to affect the distribution of outcomes in a nonparametric way was \cite{yang2002randomized}. For an excellent review of the literature on bandit problems we refer to \cite{bubeck2012regret} who also elaborate more on the three fundamental settings and give further references. From an algorithmic point of view our work is related to \cite{CovBandit} who introduced a successive elimination (SE) policy of suboptimal arms. Their policy is in turn related to the work of \cite{even2003action}. 
%The successive elimination policy gets its name from its successive elimination of suboptimal arms. It trades off the desire to quickly eliminate suboptimal arms with the desire not to eliminate the best arm. \cite{CovBandit} provided adaptive and distribution independent upper bounds on the regret of the SE policy in the setting without covariates. In the setting with covariates, they next proposed the binned successive elimination (BSE) policy which groups individuals with similar characteristics into bins (groups). \cite{CovBandit} showed that by grouping individuals in a certain way this leads to minimax optimal regret bounds which are sublinear in the number of assignments made.

Compared to the existing literature on bandits we contribute on several fronts. Most notably, we introduce potentially non-linear welfare functions depending not only on the mean treatment outcome but also on the variance of the outcome. Allowing the uncertainty (variance) of the treatment outcome to enter the welfare function is important as a policy maker may not only target the treatment with the highest expected outcome. It is likely that he also takes into account how risky the treatment is. This is a non-trivial extension of the classic bandit (and treatment) setting which has focused only on means and allows us to capture the dynamic risk-return tradeoff facing a policy maker when deciding which treatment to assign. From a technical point of view this is challenging since one must control the finite sample estimation error of estimators of the first and second moments of the treatment outcome distribution as well as non-linear transformations thereof i order to provide finite sample performance guarantees of our treatment policy. In addition, we consider the case where the outcome of treatments is only observed with delay. To the best of our knowledge, the consequences of delay and how to optimally deal with it have not been studied yet. As explained delay creates a tradeoff between obtaining imprecise information quickly and obtaining precise information later. This makes the sequential treatment problem more challenging as the policy maker must now also choose when to make a measurement in addition to which treatment to assign. Third, we provide upper bounds on the regret of the sequential treatment  policy for any choice of grouping individuals. These upper bounds depend on the geometry of the chosen grouping. Allowing for groups of general shapes is important to inform policy makers about how exactly their choice of grouping individuals affects regret since the choice of groups achieving minimax regret may not always be politically or ethically feasible. Fourth, we provide upper bounds on the expected number of suboptimal treatments our policy assigns. Fifth, we quantify how much has been learned in the course of the treatment period by proposing the out-of-sample policy. Our regret bounds show that even though we seek to maximize the cumulative welfare by our treatment assignments enough is learned in order to guarantee a low regret out of sample. %Sixth, we extend our method to also allow for discrete covariates such as gender which may affect the treatment outcome. Finally, we allow the data to arrive in batches (note that batched bandit problems were considered in \cite{Perchet2016} in the setting with only two arms and in the absence of covariates). 

The multi-armed bandit setup has also been used in the context of social learning and strategic experimentation in the works of e.g. \cite{bolton1999strategic}, \cite{keller2005strategic} and \cite{klein2011negatively}. Here several agents have to choose the amount of experimentation (pulling a risky arm) taking into account that the information obtained will be available to all other players as well. While the agents have an incentive to free ride they also want to experiment in order to bring forward the time where extra information is generated.

The term \textit{optimal sequential treatment allocation} in the bandit framework as discussed in this paper should not be confused with similar terms in the medical statistics literature. In that literature \textit{adaptive treatment strategies/adaptive interventions} and \textit{dynamic treatment regimes} refer to a setting where the same individual is observed repeatedly over time and the level as well as the type of the treatment is adjusted according to the individual's needs. References to this setting include \cite{robins1997causal}, \cite{lavori2000flexible}, \cite{murphy2001marginal}, \cite{murphy2003optimal} and \cite{murphy2005experimental}. %Focus in the medical statistics literature has not been on minimax regret for the proposed strategies. Instead emphasis is put on rules for when to switch treatments as in \cite{lavori2000design} but with less emphasis on  providing theoretical performance guarantees. 

The remainder of the paper is organized as follows. Section \ref{Sec:NoCov} considers a setting where the treatment outcomes do not depend on observable individual specific characteristics. Next, Section \ref{Sec:Cov} introduces covariates and establishes regret bounds for the sequential treatment policy. When grouping individuals in a specific way, these bounds are minimax optimal. It is also shown that the expected number of sub-optimal assignments increases slowly and we investigate how to handle discrete covariates. Section \ref{Sec:Delay} investigates the effect of outcomes being observed with delay. Finally, Section \ref{sec:conc} concludes while \ref{Sec:Appendix} contains all proofs. 

%MONTE CARLO + APPLICATION SECTION

\section{The treatment problem without covariates}\label{Sec:NoCov}
We begin by considering the sequential treatment problem where the distributions of treatment outcomes do not depend on observable individual specific characteristics. %The distribution of treatment outcomes does not vary across individuals. 
While this setting may often be too restrictive, the regret bounds established in this section will be used as ingredients in establishing the properties of our treatment rules in the setting where covariates are observed on each individual prior to the treatment assignment.

%A policy maker may not be interested only in the expected outcome of a treatment but also in how risky it is. In this paper we shall use the variance to measure the riskiness of a treatment. 
Consider a setting with $K+1$ different treatments and $N$ assignments\footnote{We consider a setting with $K+1$ treatments for purely notational reasons since it is the number of suboptimal treatments, $K$, which will enter our regret bounds as well as many of the arguments in the appendix.}. $N$ is a random variable whose value need not be known to the policy maker at the beginning of the treatment assignment problem. For example, at the beginning of the year, he does not know how many will become unemployed during the year. Let $Y_t^{(i)}\in[0,1]$ denote the outcome from assigning treatment $i,\ i=1,...,K+1$ to individual $t,\ t=1,...,N$ where the subscript $t$ indicates the order in which individuals are treated. It is merely for technical reasons that we assume the treatment outcomes to take values in $[0,1]$ and this interval can be generalized to any interval $[I_1,I_2]$ for some $I_1,I_2\in\mathbb{R},\ I_1\leq I_2$ or $Y_t^{(i)}$ being sub-gaussian without qualitatively changing our results. The framework accommodates treatments with different costs since, whenever it makes sense, $Y_t^{(i)}$ can be defined net of costs. %\footnote{We add the proviso ``whenever it makes sense" since sometimes the outcome of a treatment is hard or impossible to measure on a monetary scale, for example whether a patient survives a surgery or not.}.

We allow for the data to arrive in $M$ batches of sizes $m_j,\ j=1,...,M$, such that the total number of assignments $N=\sum_{j=1}^Mm_j$. If an unemployment program is run for twelve months and new programs start every month then $M=12$ and the $m_j$ indicate how many individuals become unemployed in the $j$th month. The $m_j$ are allowed to be random variables as the policy maker does not a priori know how many will become unemployed each month. This is in contrast to typical treatment allocation problems where the size as well as the composition of the data set are taken as given. Every individual $t$ belongs to exactly one of the batches. For each batch the outcomes of the assignments  are only observed at the end of the batch. Thus, the treatment assignments for individuals belonging to batch $\tilde{j}$ can only depend on the outcomes observed from previous batches $j=1,...,\tilde{j}-1$. This is reasonable as information gained from treating persons who have become unemployed prior to person $t$, yet in the same month/batch, cannot be used to inform the treatment allocation of person $t$ as all persons from the same batch start their programs at the same time. %Note also that $m_j=1$ for all $j\in\cbr{1,...,M}$ (with $M=N$) amounts to the traditional bandit problem without batched data (the individuals arrive in batches of size one).

%Define $\mu^{(i)}=\mathbb{E}Y_t^{(i)}$ and $(\sigma^2)^{(i)}=\mathbb{E}(Y_t^{(i)}-\mu^{(i)})^2$ as the expected outcome and variance of treatment $i$. 
For each $t=1,...,N$ the treatment outcomes can be arbitrarily correlated in the sense that we put no restrictions on the dependence structure of the entries of the vector $Y_t=(Y_t^{(1)},...,Y_t^{(K+1)})$, i.e.\ the joint distribution of the entries of $Y_t$ is left unspecified. This in accordance with real applications where an unemployed individual's response to two types of job training programs may be highly correlated. As individuals arrive independently, we assume the $Y_t$ are i.i.d.

Compared to the existing literature a distinguishing feature of our work is that we consider general \textit{welfare functions} $f:\mathbb{R}^2\to\mathbb{R}$ of the mean $\mu^{(i)}=\mathbb{E}Y_t^{(i)}$ and variance $(\sigma^2)^{(i)}=\mathbb{E}(Y_t^{(i)}-\mu^{(i)})^2$ of the treatment outcome $Y_{t}^{(i)}$. This is in contrast to most other work which only considers the expected \textit{effect} of a treatment which amounts to only considering welfare functions depending on the mean. However, it is often very important to also take into account the risk of a treatment. 

Defining $f^{(i)}=f(\mu^{(i)},(\sigma^2)^{(i)})$ the \textit{welfare maximizing} (best) treatment is denoted by $*$ and satisfies $f^{(*)}=\arg\max_{1\leq i\leq K+1}f^{(i)}$ \footnote{We assume without loss of generality that the best treatment is unique.}. The welfare maximizing treatment strikes the optimal balance between expected treatment outcome and the riskiness of the treatment. Let $\Delta_i=f^{(*)}-f^{(i)}\geq 0$ be the difference between the best and the $i$th treatment and assume that $\Delta_1\geq...\geq \Delta_K> \Delta_*=0$. The ranking of the $\Delta_i$ is without loss of generality and does not necessarily imply a ranking of neither the $\mu^{(i)}$ nor the $(\sigma^2)^{(i)}$.

A treatment allocation rule is a sequence of (random) functions $\pi=\cbr[0]{\pi_t}$ assigning a treatment from the set $\cbr[0]{1,...,K+1}$ to every individual $t=1,...,N$ . This allocation can only depend on the outcomes from previous batches. %Therefore, if individual $t$ falls in the $j$th batch then $\pi_t$ can only depend on outcomes from batches $1,...,j-1$.   
%Note $\pi_t$ is a random variable as it depends on the outcome of previous assignments. %If the best treatment is known prior in advance and this is always assigned one receives a return of
%
%\begin{align*}
%\sum_{j=1}^{M}\sum_{i=1}^{m_j}f^{(*)}
%=
%\sum_{t=1}^{N}f^{(*)}
%\end{align*} 

Our goal is to provide a rule $\pi$ that maximizes expected cumulated welfare over the $N$ treatments. This is equivalent to minimizing the expected difference to the infeasible welfare that would have been obtained from always assigning the best treatment $*$, i.e. minimizing the expected value of the \textit{regret}
\begin{align}
R_{N}(\pi)
=
\sum_{t=1}^{N}\left(f^{(*)}-f^{(\pi_t)}\right)
=
\sum_{j=1}^{M}\sum_{i=1}^{m_j}\left(f^{(*)}-f^{(\pi_{j,i})}\right)
\label{nocovregret}.
\end{align} 
where the second equality is due to the fact that each individual $t$ can be uniquely identified with an assignment $i$ made in a batch $j$; the assignment rule can also be written as $\pi_{j,i}$ for $j\in\cbr{1,...,M}$ and $i\in \cbr{1,...,m_j}$. %Note also that $R_N(\pi)$ is random as $\pi$ is random.

\subsection{Examples}
Throughout this paper we assume that $f$ is Lipschitz continuous from $[0,1]^2$ equipped with the $\ell_1$-norm to $\mathbb{R}$ with Lipschitz constant $\mathcal{K}>0$, i.e.
\begin{align*}
|f(u_1,u_2)-f(v_1,v_2)|
\leq
\mathcal{K}\del[1]{|u_1-v_1|+|u_2-v_2|} 
\end{align*}
By making concrete choices for  $f$ our framework contains the following instances as special cases.
\begin{enumerate}
\item $f(\mu, \sigma^2)=\mu$ (implying $\mathcal{K}=1$) amounts to the classic bandit problem where one only targets the mean. However, unlike this paper, the classic setting does not consider batched data or outcomes that are observed with delay. %On the other hand, the classic setting does include the case where one is interested in maximizing cumulated expected utility: one can simply define $Y_{t}^{(i)}=u(\tilde{Y}_{t}^{(i)})$ as the utility received from assigning treatment $i$ at time $t$ with outcome $\tilde{Y}_{t}^{(i)}\in \mathbb{R}^k,\ k\geq 1$ and $u:\mathbb{R}^k\to \mathbb{R}$.   
\item $f(\mu, \sigma^2)=\frac{\mu}{\sigma}$ amounts to Sharpe ratios which are frequently used in financial applications to measure risk-return tradeoffs. If $(\sigma^2)^{(i)}\geq c$ for some $c>0$	 for all $i=1,...,K+1$ then one has by the mean value theorem that $\mathcal{K}=\max(\frac{1}{\sqrt{c}},\frac{1}{2c^{3/2}})$ works. Note that $f$ is nonlinear in $\sigma^2$ for all $\mu$. 
\item $f(\mu, \sigma^2)=-\sigma/\mu$ is the negative of the coefficient of variation in the literature on the measurement of inequality, see \cite{atkinson1970measurement}. The coefficient of variation is a measure of inequality minimizing this is encompassed by our framework. Here $\mathcal{K}=\max(\frac{1}{c^2},\frac{1}{2c^{3/2}})$ works if $\mu,\sigma^2\geq c$ for some $c>0$.
\item $f(\mu,\sigma^2)=\mu-\frac{\alpha}{2}\sigma^2$ for a risk aversion parameter $\alpha>0$ is another typical way of measuring the tradeoff between expected outcomes and their variance. Here $\mathcal{K}=\max(1,\alpha/2)$.
\item $f(\mu,\sigma^2)=-\sigma^2$ (implying $\mathcal{K}=1$) amounts to the case where one is interested only in minimizing the variance.
\item The theory developed in this paper can be extended to the case where one is interested in maximizing cumulative welfare with a welfare functions depending on any finite number of moments, i.e.  $f(\mu_1^{(i)},...,\mu_d^{(i)})$ for some $d\geq 1$, where $\mu_k^{(i)}=\mathbb{E}\sbr[1]{(Y_t^{(i)})^k}$ is the $k'th$ moment of $Y_t^{(i)}$. Higher moments than the second one may be relevant if the policy maker has, say, skewness aversion. This is relevant in dynamic portfolio allocation problems and finance as in \cite{harvey2000conditional}. To keep the exposition simple, we have chosen to focus on the case where $f$ depends on the first two moments only as the extension to welfare functions of strictly more than two moments is mainly technical.
\end{enumerate} 

\subsection{The sequential treatment policy}
Heuristically, the sequential treatment policy works by eliminating treatments that are deemed to be inferior based on the outcomes observed so far. We then take turns assigning each of the remaining treatments in the next batch. This is the exploration step. After this step, elimination takes place again.

To describe the policy more formally, let $m_{i,j}$ be the number of times treatment $i$ is assigned in batch $j$. Thus, $m_j=\sum_{i=1}^{K+1}m_{i,j}$ and we define $B_i(b)=\sum_{j=1}^bm_{i,j}$ as the number of times treatment $i$ has been assigned up to and including $b$ batches, $b=1,...,M$. Next, for a policy $\pi$ let $\hat{\mu}^{(i)}_{N_{s,i}}=\frac{1}{N_{s,i}}\sum_{t=1}^sY_t^{(i)}1_{\cbr[0]{\pi_t=i}}$ and $(\hat{\sigma}^2_{N_{s,i}})^{(i)}=\frac{1}{N_{s,i}}\sum_{t=1}^s(Y_t^{(i)}-\hat{\mu}^{(i)}_{N_{s,i}})^21_{\cbr[0]{\pi_t=i}}$ with $N_{s,i}=\sum_{t=1}^s1_{\cbr[0]{\pi_t=i}}$ be estimators of $\mu^{(i)}$ and $(\sigma^2)^{(i)}$, respectively based on observing outcomes on $s\in\cbr{1,...,N}$ individuals.  

\vspace{0.5cm}

\textbf{Sequential treatment policy:} 
Denote by $\hat{\pi}$ the sequential treatment policy. Let $\mathcal{I}_b\subseteq\cbr[0]{1,...,K+1}$ be the set of remaining treatments before batch $b$ and let $\underline{B}(b)=\min_{i\in\mathcal{I}_b}B_i(b)$ be the number of times that each remaining treatment at least has been assigned up to and including batch $b$.
\begin{enumerate}
\item In each batch $b=1,...,M$ we take turns assigning each remaining treatment. We first assign any treatments that have been assigned fewer times than any of the other remaining treatment(s). 
Thus, the difference between the number of times that any pair of remaining treatments has been assigned at the end of a batch is at most one.
\item At the end of batch $b$ eliminate treatment $\tilde{i}\in \mathcal{I}_b$ if
\begin{align*}
\max_{i\in \mathcal{I}_b} f(\hat{\mu}^{(i)}_{\underline{B}(b)},(\hat{\sigma}^2_{\underline{B}(b)})^{(i)})-f(\hat{\mu}^{(\tilde{i})}_{\underline{B}(b)},(\hat{\sigma}^2_{\underline{B}(b)})^{(\tilde{i})})\geq 32\gamma\sqrt{\frac{2}{\underline{B}(b)}\overline{\log}\left(\frac{T}{\underline{B}(b)}\right)}
\end{align*}
where $\gamma>0,\ T\in\mathbb{N}$ and $\overline{\log}(x)=\log(x)\vee 1$.
\end{enumerate}

\vspace{0.5cm}

The sequential treatment policy uses the sample counterparts of $\mu^{(i)}$ and $(\sigma^2)^{(i)}$ to evaluate whether treatment $i$ is inferior to the best of the remaining treatments. Concrete choices of $\gamma$ and $T$ guaranteeing low regret are given in Theorem \ref{NoCovRegret} and we provide some initial intuition here. The parameter $\gamma$ controls how aggressively treatments are eliminated. Small values of $\gamma$ make it easier to eliminate inferior treatments but also induce a risk of potentially eliminating the best treatment. The exact form of the elimination threshold comes from the fact the sample moments concentrate at rate $1/\sqrt{\underline{B}(b)}$ around their population counterparts. The parameter $T$, which will often be set equal to the expected sample size $n=\mathbb{E}(N),$ is needed exactly to ensure that we are cautious eliminating treatments after the first couple of batches where $\hat{\mu}^{(i)}_{\underline{B}(b)}$ and $(\hat{\sigma}^2_{\underline{B}(b)})^{(i)}$ could be based on few observations and thus need not be precise estimates of $\mu^{(i)}$ and $(\sigma^2)^{(i)}$, respectively \footnote{We are slightly more cautious than $1/\sqrt{\underline{B}(b)}$. On the other hand, one does not want to be too cautious either since this results in slow elimination of suboptimal treatments.}. From a technical point of view, this ensures that we can uniformly (over treatments) control the probability of eliminating the best treatment. Note that eliminating the best treatment is very costly as regret will accumulate linearly after such a mistake\footnote{If the best treatment is eliminated then the regret from each subsequent treatment is $f^{(*)}-f^{(\hat{\pi}_t)}\geq \Delta_K>0$}. Furthermore, the sequential treatment policy need neither to know sample size $N$, nor the number of batches $M$ in order to run. It can be stopped at any point in time with regret bounds as outlined in Theorem \ref{NoCovRegret} below.

\begin{remark}
In practice one may also consider a policy which allows treatments to reenter the treatment set even after they have been eliminated. On the other hand, there is no reason for this from a theoretical point of view as the rates in Corollary \ref{Cor:SimpleBins} below are minimax optimal such that one can at most expect to improve the constant entering the upper bound on expected regret. Heuristically, the sequential treatment policy is constructed in such a way that treatments are only eliminated if we are very certain that they are suboptimal. Thus, in this sense, there is no need to reintroduce previously eliminated treatments.
\end{remark}

\subsection{Optimal treatment assignment without covariates}
Without an upper bound on the size of the batches it is clear that no non-trivial upper bound on regret can be established. For example, the data could arrive in one batch of size $N$ implying that feedback is never received prior to any assignment. Thus, we shall assume that no batch is larger than $\overline{m}$ where $\overline{m}$ is non-random, i.e. $m_j\leq \overline{m}$ for $j=1,...,M$. Our first result provides an upper bound on the regret incurred by the sequential treatment policy.
\begin{theorem}\label{NoCovRegret}
Consider a treatment problem with $(K+1)$ treatments and an unknown number of assignments $N$ with expectation $n$ that is independent of the treatment outcomes. By implementing the sequential treatment policy with parameters $\gamma=\mathcal{K}$ and $T=n$ one obtains the following bound on the expected regret
\begin{align}
\mathbb{E}\left[R_N(\hat{\pi})\right]
\leq
C \min\left(\overline{m}\mathcal{K}^2\sum_{j=1}^{K}\frac{1}{\Delta_{j}}\overline{\log}\left(\frac{n\Delta_{j}^2}{\mathcal{K}^2}\right),\sqrt{n\mathcal{K}^3\overline{m}K\log(\overline{m}K/\mathcal{K})}\right)\label{corolLip2}
\end{align}
for a positive constant $C$. %The bound is valid uniformly over all $\mathcal{K}$-Lipschitz continuous functions.
\end{theorem}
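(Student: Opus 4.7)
My plan is a successive-elimination argument, adapted to the Lipschitz welfare functional $f$ of mean and variance, to batched feedback, and to a random horizon. I would first define a high-probability ``clean'' concentration event $G$; show that on $G$ the optimal treatment is never eliminated while each suboptimal $i$ is eliminated once $\underline{B}(b)$ crosses a $\Delta_i$-dependent threshold; convert the resulting elimination times into the distribution-dependent regret bound; absorb $G^c$ using the independence of $N$ from the outcomes; and finally obtain the minimax (square-root) bound by peeling in $\Delta$.

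For $G$, I would apply Hoeffding's inequality to the $[0,1]$-bounded averages $\hat\mu^{(i)}_s$ and $\tfrac{1}{s}\sum_{t\le s}(Y^{(i)}_t)^2$, combine them via the identity $\hat\sigma^2=\overline{Y^2}-\hat\mu^2$ and a triangle inequality, and run a dyadic peeling argument in $s$ to obtain a time-uniform bound of the form
\begin{align*}
\sup_{s\ge 1}\sqrt{\tfrac{s}{\overline{\log}(T/s)}}\del[1]{|\hat\mu^{(i)}_s-\mu^{(i)}|+|(\hat\sigma^2_s)^{(i)}-(\sigma^2)^{(i)}|}\le C_0
\end{align*}
with probability at least $1-C_1/T^2$ for each $i$. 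Lipschitz continuity of $f$ and a union bound over the $K+1$ treatments then give an event $G$ of probability at least $1-C_2(K+1)/T^2$ on which $|f(\hat\mu^{(i)}_s,(\hat\sigma^2_s)^{(i)})-f^{(i)}|\le 16\mathcal{K}\sqrt{(2/s)\overline{\log}(T/s)}$ for every $i$ and $s$. With $\gamma=\mathcal{K}$ the elimination threshold is precisely twice this deviation, so on $G$ the optimal $*$ is never discarded (because $f^{(\tilde i)}-f^{(*)}\le 0$ for every $\tilde i\in\mathcal{I}_b$), while a suboptimal $i$ is forced out at the first batch for which $\underline{B}(b)>\tau_i^{*}\coloneqq C_3\mathcal{K}^2\Delta_i^{-2}\overline{\log}(n\Delta_i^2/\mathcal{K}^2)$. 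Since the round-robin rule assigns each surviving treatment at most $\overline m$ times per batch and $i$ can be active for at most $b_i^{*}\le \tau_i^{*}$ batches, $B_i(M)\le \overline m\,\tau_i^{*}$ on $G$; summing $\Delta_i B_i(M)$ across the $K$ suboptimal arms produces the first term of \eqref{corolLip2}.

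On $G^c$ the Lipschitz property together with $(\mu,\sigma^2)\in[0,1]^2$ gives $R_N(\hat\pi)\le 2\mathcal{K}N$, and with $T=n$ and $N$ independent of the outcomes, $\mathbb{E}[N\mathbbm{1}_{G^c}]=n\,\mathbb{P}(G^c)=O(K/n)$, which is dominated by the first term. For the minimax bound I would peel in $\Delta$: fix a cutoff $\delta>0$, bound the regret of treatments with $\Delta_i\le\delta$ trivially by $n\delta$, and bound the remaining arms using the first-term estimate with every $\Delta_j$ replaced by $\delta$, giving $C\overline m\mathcal{K}^2 K\delta^{-1}\overline{\log}(n\delta^2/\mathcal{K}^2)$. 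Balancing the two at $\delta\asymp\sqrt{\mathcal{K}^3\overline m K\log(\overline m K/\mathcal{K})/n}$ yields the second bound, and taking the minimum of the two derivations completes the argument. The hardest step will be the time-uniform concentration for $(\hat\sigma^2_s)^{(i)}$ with the precise LIL-type rate $\overline{\log}(T/s)/s$: the elimination threshold is engineered around exactly this shape, and pushing the peeling argument through uniformly over the random batch-boundary stopping times and jointly over all treatments still in play is what drives the rest of the bookkeeping.
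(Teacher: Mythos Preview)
Your overall architecture (concentration $\Rightarrow$ elimination times $\Rightarrow$ per-arm regret, then peel in $\Delta$) is the right one, and the endgame for the $\sqrt{n}$ bound is essentially what the paper does. But the central step --- a single ``clean'' event $G$ with $\mathbb{P}(G^c)=O((K+1)/T^2)$ --- does not hold with the threshold shape $\overline{\log}(T/s)/s$ that the elimination rule is built around, and this is where the argument breaks.

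Consider any fixed arm and $s=T$. Then $\overline{\log}(T/s)=1$, so your clean event requires $|\hat\mu_T^{(i)}-\mu^{(i)}|\le C_0\sqrt{2/T}$ for a universal constant $C_0$. Hoeffding gives this with probability at most $1-2e^{-4C_0^2}$, a fixed constant independent of $T$. Hence no dyadic peeling can produce $\mathbb{P}(G^c)=O(1/T^2)$ (or even $o(1)$) with a constant $C_0$ and this LIL-type envelope; the best maximal inequality available here (Lemma~A.1 of Perchet--Rigollet, which the paper invokes) gives only $\mathbb{P}(\exists\,s\le \tau:\ |\hat\mu_s-\mu|>c\sqrt{(2/s)\overline{\log}(T/s)})\le C'\tau/T$, which is useless once $\tau$ is of order $T=n$. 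The paper sidesteps this by \emph{not} using a global good event. Instead it telescopes the sample space as $1=\sum_{i\le K_0}\mathbf{1}_{\mathcal{C}_{i-1}\setminus\mathcal{C}_i}+\mathbf{1}_{\mathcal{C}_{K_0}}$, where $\mathcal{C}_i=\mathcal{A}_i\cap\mathcal{B}_i$ encodes ``$*$ survives past batch $b_i$'' and ``each $j\le i$ is gone by batch $b_j$''. On $\mathcal{C}_{i-1}\setminus\mathcal{C}_i$ one only needs the maximal inequality up to the \emph{arm-specific} horizon $\tau_i+\overline m$, yielding $\mathbb{P}(\mathcal{C}_{i-1}\setminus\mathcal{C}_i)\lesssim K(\tau_i+\overline m)/T$; multiplying by $n\Delta_i$ and using $\tau_i\asymp\mathcal{K}^2\Delta_i^{-2}\overline{\log}(T\Delta_i^2/\mathcal{K}^2)$ is what produces the distribution-dependent term. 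This ``localized'' failure probability, not a uniform clean event, is the crux.

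A smaller issue: your bookkeeping ``$b_i^{*}\le \tau_i^{*}$, hence $B_i(M)\le \overline m\,\tau_i^{*}$'' conflates batch counts with pull counts. With $m_j\equiv 1$ and no eliminations, it takes $(K+1)\tau_i^{*}$ batches to reach $\underline{B}(b)\ge\tau_i^{*}$, so $b_i^{*}\le\tau_i^{*}$ is false in general. The correct accounting uses $\underline{B}(b_i-1)<\tau_i^{*}$ together with the round-robin property ``any two surviving arms differ by at most one pull'' to get $B_i(b_i)\le \tau_i^{*}+\overline m+1$; summing $\Delta_iB_i(b_i)$ then gives $\mathcal{K}^2\sum_j\Delta_j^{-1}\overline{\log}(\cdot)+\overline m K$, which is absorbed into the stated bound.
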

The upper bound in Theorem \ref{NoCovRegret} consists of two parts. The first part is adapting to the unknown distributional characteristics $\Delta_j$. Note that the regret in this part only increases logarithmically in the the expected number of treatments $n$. This logarithmic rate is unimprovable in general since it is known to be optimal even in the case where one only targets the mean (which in our setting corresponds to $f(x,y)=x)$ such that $\mathcal{K}=1$) and the treated individuals arrive one-by-one ($\overline{m}=1$), see e.g.\ Theorem 2.2 in \cite{bubeck2012regret}. On the other hand, the first part of (\ref{corolLip2}) can be made arbitrarily large by letting e.g.\ $\Delta_1\rightarrow 0$. Thus, the bound is not uniform in the underlying distribution of the data. The second part of (\ref{corolLip2}) is uniform over all $(K+1)$ tuples of distributions on $[0,1]$ and in fact yields the minimax optimal rate up to a factor of $\sqrt{\log(K)}$ even in the case where only the welfare function $f(x,y)=x$ is considered and $\overline{m}=1$. It is reasonable that both parts of the upper bound in (\ref{corolLip2}) are increasing in $\overline{m}$ since as the maximum batch size increases the time between potential elimination of suboptimal treatments increases implying that these are assigned more often. Similarly, more experimentation between treatments takes place when the number of these, $K+1$, is increased which results in increased regret.

Note that the implementation of the sequential treatment algorithm requires knowledge of the expected number of individuals that are going to be treated. In  medical experiments the total number of individuals participating is often determined a priori making $N$ known and deterministic (and equal to $n$). On the other hand, when allocating unemployed to treatments the total number of individuals becoming unemployed in the course of the year is unknown. However, one often has a good estimate of the expected value $n$ which is what matters for the treatment policy. For example, one may use averages of the number of individuals who have become unemployed in previous years to estimate $n$. Alternatively, one can use the doubling trick which resets the treatment policy at prespecified times in order to avoid any assumptions on the size of $N$ or $n$. Usage of the doubling trick would imply that eliminated treatments reappear and get another chance every time the policy is reset thus allowing for the efficiency of treatments to vary over time. For further details on the doubling trick and its implementation we refer to \cite{shalev2012online}.   

\subsection{Suboptimal treatments}
Theorem \ref{NoCovRegret} showed that the expected cumulated welfare of the sequential treatment policy will not be much smaller than the one from the infeasible policy that always assigns the best treatment. However, for an assignment rule to be ethically and politically viable it is important that it does not yield high welfare at the cost of maltreating certain individuals by wild experimentation. For example, it may not be ethically defendable for a doctor to assign a suboptimal treatment to a patient in order to gain more certainty for future treatments. The following theorem shows that the sequential treatment policy does not suffer from such a problem in the sense that the expected number of times \textit{any} suboptimal treatment is assigned only increases logarithmically in the sample size.
\begin{theorem}\label{Thm3.2}
Suppose the sequential treatment policy is implemented with parameters $T=n$ and $\gamma=\mathcal{K}$. Let $T_i(t)$ denote the number of times treatment $i$ is assigned by the sequential treatment policy up to and including observation $t$. Then 
\begin{align*}
\mathbb{E}\left[T_i(N)\right]
\leq
C\del[3]{\mathcal{K}^2K\frac{\log\left(\frac{n}{\mathcal{K}^2}\right)}{\Delta_i^2}+K\overline{m}+\mathcal{K}^2},
\end{align*}
for any suboptimal treatment $i\in\{1,...,K\}$ and a positive constant $C$. %The bound is uniform over all $\mathcal{K}$-Lipschitz functions.
\end{theorem}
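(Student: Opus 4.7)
The plan is to adapt the classical successive-elimination argument to the Lipschitz welfare functional $f$. The idea is to introduce a high-probability ``good event'' $\mathcal{G}$ on which the empirical welfares $f(\hat{\mu}^{(j)}_s,(\hat{\sigma}^2_s)^{(j)})$ concentrate uniformly around $f^{(j)}$; on $\mathcal{G}$ the elimination criterion must fire against any suboptimal $i$ as soon as $\underline{B}(b)$ exceeds a threshold of order $\mathcal{K}^2\log(n/\mathcal{K}^2)/\Delta_i^2$, and the round-robin rule then converts this into the stated bound on $T_i(N)$.

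Concretely, I would define $\mathcal{G}$ as the event that, for every treatment $j\in\{1,\ldots,K+1\}$ and every relevant sample size $s$, both $|\hat{\mu}^{(j)}_s-\mu^{(j)}|$ and $|(\hat{\sigma}^2_s)^{(j)}-(\sigma^2)^{(j)}|$ are bounded by $c\sqrt{\overline{\log}(T/s)/s}$. The mean bound is a direct Hoeffding inequality since $Y_t^{(j)}\in[0,1]$; the variance bound follows by the decomposition $(\hat{\sigma}^2_s)^{(j)}=\frac{1}{s}\sum_{t}(Y_t^{(j)})^2-(\hat{\mu}^{(j)}_s)^2$ and applying Hoeffding to each piece (both averages of $[0,1]$-valued random variables). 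A peeling argument over dyadic scales of $s$ combined with a union bound over the $K+1$ treatments, with $\gamma=\mathcal{K}$ and $T=n$, yields $\mathbb{P}(\mathcal{G}^c)=O(\mathcal{K}^2/n)$. The Lipschitz hypothesis on $f$ then propagates these deviations to $|f(\hat{\mu}^{(j)}_s,(\hat{\sigma}^2_s)^{(j)})-f^{(j)}|\leq c'\mathcal{K}\sqrt{\overline{\log}(T/s)/s}$ on $\mathcal{G}$.

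Two things then hold on $\mathcal{G}$. First, the optimal treatment $*$ is never eliminated: for any surviving $j$ the empirical gap $f(\hat{\mu}^{(j)},(\hat{\sigma}^2)^{(j)})-f(\hat{\mu}^{(*)},(\hat{\sigma}^2)^{(*)})$ is at most $2c'\mathcal{K}\sqrt{\overline{\log}(T/\underline{B}(b))/\underline{B}(b)}$, which is strictly smaller than the elimination threshold $32\mathcal{K}\sqrt{2\,\overline{\log}(T/\underline{B}(b))/\underline{B}(b)}$ once the numerical constants align. Second, treatment $i$ must be eliminated by the end of the first batch for which $\underline{B}(b)\geq b^\ast:=c''\mathcal{K}^2\overline{\log}(n/\mathcal{K}^2)/\Delta_i^2$, since then $f(\hat{\mu}^{(*)},(\hat{\sigma}^2)^{(*)})-f(\hat{\mu}^{(i)},(\hat{\sigma}^2)^{(i)})\geq\Delta_i-2c'\mathcal{K}\sqrt{\overline{\log}(n)/\underline{B}(b)}$ exceeds the threshold. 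The round-robin property gives $B_i(b)\leq\underline{B}(b)+1$ at any batch boundary, and at most $\overline{m}$ additional plays of $i$ can occur inside the batch where elimination fires, so $T_i(N)\leq b^\ast+\overline{m}+1$ on $\mathcal{G}$.

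On $\mathcal{G}^c$ I would use the trivial bound $T_i(N)\leq N$ together with $\mathbb{P}(\mathcal{G}^c)=O(\mathcal{K}^2/n)$ to obtain $\mathbb{E}[T_i(N)\mathbf{1}_{\mathcal{G}^c}]\leq n\,\mathbb{P}(\mathcal{G}^c)=O(\mathcal{K}^2)$, supplying the additive $\mathcal{K}^2$ remainder. The $K$ factors multiplying the main term and the $\overline{m}$ in the statement arise from the union bound over the $K$ competing suboptimal treatments entering the concentration budget defining $\mathcal{G}$ and from a slightly loose per-batch round-robin increment when many treatments are still active early in the policy; absorbing these into the final accounting is routine once the good-event machinery is in place. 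The main technical obstacle is the uniform-in-$s$ concentration for the sample variance: because $(\hat{\sigma}^2_s)^{(j)}$ is quadratic in $\hat{\mu}^{(j)}_s$, Hoeffding cannot be applied directly and the deviation must be propagated through $f$'s Lipschitz property via the decomposition and peeling sketched above.
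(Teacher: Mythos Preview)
Your overall strategy—introduce a concentration event, argue on it that $*$ survives and that $i$ is eliminated once $\underline{B}(b)$ exceeds a threshold $b^\ast$ of order $\mathcal{K}^2\overline{\log}(n/\mathcal{K}^2)/\Delta_i^2$, and pay the trivial bound $T_i(N)\le N$ on the complement—is the same route the paper takes (following \cite{auer2002}). The gap is the claim $\mathbb{P}(\mathcal{G}^c)=O(\mathcal{K}^2/n)$ for the \emph{uniform-in-$s$} event $\mathcal{G}$.

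With $T=n$ and $\gamma=\mathcal{K}$ the elimination threshold is calibrated so that the single-$s$ Hoeffding bound is of order $s/n$; the peeling/maximal inequality you invoke (Lemma~A.1 in \cite{CovBandit}) therefore gives only
\[
\mathbb{P}\bigl(\exists\,s\le\tau:\text{deviation at }s\bigr)\le C\,\tau/n,
\]
which is $O(1)$ if $\mathcal{G}$ ranges over all $s$ up to $N$. No peeling over dyadic scales can beat this: the per-$s$ failure probability $O(s/n)$ is not summable in $s$. Thus $\mathbb{P}(\mathcal{G}^c)=O(\mathcal{K}^2/n)$ cannot hold as written, and the ``absorbing these into the final accounting is routine'' step is exactly where the argument breaks.

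The fix, which is what the paper does, is to restrict the maximal inequality to sample sizes up to the threshold $l\approx\overline{m}+\mathcal{K}^2\overline{\log}(n/\mathcal{K}^2)/\Delta_i^2$. The paper's event $\mathcal{A}_i^c=\{\text{$*$ eliminated before batch }\tilde b_i\}$ only requires the maximal bound for each of the $K$ suboptimal arms over $s\le l$, yielding $\mathbb{P}(\mathcal{A}_i^c)\le CKl/n$; multiplying by $n$ gives the $CKl$ contribution and this is precisely the origin of the factor $K$ in front of both the logarithmic term and $\overline{m}$—not a union-bound constant that can be absorbed, as your last paragraph suggests. The additive $\mathcal{K}^2$ in the statement does not come from $n\cdot\mathbb{P}(\mathcal{G}^c)$ either; in the paper it arises from a \emph{single-time} Hoeffding bound on the event $\{\hat{\Delta}_i(\underline{B}(\tilde b_i))\le\gamma\epsilon_{\underline{B}(\tilde b_i)}\}\cap\mathcal{A}_i$, which gives $C\mathcal{K}^2/n$ directly with no maximal inequality needed. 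If you redefine $\mathcal{G}$ over $s\le l$ only and isolate that single-time piece, your argument goes through and coincides with the paper's.
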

The important ethical guarantee on the treatment rule is that it only assigns very few persons to a suboptimal treatment (logarithmic growth rate in the sample size). It is in line with intuition that the closer any suboptimal treatment is to being optimal ($\Delta_i$ closer to zero) the more difficult it is to guarantee that this treatment is rarely assigned. The reason is that this treatment must be assigned more often before it confidently can be concluded  that it is suboptimal and thus eliminated. On the other hand, the regret incurred by assigning such a treatment is low exactly because $\Delta_i$ is small such that the increased amount of experimentation does not necessarily lead to high regret. %Nevertheless, the logarithmic growth in the number of suboptimal assignments made is interesting in the light of the fact that our policy must be conservative on the first couple of batches to ensure that the best treatment is not eliminated and thus makes it difficult to eliminate any treatment in the beginning.

\subsection{How much has been learned: out of sample performance}
So far we have considered the performance of our sequential treatment policy on the $N$ individuals being treated. However, one may also ask how much has been learned in the course of the $N$ assignments. Or, put differently, how well can we expect to treat ``out of sample''-individuals. Thus, imagine an $(N+1)$st individual to whom a treatment $I_{N}\in\cbr[0]{1,...,K+1}$ must be assigned in order to maximize the expected welfare of said individual. To be precise, the goal is to choose $I_n$ to minimize the expected value of
\begin{align*}
r_N:=\Delta_{I_{N}}=\sum_{i=1}^K\Delta_i1_{\cbr[0]{I_{N}=i}}
\end{align*}
Compared to the problem we have studied so far this is a \textit{pure exploitation} problem --- there are no gains from gathering more knowledge about the treatments as all that enters the objective function is the welfare of the $(N+1)st$ individual. To do so, one must first choose how to assign the treatment to the out of sample individual. Here we shall show that assigning the treatment that has been assigned most often in the exploration-exploitation phase by our sequential treatment policy $\hat{\pi}$ works well, i.e.
\begin{align}
I_{N}=\bar{i}\in\argmax_{i\in\cbr[0]{1,...,K+1}} T_i(N)\label{eq:oop}
\end{align}
with an arbitrary tie-breaker. We call this policy the \textit{out-of-sample} policy. Heuristically, the reason that this policy works well is that in the initial treatment period the sequential treatment policy operates by gradually eliminating sub-optimal treatments while at the same time ensuring that the best treatment is not eliminated. Thus, it is likely that the best treatment has been assigned most often. Note that the following result relies on the sample size $N$ being non-random and therefore equal to its expectation $n$. An analogous result can be proven for the case of random $N$ if one is willing to restrict the support of $N$.
\begin{theorem}\label{thm:oos}
Assume that $n\geq (K+1)\sbr[3]{c\frac{\mathcal{K}^2}{\Delta_i^2}\overline{\log}\del[2]{\frac{n\Delta_i^2}{4608\mathcal{K}^2}}+\overline{m}}$ for all $i\in\cbr[0]{1,...,K}$ for a constant $c>0$. Whenever $f(\mu, \sigma^2)$ depends on $\mu$ only one has $c=18$ while $c=4608$ whenever $f(\mu,\sigma^2)$ depends on $\mu$ and $\sigma^2$. Implementing the sequential treatment policy with parameters $T=n$ and $\gamma=\mathcal{K}$ on the first $n$ individuals and then using the out-of-sample policy in (\ref{eq:oop}) to treat individual $n+1$ yields
\begin{align*}
\E(r_n)
\leq
CK\min\del[3]{\sum_{i=1}^K\frac{\frac{\mathcal{K}^2}{\Delta_i}\overline{\log}(\frac{n\Delta_i^2}{\mathcal{K}^2})+\Delta_i\bar{m}}{n}, \sqrt{\frac{\mathcal{K}^2\overline{\log}(\frac{n}{\mathcal{K}^2})}{n}}+\frac{K\bar{m}}{n}}
\end{align*}
for a universal constant $C>0$.
\end{theorem}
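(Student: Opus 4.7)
The aim is to bound $\E(r_n)=\sum_{i=1}^K\Delta_i\,\mathbb{P}(\bar i=i)$ by controlling, for every suboptimal treatment $i$, the probability that the out-of-sample policy selects it. The pigeonhole observation is central: $\bar i$ is a most-frequently-assigned treatment among $K+1$ alternatives whose counts $T_j(n)$ sum to $n$, so the event $\{\bar i=i\}$ forces $T_i(n)\ge n/(K+1)$. Hence $\mathbb{P}(\bar i=i)\le \mathbb{P}(T_i(n)\ge n/(K+1))$, and the problem reduces to an upper-tail bound for $T_i(n)$ at a level which, by the hypothesis on $n$, lies well above its mean.

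\textbf{The $\Delta_i$-adaptive bound.} I combine Markov's inequality with Theorem~\ref{Thm3.2} to obtain
\begin{align*}
\mathbb{P}(T_i(n)\ge n/(K+1))\le\frac{(K+1)\,\E[T_i(n)]}{n}\le\frac{C(K+1)}{n}\left(\frac{\mathcal{K}^2 K\,\overline{\log}(n/\mathcal{K}^2)}{\Delta_i^2}+K\overline{m}+\mathcal{K}^2\right).
\end{align*}
Multiplying by $\Delta_i$, summing over $i$, and discarding sub-leading terms yields an expression of the form $CK/n\sum_i\left[\mathcal{K}^2\overline{\log}(\cdot)/\Delta_i+\Delta_i\overline{m}\right]$. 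The sharper $\overline{\log}(n\Delta_i^2/\mathcal{K}^2)$ claimed in the theorem comes from reopening the concentration step used inside the proof of Theorem~\ref{Thm3.2}: at sample size $\underline B(b)\asymp \mathcal{K}^2\overline{\log}(n\Delta_i^2/\mathcal{K}^2)/\Delta_i^2$ the elimination threshold drops below $\Delta_i/2$, and the hypothesis $n\ge(K+1)[c\mathcal{K}^2\overline{\log}(n\Delta_i^2/(4608\mathcal{K}^2))/\Delta_i^2+\overline{m}]$ ensures this cutoff is reached strictly before treatment $i$ could have accumulated $n/(K+1)$ assignments. The dichotomy $c=18$ versus $c=4608$ stems from whether the concentration step needs to handle only the sample mean or also the sample variance, which parallels the two regimes of the Lipschitz constant $\mathcal{K}$ in the examples.

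\textbf{The uniform bound.} For the second term in the minimum I apply the standard peeling-by-gap trick. Fix $\tau>0$ and split $\{1,\dots,K\}=\mathcal{S}\cup\mathcal{L}$ where $\mathcal{S}=\{i:\Delta_i\le\tau\}$ and $\mathcal{L}=\{i:\Delta_i>\tau\}$. The small-gap contribution is trivially bounded by $\tau\sum_{i\in\mathcal{S}}\mathbb{P}(\bar i=i)\le\tau$, while the large-gap contribution can be bounded using the previous step and $\Delta_i^{-1}<\tau^{-1}$ by $CK^2\mathcal{K}^2\overline{\log}(n/\mathcal{K}^2)/(n\tau)+CK^2\overline{m}/n$. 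Optimising $\tau\asymp\sqrt{K\mathcal{K}^2\overline{\log}(n/\mathcal{K}^2)/n}$ delivers the stated rate $K\sqrt{\mathcal{K}^2\overline{\log}(n/\mathcal{K}^2)/n}+K^2\overline{m}/n$, which matches the second entry of the minimum once the factor $K$ is pulled outside.

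\textbf{Main obstacle.} The technically delicate point is not the pigeonhole reduction, which is essentially a one-line observation, but the upgrade of the logarithmic factor from $\overline{\log}(n/\mathcal{K}^2)$ (which a black-box application of Theorem~\ref{Thm3.2} yields) to the sharper $\overline{\log}(n\Delta_i^2/\mathcal{K}^2)$ that appears in the theorem. This forces one to use the internal concentration machinery of Theorem~\ref{Thm3.2} evaluated at the specific sample level $n/(K+1)$; the lower bound on $n$ imposed in the hypothesis is precisely what is needed for this sharper calibration to go through and for the failure event $\{T_i(n)\ge n/(K+1)\}$ to be identified, up to constants, with the failure of the elimination rule at its critical round.
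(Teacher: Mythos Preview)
Your proposal has the right architecture---the pigeonhole reduction $\{\bar i=i\}\subseteq\{T_i(n)\ge n/(K+1)\}$ and the gap-splitting for the uniform bound both match the paper---but the paper's route to the probability bound is more direct than the Markov-plus-reopening detour you sketch. The paper never invokes Markov's inequality. Instead, it observes that the hypothesis on $n$ makes $n/(K+1)$ exceed the critical count $\tau_i+\overline m$, so the event $\{T_i(n)\ge n/(K+1)\}$ is \emph{contained in} the event $\mathcal B_i^c$ (``treatment $i$ was not eliminated by batch $b_i$'') defined in the proof of Lemma~\ref{LipschitzBound}. Decomposing $\mathcal B_i^c\subseteq(\mathcal B_i^c\cap\mathcal A_i)\cup\mathcal A_i^c$ and importing the bounds $\mathbb P(\mathcal B_i^c\cap\mathcal A_i)\le C\tau_i/n$ from (\ref{eq:B_icAB_i-1}) and $\mathbb P(\mathcal A_i^c)\le CKl/n$ from (\ref{eq:Ac}) yields the $\overline{\log}(n\Delta_i^2/\mathcal K^2)$ factor immediately, with no need to refine a preliminary Markov estimate.

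What you describe in your final paragraph---identifying $\{T_i(n)\ge n/(K+1)\}$ with failure of elimination at the critical round---is precisely this event inclusion, so you do arrive at the paper's argument in the end. The Markov step and the language of ``reopening the concentration'' are an unnecessary intermediate layer; the event-inclusion route is cleaner and avoids the extra $K$ that the black-box Markov bound would pick up from Theorem~\ref{Thm3.2}. The uniform bound via thresholding at $\Delta$ and optimising is exactly what the paper does.
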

The requirements on the constant $c$ can likely be refined but we focus on the rate on the upper bounds on $\E(r_N)$ here. As in Theorem \ref{NoCovRegret} the bound in Theorem \ref{thm:oos} consists of a distribution dependent and a uniform part (where the uniformity is over all $(K+1)$ tuples of distributions on $[0,1]$). Both parts witness that even though the sequential treatment policy assigns treatments in order to maximize the welfare of the $n$ treated individuals as opposed to maximizing the information by the end of $n$ treatments enough is learned to construct a policy that guarantees high welfare out of sample. This possibility is due to the fact that learning about the available treatments is inherent to maximizing the welfare over the treated individuals. In particular, both parts of the upper bound in Theorem \ref{thm:oos} tend to zero as $n\to\infty$.

\section{Treatment outcomes depending on covariates}\label{Sec:Cov}
So far we have considered the case where the outcome of a treatment does not depend on the characteristics of the individual it is assigned to. In reality, however, different persons react differently to the same type of treatment: while a certain medicine may work well for one person it may be outright dangerous to assign it to another person if this person is allergic to some of its substances. Similarly, the effect of further education on the probability of an unemployed individual finding a job may also depend on, e.g., the age of the individual: individuals close to the retirement age may benefit more from short courses updating their skill set while young individuals may benefit more from going back to school for an extended period of time.
%further education may increase the probability of young unemployed finding a job while the effect may be smaller for unemployed individuals close to the retirement age. The dynamic treatment policy can take individual specific characteristics into account as follows.

Prior to assigning individual $t$ to a treatment we observe a vector $X_t\in[0,1]^d$ of covariates with distribution $\mathbb{P}_X$. In the case of assigning unemployed persons to various unemployment programs $X_t$ could include age, length of education, and years of experience. It is merely for technical convenience that we assume the variables to take values in $[0,1]$ and the assumption of bounded support can be replaced by tail conditions on the distribution of $X_t$. $\mathbb{P}_X$ is assumed to be absolutely continuous with respect to the Lebesgue measure with density bounded from above by $\bar{c}>0$. This rules out discrete covariates which may be very relevant in practice. In Section \ref{Sec:disc} we shall show how policies with low regret in the presence of discrete covariates can be constructed. As we now observe covariates on each individual prior to the treatment assignment we condition on these as, for example, the risk of a treatment may be individual specific and depend on, e.g., whether the person has an allergy or not. Thus, in close analogy to the setting without covariates, we now define the conditional means and variances $\mu^{(i)}(X_t)=\mathbb{E}(Y^{(i)}|X_t)$ and $(\sigma^2)^{(i)}(X_t)=\mathbb{E}\sbr[1]{(Y^{(i)}-\mu^{(i)}(X_t))^2|X_t}$ as well as $f^{(i)}(X_t)=f(\mu^{(i)}(X_t), (\sigma^2)^{(i)}(X_t))$. As $\mu^{(i)}(X_t)$ and $(\sigma^2)^{(i)}(X_t)$ are unknown to the policy maker they must be gradually learned by experimentation. In the presence of covariates a policy $\pi=\cbr[0]{\pi_t}$ is a sequence of random functions $\pi_t: [0,1]^d\to \cbr[0]{1,...,K+1}$ where $\pi_t$ can only depend on treatment outcomes from previous batches. For any $X_t$, a social planner (oracle) who knows the conditional mean and variance functions and wishes to maximize welfare assigns the treatment\footnote{If there are several treatments achieveing the maximal welfare the oracle assigns any of these.}
\begin{align*}
\pi^\star(X_t)\in\argmax_{i=1,...,K+1} f^{(i)}(X_t)
\end{align*}
and receives $f^{\left(\pi^\star(X_t)\right)}(X_t)=\max_{i=1,...,K+1}f^{(i)}(X_t)=:f^{(\star)}(X_t)$. Thus, $f^{(\star)}(x)$ is the pointwise maximum of the $f^{(i)}(x)$, $i=1,...,K+1$. The goal of a treatment policy is to get as close to the oracle solution as possible in terms of welfare. The welfare loss (regret) of a policy $\pi$ compared to the oracle is 
\begin{align}
R_N(\pi)
=
\sum_{t=1}^N\del[1]{f^{\left(\pi^\star(X_t)\right)}(X_t)-f^{\left(\pi_t(X_t)\right)}(X_t)}
=
\sum_{t=1}^N\del[1]{f^{(\star)}(X_t)-f^{\left(\pi_t(X_t)\right)}(X_t)}\label{eq:regret}
\end{align}
It is important to note the difference between equation (\ref{nocovregret}) and (\ref{eq:regret}). While (\ref{nocovregret}) considers the difference between unconditional moments (\ref{eq:regret}) considers the difference between conditional moments. The latter is more ambitious as we consider each individual separately through $X_t$ and seek to minimize the distance to the treatment that would have been optimal for this specific person (with covariates $X_t$). On the other hand, in the setting without covariates, we only seek to get as close to the outcome of the treatment that is best on average.

In order to prove upper bounds on the regret we restrict the $\mu^{(i)}(X_t)$ and $(\sigma^2)^{(i)}(X_t)$ to be reasonably smooth. This is a sensible property to impose since individuals with similar characteristics can be expected to react similarly to the same treatment. In particular, we assume that $\mu^{(i)}(X_t)$ and $\sigma^{(i)}(X_t)$ are $(\beta,L)-$H{\"o}lder continuous. To be precise, letting $\enVert[0]{\cdot}$ denote the Euclidean norm on $[0,1]^d$, we assume that $\mu^{(i)},\ (\sigma^2)^{(i)}\in \mathcal{H}(\beta, L)$ for all $i=1,...,K+1$, where $\mathcal{H}(\beta, L)$ is characterised by being those $g:[0,1]^d\to[0,1]$ such that there exist $\beta\in(0,1]$ and $L>0$ such that
\begin{align*}
\envert[1]{g(x)-g(y)}\leq L\enVert[0]{x-y}^\beta \qquad \text{for all }x,y\in [0,1]^d.
\end{align*}

\subsection{Grouping individuals}\label{Subsec:Group}
In the presence of covariates the idea of the sequential treatment policy is to group individuals into groups according to the values of the covariates. Thus, we define a partition of $[0,1]^d$ which consists of Borel measurable sets $B_1,...,B_F$, called groups/bins, such that $\mathbb{P}_X(B_j)>0$, $\cup_{j=1}^FB_j=[0,1]^d$, and $B_j\cap B_k=\emptyset$ for $j\neq k$. The policy maker groups individuals according to the value of their covariates and seeks to treat each group in a welfare maximizing way. However, the policy maker may be constrained by political or ethical considerations in his choice of grouping individuals. For example, a realistic unemployment policy cannot group individuals into overly many groups and the rules determining which group an individual belongs to cannot be too complicated. Most realistic policies would choose the groups in such a way that individuals with similar characteristics belong to the same group as it can be expected that the same policy is best for similar individuals. Figure \ref{FigBin} illustrates various ways of grouping individuals. %The idea is now to run our treatment policy without covariates on each bin separately. 

\begin{figure}
\begin{center}
\includegraphics[height=4cm, width=4cm]{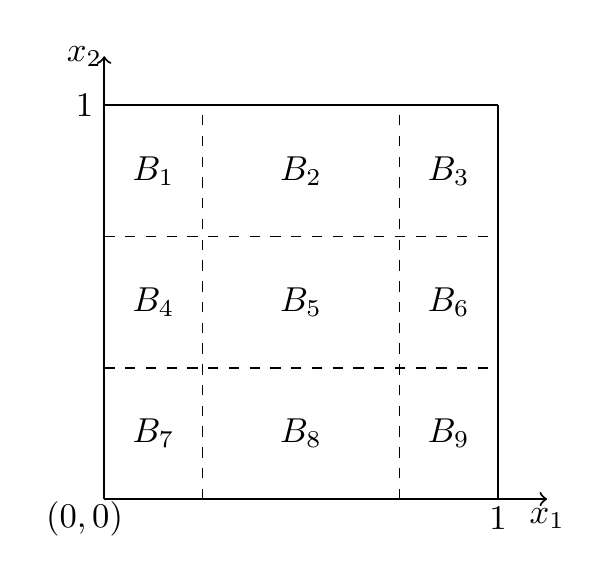}
\includegraphics[height=4cm, width=4cm]{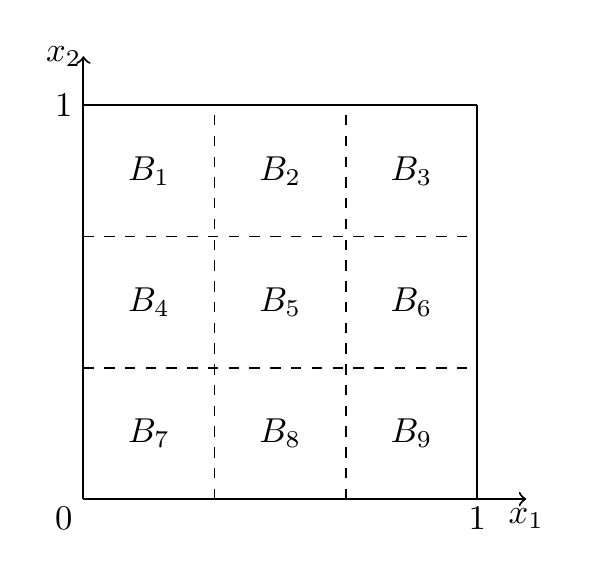}
\includegraphics[height=4cm, width=4cm]{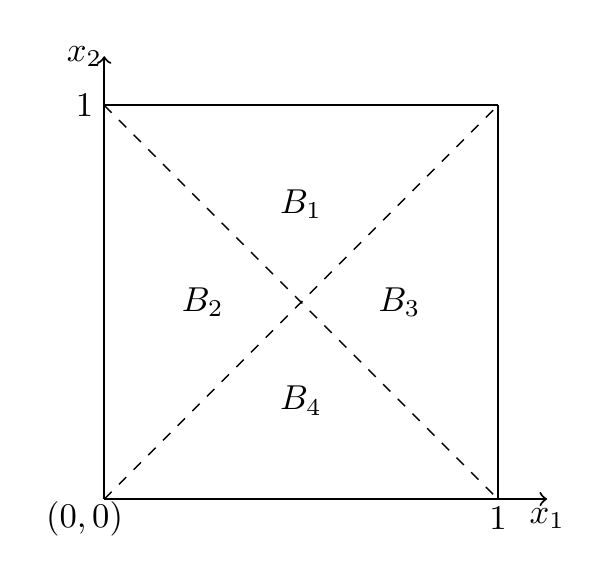}
\includegraphics[height=4cm, width=4cm]{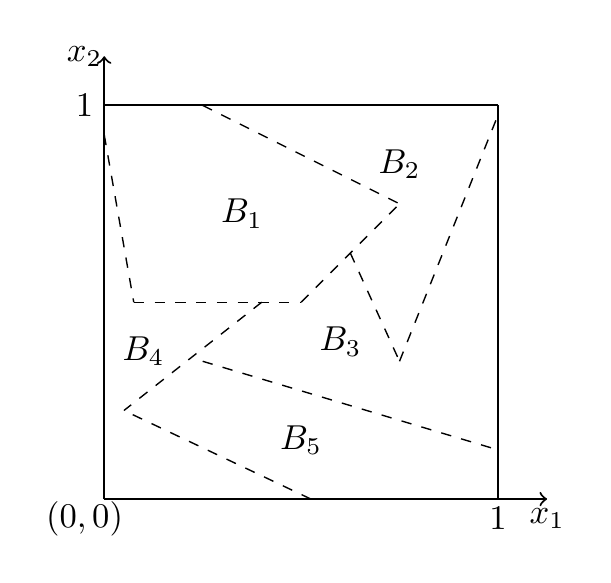}
\caption{\small Four examples of partitioning $[0,1]^d$ for $d=2$. The two leftmost ways of grouping individuals correspond to simple rules where group membership is detmermined by checking whether $x_1$ and $x_2$ are above or below certain values. The third rule corresponds to the intersection of two linear eligibility scores $a_i+b'_ix\geq c_i,\ i=1,2$. The fourth grouping, though not very practically applicable, serves to illustrate that in principle our theory allows for very general ways of grouping individuals.}
\label{FigBin}
\end{center}
\end{figure}
For any group $B_j$ define
\begin{align*}
\bar{\mu}^{(i)}_j=\mathbb{E}(Y_{t}^{(i)}|X_t\in B_j)=\frac{1}{\mathbb{P}_X(B_j)}\int_{B_j}\mu^{(i)}(x)d\mathbb{P}_X(x)
\end{align*}
and
\begin{align*}
(\bar{\sigma}^2)^{(i)}_j=Var(Y_t^{(i)}|X_t\in B_j)=\mathbb{E}({Y_t^{(i)}}^2|X_t\in B_j)-[\mathbb{E}(Y_t^{(i)}|X_t\in B_j)]^2
\end{align*}
as the mean and variance of $Y_{t}^{(i)}$ given that $X_t$ falls in $B_j$. We apply the sequential treatment policy without covariates separately to each group. To do so, define the groupwise counterpart of the welfare pertaining to treatment $i$ from the setting without covariates in Section \ref{Sec:NoCov} as $f_j^{(i)}=f(\bar{\mu}^{(i)}_j,(\bar{\sigma}^2)^{(i)}_j)$. As $\bar{\mu}^{(i)}_j$ and $(\bar{\sigma}^2)^{(i)}_j$ vary across groups one can target different optimal treatments for each group, $j=1,...,F$. We use the sequential treatment policy without covariates of Section \ref{Sec:NoCov} to target $\max_{1\leq i\leq K+1}f(\bar{\mu}^{(i)}_j,(\bar{\sigma}^2)^{(i)}_j)$ for each group. By the smoothness assumptions on $f, \mu^{(i)}(x)$ and $(\sigma^2)^{(i)}(x)$,  
$\max_{1\leq i\leq K+1}f(\bar{\mu}^{(i)}_j,(\bar{\sigma}^2)^{(i)}_j)$ will not be very far from the "fully individualized" target $f^{(\star)}(x)=\max_{1\leq i\leq K+1}f(\mu^{(i)}(x), (\sigma^2)^{(i)}(x))$ for any $x\in B_j$ as formalized in the appendix. At this stage one may ask  why one does not simply use a treatment policy which directly targets $\max_{1\leq i\leq K+1}f(\mu^{(i)}(x), (\sigma^2)^{(i)}(x))$. First, full individualization/discrimination is often not possible due to ethical or legislative constraints. Second, the regret bound obtained in Corollary \ref{Cor:SimpleBins} for the proposed policy is minimax rate optimal. Thus, even though for each group we target the policy which is best on average for that group nothing is lost (up to a multiplicative constant) even when we compare our performance to the fully individualized optimal policy. Third, a high degree of individualization is only useful for very large data sets as very small groups (in terms of the Lebesgue measure of the group) would otherwise imply very few individuals belonging to each group. This would result in only exploration being carried out for each group as no treatments can be eliminated based on very few assignments. We shall provide an example of how to optimally (in the sense of minimax regret) handle this tradeoff in Corollary \ref{Cor:SimpleBins} below. 

%Ideally, the policy maker would of course like a high degree of individualization meaning that only very similar persons are grouped together as this would result in $\max_{1\leq i\leq K+1}f(\bar{\mu}^{(i)}_j,(\bar{\sigma}^2)^{(i)}_j)$ being close to $\max_{1\leq i\leq K+1}f(\mu^{(i)}(x), (\sigma^2)^{(i)}(x))$ for any individual belonging to group $B_j$ (i.e. any individual with $x\in B_j$). However, such high individualization is onlyuseful for very large data sets as very small groups (in terms of the Lebesgue measure of the bin) would otherwise imply very few individuals belonging to each group. This would result in only  exploration being carried out for each group as the uncertainty about the best groupwise policy is large for a small number of individuals being treated. We shall provide an example of how to optimally (in the sense of minimax regret) handle this tradeoff in Corollary \ref{Cor:SimpleBins} below. 

Let $N_{B_j}(t)=\sum_{s=1}^t1_{\cbr[0]{X_s\in B_j}}$ denote the number of individuals who have been assigned to group $B_j$ when $t$ individuals have been treated. Furthermore, $\bar{B}_j=\lambda_d(B_j)$ denotes the Lebesgue measure of group $j$. Let $\hat{\pi}_{{B_j},N_{B_j}(t)}$ be the assignment made by the sequential treatment policy without covariates applied only to individuals who belong to group $B_j$. This policy is implemented with parameters $\gamma=\mathcal{K}L$ and $T=n\bar{B}_j$. The sequential treatment treatment policy $\bar{\pi}$ with covariates is then a sequence of mappings $\bar{\pi}_t:[0,1]^d\to\cbr[0]{1,...,K+1}$ where
\begin{align*}
\bar{\pi}_t(x)=\hat{\pi}_{B_j,N_{B_j}(t)},\ \qquad x\in B_j
\end{align*}
Thus, when $X_t\in B_j$, the sequential treatment policy with covariates makes the assignment dictated by the sequential treatment policy without covariates when applied only to individuals belonging to group $B_j$.

\subsection{Upper and lower bounds on regret}
Denote by $\mathcal{S}=\mathcal{S}(\beta, L, \mathcal{K},d, \bar{c}, \overline{m})$ a treatment problem where $f$ is Lipschitz continuous with constant $\mathcal{K}$, $X_t\in[0,1]^d$ has distribution $\mathbb{P}_X$ which is absolutely continuous with respect to the Lebesgue measure with density bounded from above by $\bar{c}>0$, maximal batch size $\overline{m}$ and $\mu^{(i)},\ (\sigma^2)^{(i)}\in \mathcal{H}(\beta, L)$ for all $i=1,...,K+1$. Unless stated otherwise we will consider problems in $\mathcal{S}$ in the sequel.

The performance of our policy depends critically on the way the policy maker chooses to group individuals. To characterize this grouping, define $V_j=\sup_{x,y\in B_j}\enVert[0]{x-y}$ as the maximal possible difference in the characteristics of any two individuals assigned to group $j$. The next result provides an upper bound on the regret compared to the infeasible oracle which knows $\mu^{(i)}(x)$ and $(\sigma^2)^{(i)}(x)$ and thus whose treatment is optimal for an individual with characteristics $x\in[0,1]^d$. 

\begin{theorem}\label{RegretBoundBinsLipschitzNoMargin}
Fix $\beta\in(0,1]$, $\mathcal{K},L>0$, $d\geq 2$ and consider a treatment problem in $\mathcal{S}$. Then, for a grouping characterized by $\cbr[0]{V_1,...,V_F}$ and $\cbr[0]{\bar{B}_1,...,\bar{B}_F}$, expected regret is bounded by
\begin{align}
\mathbb{E}\left[R_N(\bar{\pi})\right]
\leq 
C\sum_{j=1}^F\left[\sqrt{\overline{m}K\log(\overline{m}K)n\bar{B}_j}+n\bar{B}_jV_j^{\beta}\right]\label{part1}
\end{align}
for a positive constant $C$. In particular, (\ref{part1}) is valid uniformly over $\mathcal{S}$.
\end{theorem}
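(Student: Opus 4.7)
The plan is to reduce the problem to the no-covariates regret bound of Theorem \ref{NoCovRegret} applied inside each bin, paying a bias cost for treating all individuals within a bin as if they had the same conditional moments. I would begin by regrouping
\begin{align*}
R_N(\bar\pi) = \sum_{j=1}^F \sum_{t=1}^N \mathbf{1}_{\{X_t\in B_j\}}\left(f^{(\star)}(X_t)-f^{(\bar\pi_t(X_t))}(X_t)\right),
\end{align*}
and for each $j$ let $i_j^\star\in\argmax_i f_j^{(i)}$ be the treatment that maximizes the welfare evaluated at the bin-averaged moments. Inserting $\pm f_j^{(i_j^\star)}$ and $\pm f_j^{(\bar\pi_t(X_t))}$ splits the within-bin regret into (i) a deterministic \emph{bias} term comparing pointwise welfare to bin-averaged welfare, and (ii) the \emph{within-bin regret} of $\hat\pi_{B_j,\cdot}$ against the bin-averaged optimum $f_j^{(i_j^\star)}$.

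For the bias I would combine the Hölder assumption on $\mu^{(i)}$, $(\sigma^2)^{(i)}$ with the Lipschitz assumption on $f$: for any $x\in B_j$ and any $i$,
\begin{align*}
\left|f^{(i)}(x)-f_j^{(i)}\right| \leq \mathcal{K}\left(|\mu^{(i)}(x)-\bar\mu_j^{(i)}|+|(\sigma^2)^{(i)}(x)-(\bar\sigma^2)_j^{(i)}|\right) \leq 2\mathcal{K} L V_j^{\beta},
\end{align*}
since $\|x-y\|\leq V_j$ for any $y\in B_j$ over which we average. Using this twice (once for $\pi^\star(X_t)$ via $f^{(\star)}(X_t)\leq f_j^{(i_j^\star)}+2\mathcal{K}L V_j^\beta$, and once for $\bar\pi_t(X_t)$) gives a per-individual bias of at most $4\mathcal{K}L V_j^\beta$ in bin $j$. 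Summing and taking expectations with $\E N_{B_j}(N)=n\mathbb{P}_X(B_j)\leq n\bar c\bar B_j$ produces the $n\bar B_j V_j^\beta$ term after absorbing $\mathcal{K},L,\bar c$ into $C$.

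For the within-bin regret I would condition on the sigma-algebra generated by the covariate sequence and the batch assignments. Restricted to individuals with $X_t\in B_j$, the outcomes are i.i.d.\ draws with unconditional means $\bar\mu_j^{(i)}$ and variances $(\bar\sigma^2)_j^{(i)}$, and within-bin batches have sizes at most $\overline m$ because each is a subset of an original batch. Thus $\hat\pi_{B_j,\cdot}$ is exactly the no-covariates policy of Section \ref{Sec:NoCov} run on $N_{B_j}(N)$ observations with parameters $\gamma=\mathcal{K}L$ (which serves as the effective Lipschitz constant for $f$ composed with the moment estimators in this bin) and $T=n\bar B_j$. Invoking the uniform (second) term of Theorem \ref{NoCovRegret} conditionally and then taking expectations yields a contribution of order $\sqrt{n\bar B_j\,\overline m K\log(\overline m K)}$, again folding constants depending on $\mathcal{K},L$ into $C$. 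Summing over $j$ gives the stated bound, which is uniform over $\mathcal{S}$ because the uniform part of Theorem \ref{NoCovRegret} is distribution-free.

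The main obstacle is justifying that Theorem \ref{NoCovRegret} applies cleanly with $T=n\bar B_j$ in place of the true expected within-bin sample size $n\mathbb{P}_X(B_j)$, and with a random within-bin sample size rather than a fixed expectation. For the parameter choice, a larger $T$ only makes the elimination rule more conservative, so the uniform concentration bounds underlying Theorem \ref{NoCovRegret} still go through and we lose at most a factor of $\sqrt{\bar c}$ in the constant. For the randomness of $N_{B_j}(N)$, conditioning on the bin memberships makes the within-bin sample size deterministic, so Theorem \ref{NoCovRegret} can be invoked conditionally, and the outer expectation over bin memberships is then handled by $\E N_{B_j}(N)\leq n\bar c\bar B_j$ together with Jensen's inequality to move the expectation inside the $\sqrt{\cdot}$ for the uniform term.
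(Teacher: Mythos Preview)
Your approach is the same as the paper's: decompose regret by bins, split each bin's contribution into a bias term (Hölder smoothness plus Lipschitz $f$) and a within-bin no-covariates regret. One step, however, does not go through as written.

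Your bias bound implicitly treats $(\bar\sigma^2)_j^{(i)}$ as the bin-average of $(\sigma^2)^{(i)}(\cdot)$, which it is not: by the law of total variance,
\[
(\bar\sigma^2)_j^{(i)}=Var\big(Y^{(i)}\mid X\in B_j\big)=\E\!\big[(\sigma^2)^{(i)}(X)\mid X\in B_j\big]+Var\!\big(\mu^{(i)}(X)\mid X\in B_j\big),
\]
and the second summand need not vanish. Hence the inequality $|(\sigma^2)^{(i)}(x)-(\bar\sigma^2)_j^{(i)}|\leq LV_j^\beta$ ``by averaging'' is unjustified. The paper repairs this by passing through the raw second moment: it shows $\mu_2^{(i)}(x)=\E[(Y^{(i)})^2\mid X=x]$ is $(3L,\beta)$-H\"older (because $(\sigma^2)^{(i)}$ and $(\mu^{(i)})^2$ are), then bounds $|(\sigma^2)^{(i)}(x)-(\bar\sigma^2)_j^{(i)}|\leq |\mu_2^{(i)}(x)-\E[\mu_2^{(i)}(X)\mid X\in B_j]|+|(\mu^{(i)}(x))^2-(\bar\mu_j^{(i)})^2|\leq 5LV_j^\beta$. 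So your conclusion survives with a larger constant, but the argument as stated is wrong.

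A smaller point on the within-bin regret: the paper does not invoke Theorem \ref{NoCovRegret} (which hard-codes $T=n$) but the underlying Lemma \ref{LipschitzBound}, where $T$ is a free parameter, applied with $T=n\bar B_j$, $\gamma=\mathcal{K}L$, and $\Delta=\sqrt{\overline m K\log(\overline m K)/(n\bar B_j)}$. Since the lemma's bound carries the factor $(1+n_j/T)$ with $n_j=\E[N_{B_j}(N)]\leq \bar c\, n\bar B_j$, the mismatch you worry about is absorbed into $\bar c$; your conservativeness heuristic is not needed.
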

Theorem \ref{RegretBoundBinsLipschitzNoMargin} provides an upper bound on the regret of the sequential treatment policy for \textit{any} type of grouping of individuals that the policy maker may choose. Allowing for groups with arbitrary characteristics is useful since the policy maker may be constrained in such a way that choosing the groups such that the right hand side of (\ref{part1}) is minimized over groups is not possible. The size of the regret depends on the characteristics $\bar{B}_j$ and $V_j$ of the grouping. Note that the upper bound on the regret is increasing in these two quantities. However, choosing the groups such that $\bar{B}_j$ and $V_j$ are small implies that the number of groups, $F$, must be large. In general the upper bound in (\ref{part1}) cannot be improved since by choosing the groups as in Corollary \ref{Cor:SimpleBins} below one achieves the minimax rate of regret. We elaborate further on this below.

The first part of the upper bound in (\ref{part1}) is the regret accumulated from implementing the sequential treatment policy without covariates on each group separately targeting $\max_{1\leq i\leq K+1} f(\bar{\mu}^{(i)}_j,(\bar{\sigma}^2)^{(i)}_j)$ for group $j=1,...,F$. %Notice the similarity to the second part of Theorem \ref{NoCovRegret} once the Lipschitz constant $\mathcal{K}$ in that result is merged into $C$ and $n=\E(N)$ is replaced by $n\bar{B}_j$ which up to the constant $\bar{c}$  bounds the expected number of observations falling in bin $j$ from above. This similarity is an implication of using the sequential treatment policy on each group separately.
 The second part of the bound in (\ref{part1}) is the approximation error resulting from targeting $\max_{1\leq i\leq K+1}f(\bar{\mu}^{(i)}_j,(\bar{\sigma}^2)^{(i)}_j)$ instead of $\max_{1\leq i\leq K+1}f(\mu^{(i)}(x), (\sigma^2)^{(i)}(x))$. Clearly, the larger the groups are chosen (as measured by $\bar{B}_j$ and $V_j$) the more dissimilar could the treatment that is best for the average individual of the group be from the treatment which is best for any given individual in the group.

A particular type of groups are the square ones which use hard thresholds for each entry of $X_t$ to create hypercubes that partition $[0,1]^d$. These are particularly relevant in practice due to their simplicity and an example of these bins is given in the second display of Figure \ref{FigBin}. More precisely, fix $P\in \mathbb{N}$ and define
\begin{align}
B_k=\cbr[2]{x\in\mathcal{X}:\frac{k_l-1}{P}\leq x_l\leq \frac{k_l}{P},\ l=1,...,d} \label{SquareBins}
\end{align}
for $k=(k_1,...,k_d)\in\{1,...,P\}^d$. Thus, $P$ is the number of splits along each dimension of $X_t$ creating a partition of $P^d$ smaller hypercubes $B_1,...,B_{P^d}$ with side lengths $1/P$. %The next result is a special case of Theorem \ref{RegretBoundBinsLipschitzNoMargin} and provides an upper bound on regret of the dynamic treatment policy for a particular number of splits along each dimension of $X_t$.
\begin{corollary}\label{Cor:SimpleBins}
Fix $\beta\in(0,1]$, $\mathcal{K},L>0$, $d\geq 2$ and consider a treatment problem in $\mathcal{S}$. Set $P=\lfloor\left(\frac{n}{\overline{m}K\log(\overline{m}K)}\right)^{1/(2\beta+d)}\rfloor$. Then, expected regret is bounded by
\begin{align}
\mathbb{E}\left[R_N(\bar{\pi})\right]\leq C n\left(\frac{\overline{m}K\log(\overline{m}K)}{n}\right)^\frac{\beta}{2\beta+d}\label{part2}.
\end{align}
for a positive constant $C$. In particular, (\ref{part2}) is valid uniformly over $\mathcal{S}$.
\end{corollary}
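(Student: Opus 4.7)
The plan is to specialize the bound of Theorem~\ref{RegretBoundBinsLipschitzNoMargin} to the hypercube partition defined in (\ref{SquareBins}) and then choose $P$ to balance the two competing terms on the right-hand side of (\ref{part1}).

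First I would read off the geometry of the partition. Since each $B_k$ is a cube of side length $1/P$, its Lebesgue measure is $\bar{B}_k = P^{-d}$ and its Euclidean diameter is $V_k = \sqrt{d}/P$, while the total number of bins is $F = P^d$. Substituting these quantities into (\ref{part1}) and noting that every summand is now identical yields
\begin{align*}
\mathbb{E}\bigl[R_N(\bar{\pi})\bigr]
\leq
C\Bigl[P^{d/2}\sqrt{\overline{m}K\log(\overline{m}K)\,n}\;+\; d^{\beta/2}\,n\,P^{-\beta}\Bigr],
\end{align*}
where the dimension-dependent factor $d^{\beta/2}$ will be absorbed into the universal constant below.

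Second, I would optimize in $P$. Setting the two terms equal, $P^{d/2}\sqrt{\overline{m}K\log(\overline{m}K)\,n} = n P^{-\beta}$, and solving gives $P \asymp \bigl(n/(\overline{m}K\log(\overline{m}K))\bigr)^{1/(2\beta+d)}$, which is exactly the prescription in the statement. Plugging this $P$ back into either term and simplifying the exponents shows that each is of order $n\bigl(\overline{m}K\log(\overline{m}K)/n\bigr)^{\beta/(2\beta+d)}$, which delivers (\ref{part2}).

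There is no real obstacle here; only two small technicalities deserve attention. The floor in the definition of $P$ changes the bound by at most a multiplicative constant whenever $P \geq 1$, and in the opposite corner case $P = 0$ one has $n < \overline{m}K\log(\overline{m}K)$, so $n\bigl(\overline{m}K\log(\overline{m}K)/n\bigr)^{\beta/(2\beta+d)} \geq n$ and the bound is trivial because each summand of $R_N$ is at most $2\mathcal{K}$ by Lipschitz continuity of $f$ on $[0,1]^2$. Uniformity over $\mathcal{S}$ follows directly from the corresponding statement in Theorem~\ref{RegretBoundBinsLipschitzNoMargin}, since the chosen $P$ depends only on the parameters defining $\mathcal{S}$ and not on the unknown distribution of outcomes.
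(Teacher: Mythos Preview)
Your proposal is correct and follows essentially the same approach as the paper's own proof: both specialize Theorem~\ref{RegretBoundBinsLipschitzNoMargin} to the hypercube partition by plugging in $\bar{B}_j=P^{-d}$ and $V_j=\sqrt{d}P^{-1}$, then use the prescribed $P$ to balance the estimation and approximation terms. You have simply made explicit the balancing computation and the handling of the floor/degenerate cases that the paper leaves implicit.
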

Note that the larger the number of covariates $d$, the smaller will the number of splits $P$ in each dimension be as it must be ensured that enough observations fall in each group. The larger the number of potential treatments $K+1$ is the more experimentation will take place and hence the regret compared to the infeasible oracle policy increases. 

The bound in (\ref{part2}) is, as a function of $n$, optimal in a minimax sense and cannot be improved by more than multiplicative constants. To see this consider the the case of $\overline{m}=1$ and $K=1$ (two treatments are available) such that (\ref{part2}) reduces to $E\left[R_N(\bar{\pi})\right]\leq C n^{1-\frac{\beta}{2\beta+d}}$.

\begin{theorem}\label{CovMiniMax}
Let $\overline{m}=1$ and $K=1$. For any policy $\pi$
\begin{align*}
\sup_{\mathcal{S}}\mathbb{E}\left[R_N(\pi)\right]\geq Cn^{1-\frac{\beta}{2\beta+d}}
\end{align*}
for some positive constant $C$.
\end{theorem}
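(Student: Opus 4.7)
The plan is a standard two-armed nonparametric bandit lower bound via Assouad's lemma, adapted to our setting. Since the supremum is over $\mathcal{S}$, it suffices to exhibit a sub-family of hard instances. I fix the welfare function $f(\mu,\sigma^2)=\mu$ (which is Lipschitz with $\mathcal{K}=1$, so it is an allowable choice in $\mathcal{S}$), take $\mathbb{P}_X$ uniform on $[0,1]^d$ (density bounded by $\bar c=1$), and fix the noise of both arms to be Bernoulli with some common variance, so only the mean function carries the signal. This reduces the theorem to a lower bound for standard nonparametric two-armed bandits under H\"older-continuous conditional means.

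Next I construct the usual hypercube-and-bump family. Let $h=c_0 n^{-1/(2\beta+d)}$, let $B_1,\dots,B_M$ with $M=\lfloor h^{-1}\rfloor^d$ be the axis-aligned hypercubes of side length $h$ partitioning (a sub-cube of) $[0,1]^d$, and let $\psi:\mathbb{R}^d\to[0,1]$ be a fixed non-zero $C^\infty$ bump supported in the unit ball with $\|\psi\|_{\mathcal{H}(\beta,1)}\le 1$. For each $\epsilon\in\{-1,+1\}^M$ define an instance $P_\epsilon\in\mathcal{S}$ by
\begin{align*}
\mu^{(1)}(x)\equiv 1/2,\qquad \mu^{(2)}(x)=1/2+\sum_{k=1}^M \epsilon_k L\,h^\beta\,\psi\!\left(\tfrac{x-x_k}{h}\right),
\end{align*}
where $x_k$ is the center of $B_k$. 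A routine check shows $\mu^{(2)}\in\mathcal{H}(\beta,L)$ provided $c_0$ is small enough (the scaling is exactly the one that makes the H\"older norm uniformly bounded by $L$); the variance functions are constant, and the ranges stay inside $[0,1]$. Under $P_\epsilon$ the optimal arm on $B_k$ is arm $2$ if $\epsilon_k=+1$ and arm $1$ if $\epsilon_k=-1$, and the regret incurred by the wrong assignment at any $X_t\in B_k$ is exactly $\phi_k(X_t):=L h^\beta\psi((X_t-x_k)/h)$.

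I then decompose the regret over cubes. Writing $T_k^{(2)}(n)$ for the number of individuals with $X_t\in B_k$ to whom arm $2$ was assigned and $N_k=\sum_t\mathbf{1}\{X_t\in B_k\}$,
\begin{align*}
R_n(\pi)\;\geq\;c_1 L h^\beta\sum_{k=1}^M\Bigl[\mathbf{1}\{\epsilon_k=+1\}(N_k-T_k^{(2)})+\mathbf{1}\{\epsilon_k=-1\}T_k^{(2)}\Bigr]
\end{align*}
after absorbing $\int_{B_k}\psi$ into the constant $c_1>0$. This is exactly the Assouad form: for the coordinate-wise decision ``is $\epsilon_k=+1$?'' the $L^1$ loss is the bracket above. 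Averaging over $\epsilon$ uniform on $\{-1,+1\}^M$ and applying Assouad's lemma, the expected regret is bounded below by
\begin{align*}
\tfrac{1}{2}c_1 L h^\beta\sum_{k=1}^M \mathbb{E}_\epsilon[N_k]\bigl(1-\|P_{\epsilon,+k}-P_{\epsilon,-k}\|_{TV}\bigr),
\end{align*}
where $P_{\epsilon,\pm k}$ are the laws of the $n$ observations under the two possible signs on $B_k$ holding all other $\epsilon_j$ fixed. By Pinsker, $\|P_{\epsilon,+k}-P_{\epsilon,-k}\|_{TV}^2\lesssim \mathrm{KL}(P_{\epsilon,+k},P_{\epsilon,-k})$, and a standard chain-rule computation bounds this KL by $\mathbb{E}[T_k^{(2)}]\cdot c_2 (Lh^\beta)^2\le n h^d\cdot c_2 L^2 h^{2\beta}=c_2 L^2 c_0^{2\beta+d}$, where I used $\mathbb{E}[T_k^{(2)}]\le\mathbb{E}[N_k]=n h^d$. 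Choosing $c_0$ small makes this KL at most a universal constant less than $1$, so the TV distance is bounded away from $1$ uniformly in $k$ and $\epsilon$. Using $\sum_k\mathbb{E}[N_k]=n$ I obtain
\begin{align*}
\sup_{\mathcal{S}}\mathbb{E}[R_n(\pi)]\;\geq\;c\, L h^\beta\cdot n\;=\;c\,L\,c_0^{\beta}\,n^{1-\beta/(2\beta+d)},
\end{align*}
which is the claimed bound.

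\textbf{Main obstacle.} The delicate part is choosing the bump scale so that (i) $\mu^{(2)}$ stays in $[0,1]$ and in $\mathcal{H}(\beta,L)$ uniformly over $\epsilon$, and simultaneously (ii) the per-cube KL is bounded by a constant strictly less than one so Assouad delivers a constant TV gap. Both constraints pin down $h\asymp n^{-1/(2\beta+d)}$ up to absolute constants and dictate the choice of $c_0$; once this calibration is done correctly, the rest is a bookkeeping exercise with Pinsker's inequality and the Assouad hypercube.
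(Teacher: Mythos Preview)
Your argument is essentially correct and follows the standard direct route to nonparametric bandit lower bounds via an Assouad hypercube of bump perturbations. Two small imprecisions are worth tightening: (i) with genuine Bernoulli rewards the conditional variance $\mu(1-\mu)$ is not constant, so to keep $(\sigma^2)^{(i)}$ trivially in $\mathcal H(\beta,L)$ use additive two-point noise instead, e.g.\ $Y^{(i)}=\mu^{(i)}(X)+\tfrac14\xi$ with $\xi$ a Rademacher variable, which also keeps $Y^{(i)}\in[0,1]$ once the bump height is at most $1/4$; (ii) the step ``absorbing $\int_{B_k}\psi$ into $c_1$'' is not an almost-sure inequality, since the policy sees $X_t$ before acting and you cannot factor $\mathbb E[\psi((X_t-x_k)/h)\mathbf 1\{\text{wrong}\}]$ --- the clean fix is to restrict the count to a central region $A_k\subset B_k$ on which $\psi\ge c_\psi>0$ and run Assouad with $N_k^A,T_k^{(2),A}$ in place of $N_k,T_k^{(2)}$. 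Neither point affects the rate.

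The paper takes a markedly different and much shorter path: it does not build the hard family at all. It observes that for every $\alpha>0$ the class $\mathcal S$ contains the subclass $\underline{\mathcal S}_\alpha\subset\mathcal S$ that in addition satisfies the margin condition with parameter $\alpha$, and then invokes the existing lower bound of Rigollet~(2010, Theorem~4.1), which gives $\sup_{\underline{\mathcal S}_\alpha}\mathbb E[R_N(\pi)]\ge C\,n^{1-\beta(1+\alpha)/(2\beta+d)}$ with $C$ independent of $\alpha$. Since $\sup_{\mathcal S}\ge\sup_{\underline{\mathcal S}_\alpha}$ for every $\alpha$, one then lets $\alpha\downarrow 0$: for each fixed $n$ choose $\alpha$ so small that $n^{-\beta\alpha/(2\beta+d)}>1/2$, yielding $\sup_{\mathcal S}\mathbb E[R_N(\pi)]\ge (C/2)\,n^{1-\beta/(2\beta+d)}$. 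The paper's approach buys brevity by outsourcing the construction; your direct Assouad argument is self-contained, makes the origin of the exponent $\beta/(2\beta+d)$ transparent, and is in fact precisely the machinery that underlies the cited Rigollet bound, specialised to the degenerate-margin case.
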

Theorem \ref{CovMiniMax} shows that up to multiplicative constants no treatment policy can have a lower maximal regret over $\mathcal{S}$ than the sequential treatment policy as any policy must incur a regret at least of the same order as the sequential treatment policy.

\subsection{Ethical considerations}
We next show that even in the presence of covariates the sequential treatment policy does not make many suboptimal assignments. Our first result is a consequence of Theorem \ref{Thm3.2}. On any bin $1\leq j\leq F$  the result bounds the number of times that a treatment $1\leq i\leq K+1$ which does not maximize $f(\bar{\mu}^{(i)}_j,(\bar{\sigma}^2)^{(i)}_j)$ is assigned. Let $T_{i,j}(N)$ be the number of times treatment $i$ is assigned on bin $j$ in the course of a total of $N$ assignments. Calling treatment $i$ \textit{suboptimal on bin $B_j$} if $\Delta_i:=f_j^{(*)}-f(\bar{\mu}^{(i)}_j,(\bar{\sigma}^2)^{(i)}_j)>0$ we have the following result.

\begin{theorem}\label{Ethic1}
Fix $\beta\in(0,1]$, $\mathcal{K},L>0$, $d\geq 2$ and consider a treatment problem in $\mathcal{S}$. Then, for group $B_j$ characterized by $V_j$ and $\bar{B}_j$,
\begin{align*}
\mathbb{E}\left[T_{i,j}(N)\right]
\leq
C\del[3]{\mathcal{K}^2K\frac{\log\left(\frac{n\bar{B}_j}{\mathcal{K}^2}\right)}{\Delta_i^2}+K\overline{m}+\mathcal{K}^2},
\end{align*}
for any treatment $i$ that is suboptimal on bin $B_j$ and a positive constant $C$.
\end{theorem}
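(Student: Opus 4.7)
The plan is to reduce Theorem \ref{Ethic1} to a single invocation of Theorem \ref{Thm3.2} applied to the bin sub-problem. By construction of the sequential treatment policy with covariates, the behavior on bin $B_j$ depends only on outcomes and assignments of individuals with $X_t \in B_j$. Fix $j$ and let $\mathcal{T}_j = \cbr[0]{t \leq N : X_t \in B_j}$. Conditionally on $\mathcal{T}_j$ and on $\cbr[0]{X_t : t \in \mathcal{T}_j}$, the outcome vectors $(Y_t^{(1)},\ldots,Y_t^{(K+1)})_{t \in \mathcal{T}_j}$ are i.i.d.\ with mean $\bar{\mu}_j^{(i)}$ and variance $(\bar{\sigma}^2)_j^{(i)}$; the relevant welfare gaps collapse to $\Delta_i = f_j^{(*)} - f_j^{(i)}$; and, since each global batch contains at most $\overline{m}$ individuals, at most $\overline{m}$ of them land in $B_j$. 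Thus on $B_j$ the policy is precisely the covariate-free sequential treatment policy of Section \ref{Sec:NoCov} run with parameters $\gamma = \mathcal{K}L$ and $T = n\bar{B}_j$, with random sample size $N_{B_j}(N)$ and per-batch cap $\overline{m}$.

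Next I would apply Theorem \ref{Thm3.2} to this sub-problem, with the theorem's ``$n$'' taken to be the policy's $T$ parameter $n\bar{B}_j$ (which is what enters the elimination threshold through $\overline{\log}(T/\underline{B}(b))$) and its ``$\mathcal{K}$'' taken to be $\mathcal{K}L$. This yields
$$
\mathbb{E}\sbr[1]{T_{i,j}(N) \mid \mathcal{T}_j, (X_t)_{t \in \mathcal{T}_j}} \leq C\, \del[3]{(\mathcal{K}L)^2 K\, \frac{\overline{\log}(n\bar{B}_j/(\mathcal{K}L)^2)}{\Delta_i^2} + K \overline{m} + (\mathcal{K}L)^2}
$$
uniformly in the realisation of $\mathcal{T}_j$. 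Because $L$ is a fixed parameter of the class $\mathcal{S}$, the factors of $L^2$ and the additive $-2\log L$ shift inside the logarithm are absorbed into $C$, producing exactly the bound displayed in Theorem \ref{Ethic1}. Taking iterated expectation over $\mathcal{T}_j$ (equivalently, over $X_1,\ldots,X_N$) preserves the inequality since the right-hand side is non-random.

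The main obstacle is verifying that the conditional sub-problem really is a legitimate instance of the framework of Section \ref{Sec:NoCov}, so that Theorem \ref{Thm3.2} can be invoked conditionally. Three items deserve care: (a) the inherited batches $\mathcal{T}_j \cap \cbr[0]{\text{global batch } b}$ retain the property that outcomes are observed only at the end of each batch, because the global batch defines the observation time and $\mathcal{T}_j$ is just a sub-selection; (b) the per-batch cap $\overline{m}$ is inherited, as noted above; (c) the expected sample size on $B_j$ is $n\mathbb{P}_X(B_j) \leq \bar{c}\,n\bar{B}_j$, which is not literally equal to the policy's $T = n\bar{B}_j$, but Theorem \ref{Thm3.2}'s bound depends on the expected sample size only through $T$ in the elimination threshold's logarithm, so this discrepancy (with $\bar{c}$ another fixed class constant) is immaterial. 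Once these three points are stated carefully, Theorem \ref{Ethic1} follows without any new concentration arguments.
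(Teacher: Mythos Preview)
Your approach matches the paper's own proof, which is a one-line reduction to Theorem \ref{Thm3.2} applied on each bin with $n$ replaced by the upper bound $\bar{c}\,n\bar{B}_j \geq \mathbb{E}[N_{B_j}(N)]$; your discussion of points (a)--(c) simply spells out why that reduction is legitimate. One small slip: you should condition only on the bin-membership indicators (equivalently, on $\mathcal{T}_j$), not additionally on $\{X_t : t \in \mathcal{T}_j\}$ --- conditioning on the actual covariate values makes the outcomes independent but with individual means $\mu^{(i)}(X_t)$ rather than the common mean $\bar{\mu}_j^{(i)}$, so the i.i.d.\ claim fails; since the policy on $B_j$ depends on $X_t$ only through $\{X_t \in B_j\}$, conditioning on bin membership alone is both sufficient and correct.
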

Theorem \ref{Ethic1} guarantees that any treatment whose combination of mean and variance over $B_j$ does not maximize $f$ will only rarely be assigned. In fact, the number of times a treatment that is suboptimal on bin $B_j$ is assigned only grows logarithmically in the expected number of individuals belonging to bin $B_j$. Notice the similarity to Theorem \ref{Thm3.2} where $n$ has now been replaced by $n\bar{B}_j$ which up to the constant $\bar{c}$ is an upper bound on the expected number of individuals falling in group $j$. %If treatment $i$ is suboptimal but only slightly so then $\Delta_i$ is small resulting in that treatment being assigned more often.

A potential shortcoming of Theorem \ref{Ethic1} is that the for each group $B_j$ the maximizer of $f(\bar{\mu}^{(i)}_j,(\bar{\sigma}^2)^{(i)}_j)$ depends on the way the policy maker has chosen $B_j$. A different way of assessing the number of suboptimal treatments assigned is to consider each person individually and check whether the optimal treatment was assigned or not to this person. We say that treatment $i$ is \textit{suboptimal for individual $t$} if $f^{(\star)}(X_t)> f^{(i)}(X_t)$.  Therefore, another way of declaring the fairness of a policy $\pi$ is to provide an upper bound on the number of individuals to whom a suboptimal treatment was assigned:
\begin{align*}
S_N(\pi)
=
\sum_{t=1}^N1_{\{f^{(\star)}(X_t)\neq f^{(\pi_t)}(X_t)\}}
\end{align*}
It is sensible that a nontrivial upper bound on $\mathbb{E}(S_N(\pi))$ (a bound less than $n$) can only be established if the best treatment is sufficiently much better than the second best --- otherwise these cannot be distinguished from each other. To formalize this notion let
\begin{align*}
f^{(\sharp)}(x)=
\left\{
	\begin{array}{ll}
		\max_{i=1,...,K+1}\{f^{(i)}(x):f^{(i)}(x)<f^{(\star)}(x)\}  & \mbox{if } \min_{i=1,...,K+1}f^{(i)}(x)<f^{(\star)}(x) \\
		f^{(\star)}(x) & \text{otherwise}
	\end{array}
\right.
\end{align*}
denote the value of the second best treatment for an individual with characteristics $x\in[0,1]^d$. %Note that unless $f^{(1)}(x)=...=f^{(K+1)}(x)$ we have that $f^\sharp(x)<f^\star(x)$. 

\begin{assumption}[Margin condition]
We say that the margin condition is satisfied with parameter $\alpha>0$ if there exists a constant $C>0$ and a $\delta_0\in (0,1)$ such that 
\begin{align*}
\mathbb{P}\del[1]{0<f^{(\star)}(X_t)-f^{(\sharp)}(X_t)<\delta}\leq C\delta^\alpha\qquad \forall \delta\in (0,\delta_0]
\end{align*}
\end{assumption}
The margin condition limits the probability that the best and the second best treatment are very close to each other. Larger values of $\alpha$ mean that it is easier to distinguish the best and second best treatment from each other. The margin condition has been used in the literature on statistical treatment rules by \cite{Kitagawa2015} to improve the rates of their empirical welfare maximization classifier. Before this, similar assumptions had been used in the literature on classification analysis, \cite{mammen1999smooth}, \cite{tsybakov2004optimal}. \cite{CovBandit} have used the margin condition in the context of bandits. The margin condition is satisfied if, for example, $f^{(\star)}(X_t)-f^{(\sharp)}(X_t)$ has a density with respect to the Lebesgue measure which is bounded from above by a constant $a>0$. In that case we may set $C=a$ and $\alpha=1$. We refer to \cite{Kitagawa2015} for more examples of when the margin condition is satisfied. 

%The following theorem provides an upper bound on $\mathbb{E}(S_N(\pi))$ in terms of $\mathbb{E}\left[R_N(\pi)\right]$ for any policy $\pi$.

\begin{theorem}\label{ISR}
Fix $\beta\in(0,1]$, $\mathcal{K},L>0$, $d\geq 2$ and consider a treatment problem in $\mathcal{S}$ which also satisfies the margin condition. Then for any policy $\pi$,
\begin{align}
\mathbb{E}(S_N(\pi))\leq C n^{\frac{1}{1+\alpha}}\mathbb{E}\left[R_N(\pi)\right]^{\frac{\alpha}{1+\alpha}}\label{ISR1} 
\end{align} 
for some positive constant $C$. Using the sequential treatment policy $\bar{\pi}$ and grouping individuals as in (\ref{SquareBins}) yields
\begin{align}
\mathbb{E}(S_N(\pi))\leq C n\left[\frac{\overline{m}K\log(\overline{m}K)}{n}\right]^{\frac{\alpha\beta}{\left(1+\alpha\right)\left(2\beta+d\right)}}.\label{ISR2}
\end{align} 
\end{theorem}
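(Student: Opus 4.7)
The argument has two clean stages: a peeling/optimization that converts any regret bound into a bound on the number of suboptimal assignments under the margin condition (yielding (\ref{ISR1})), and then insertion of the concrete regret bound from Corollary \ref{Cor:SimpleBins} (yielding (\ref{ISR2})).

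\emph{Step 1 (pointwise peeling inequality).} Fix a threshold $\delta>0$. For each $t$, I would first argue pointwise that
\begin{align*}
1_{\cbr[0]{f^{(\star)}(X_t)\neq f^{(\pi_t)}(X_t)}}
\le
1_{\cbr[0]{0<f^{(\star)}(X_t)-f^{(\sharp)}(X_t)<\delta}}
+\frac{f^{(\star)}(X_t)-f^{(\pi_t)}(X_t)}{\delta}.
\end{align*}
This is immediate once one observes that when $\pi_t$ is suboptimal at $X_t$ one has $f^{(\pi_t)}(X_t)\le f^{(\sharp)}(X_t)$, so on the event $\cbr[0]{f^{(\star)}(X_t)-f^{(\sharp)}(X_t)\ge\delta}$ the welfare loss $f^{(\star)}(X_t)-f^{(\pi_t)}(X_t)$ is at least $\delta$ and the ratio on the right dominates the indicator on the left.

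\emph{Step 2 (summing and taking expectations).} Summing Step 1 over $t=1,\dots,N$, recognising the second summand as $R_N(\pi)/\delta$, and using that the $X_t$ are i.i.d.\ and $N$ is independent of the treatment outcomes (so Wald's identity applies to the non-negative summands of the first term), I obtain
\begin{align*}
\E\sbr[1]{S_N(\pi)}\le n\,\mathbb{P}\del[1]{0<f^{(\star)}(X_1)-f^{(\sharp)}(X_1)<\delta}+\frac{\E\sbr[1]{R_N(\pi)}}{\delta}.
\end{align*}
For $\delta\in(0,\delta_0]$ the margin condition bounds the first probability by $C\delta^{\alpha}$, and hence $\E[S_N(\pi)]\le Cn\delta^{\alpha}+\E[R_N(\pi)]/\delta$. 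Optimising over $\delta$ gives the balance $\delta^{\star}\propto(\E[R_N(\pi)]/n)^{1/(1+\alpha)}$, and plugging this back yields (\ref{ISR1}) up to a multiplicative constant. If instead $\delta^{\star}>\delta_0$ the expected regret is forced to be of order $n$, so the right-hand side of (\ref{ISR1}) is already of order $n$ and dominates the trivial bound $\E[S_N(\pi)]\le\E[N]=n$ after adjusting the constant, handling this boundary case.

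\emph{Step 3 (plugging in Corollary \ref{Cor:SimpleBins}).} For the sequential treatment policy $\bar\pi$ with the square grouping in (\ref{SquareBins}) and $P$ chosen as in Corollary \ref{Cor:SimpleBins}, one has $\E[R_N(\bar\pi)]\le Cn(\overline mK\log(\overline mK)/n)^{\beta/(2\beta+d)}$. Substituting this into (\ref{ISR1}) and collecting powers of $n$ gives
\begin{align*}
\E\sbr[1]{S_N(\bar\pi)}\le C n^{\frac{1}{1+\alpha}}\sbr[3]{n\del[2]{\tfrac{\overline mK\log(\overline mK)}{n}}^{\frac{\beta}{2\beta+d}}}^{\frac{\alpha}{1+\alpha}}
=Cn\del[3]{\tfrac{\overline mK\log(\overline mK)}{n}}^{\frac{\alpha\beta}{(1+\alpha)(2\beta+d)}},
\end{align*}
which is (\ref{ISR2}).

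\emph{Main obstacle.} The substantive step is the peeling inequality together with the margin condition in Step 1–2, since this is what produces the exponent $\alpha/(1+\alpha)$; the only nuisance is the restriction $\delta\le\delta_0$ built into the margin condition, which is absorbed by the trivial fallback $S_N\le N$ in the regime where the unconstrained optimum $\delta^{\star}$ would lie above $\delta_0$. Step 3 is a mechanical substitution once (\ref{ISR1}) is in hand.
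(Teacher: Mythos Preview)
Your proposal is correct and follows essentially the same route as the paper. Both arguments derive the pointwise inequality linking the suboptimality indicator to the margin event and the per-round regret, take expectations, apply the margin condition, and then balance over $\delta$; the only cosmetic difference is that the paper writes the basic inequality as a lower bound on $R_N(\pi)$ and chooses $\delta\propto(\E[S_N(\pi)]/n)^{1/\alpha}$, whereas you write it as an upper bound on $S_N(\pi)$ and choose $\delta\propto(\E[R_N(\pi)]/n)^{1/(1+\alpha)}$, which are algebraically equivalent and yield the same exponent. Your explicit handling of the boundary case $\delta^\star>\delta_0$ via the trivial bound $S_N\le N$ is in fact slightly cleaner than the paper's version, and Step~3 is identical to the paper's.
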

(\ref{ISR1}) provides an upper bound on the expected number of times a policy $\pi$ assigns a treatment which is suboptimal for individual $t$. This is done in terms of the regret incurred by the policy. (\ref{ISR2}) considers the case of the sequential treatment policy with a particular group structure. Note that $\mathbb{E}(S_N(\pi))$ is guaranteed to grow only sublinearily in $n$. However, as $\alpha$ approaches 0, which amounts to relaxing the margin condition and making the best and second best treatments indistinguishable, the upper bound on $\mathbb{E}(S_N(\pi))$ becomes almost linear in $n$. %Finally, we remark that the bound in (\ref{ISR2}) can be improved slightly by also using the margin condition in the derivation of the upper bound on regret in Corollary \ref{Cor:SimpleBins}.     

\subsection{Exogenously given groups}
Sometimes the groups $B_1,...,B_F$ are dictated exogenously upon the policy maker and thus can not be chosen to maximize welfare as in the previous section. As a result we can no longer target the welfare from the fully individualized policy, $f^{(\star)}(x)$. In our context this means that we must find \textit{one} treatment which best suits all individuals in each of the prespecified groups. For individuals in group $B_j,\ j=1,...,F$ a candidate for the omnibus best treatment is $f_j^{(*)}=\argmax_{1\leq i\leq K+1} f(\bar{\mu}^{(i)}_j,(\bar{\sigma}^2)^{(i)}_j)$, i.e. the treatment maximizing the welfare of a person with average characteristics $\bar{\mu}^{(i)}_j$ and $(\bar{\sigma}^2)^{(i)}_j$. Recalling that $f_j^{(i)}=f(\bar{\mu}^{(i)}_j,(\bar{\sigma}^2)^{(i)}_j)$ and introducing the \textit{modified regret} $\tilde{R}_j(\bar{\pi})=\sum_{t=1}^{N_{B_j}(N)}\del[2]{f_j^{(*)}-f_j^{(\hat{\pi}_{B_j,t)})}}$ of group $B_j$ of the sequential treatment policy we seek to upper bound
\begin{align}
\tilde{R}_N(\bar{\pi})=\sum_{j=1}^F\tilde{R}_j(\bar{\pi})\label{modreg}.
\end{align}    
Note that (\ref{modreg}) differs from the regret in (\ref{eq:regret}) in that we no longer target the outcome of the fully individualized treatment. %Instead we now target the outcome of the treatment that is best for the average individual of group $j$. We do so because all individuals in group $j=1,...,F$ must be treated equally and so it no longer makes sense to target the fully individualized treatment. 
\begin{corollary}\label{RegretExogenousGroups}
Let $d\geq 2$ and consider a treatment problem where $f$ is Lipschitz continuous with constant $\mathcal{K}$, $X_t\in[0,1]^d$ has distribution $\mathbb{P}_X$ which is absolutely continuous with respect to the Lebesgue measure with density bounded from above by $\bar{c}>0$ and maximal batch size being $\overline{m}$. Then, for a grouping characterized by $\cbr[0]{V_1,...,V_F}$ and $\cbr[0]{\bar{B}_1,...,\bar{B}_F}$, one has
\begin{align}
\mathbb{E}\left[\tilde{R}_N(\bar{\pi})\right]\leq C\sum_{j=1}^F\left[\sqrt{\overline{m}K\log(\overline{m}K)n\bar{B}_j}\right]\label{part11}
\end{align}
for a positive constant $C$. In particular, (\ref{part11}) is valid uniformly over $\mathcal{S}$.
\end{corollary}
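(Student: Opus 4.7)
The modified regret (\ref{modreg}) already decomposes additively across the bins $B_1,\ldots,B_F$, so it suffices to bound $\E[\tilde R_j(\bar\pi)]$ for each $j$ separately and then sum. The plan is to argue that, restricted to the individuals whose covariate falls in $B_j$, the sequential treatment policy $\bar\pi$ coincides exactly with the no-covariate sequential treatment policy of Section~\ref{Sec:NoCov} applied to a $(K+1)$-armed sub-problem whose characteristics are fully determined by the conditional law of $Y_t$ given $X_t\in B_j$; Theorem~\ref{NoCovRegret} can then be invoked bin by bin.

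First I would verify the reduction. Since $(X_t,Y_t)_{t\geq 1}$ are i.i.d., the outcomes of individuals with $X_t\in B_j$ are i.i.d.\ draws from the conditional law $Y_t\mid X_t\in B_j$, under which treatment $i$ has mean $\bar\mu_j^{(i)}$, variance $(\bar\sigma^2)_j^{(i)}$, and welfare $f_j^{(i)}$, and the best treatment is precisely $f_j^{(*)}$. The random number $N_{B_j}(N)$ of individuals falling in $B_j$ is independent of the treatment outcomes and satisfies $\E[N_{B_j}(N)]=n\,\mathbb{P}_X(B_j)\leq n\bar c\,\bar B_j$ by the density bound. Each of the $M$ original batches contributes to $B_j$ a sub-batch of size at most $\overline m$, so the in-bin maximum batch size is also bounded by $\overline m$. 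Hence the hypotheses of Theorem~\ref{NoCovRegret} are met on $B_j$ with expected sample size $\leq n\bar c\,\bar B_j$, maximum batch size $\overline m$, Lipschitz constant $\mathcal K$, and with the chosen parameters $\gamma=\mathcal KL$ and $T=n\bar B_j$.

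Invoking the second (distribution-free) branch of the minimum in (\ref{corolLip2}), and absorbing the absolute constants $\mathcal K,L,\bar c$ into $C$, then yields
\begin{align*}
\E[\tilde R_j(\bar\pi)]\leq C\sqrt{\overline m K\log(\overline m K)\,n\bar B_j}
\end{align*}
for every $j$. Summing over $j=1,\ldots,F$ gives the claimed bound (\ref{part11}). The argument is uniform over $\mathcal S$ because the only properties of the problem used are the Lipschitz constant of $f$, the upper density bound $\bar c$, and the batch-size bound $\overline m$; in particular, H\"older smoothness of $\mu^{(i)}(\cdot)$ and $(\sigma^2)^{(i)}(\cdot)$ is not needed here.

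The only real technical subtlety is that $N_{B_j}(N)$ and the in-bin sub-batch sizes are random rather than deterministic. This is handled by the fact that $N_{B_j}(N)$ is a function of the exogenous covariates and of $N$, hence independent of the treatment outcomes, so the hypothesis of Theorem~\ref{NoCovRegret} that $N$ be independent of outcomes carries over with $n$ replaced by $\E[N_{B_j}(N)]$; and the in-bin sub-batches are trivially dominated by the original batches, so $m_j\leq\overline m$ is preserved. This is precisely the reason the approximation-error term $n\bar B_j V_j^{\beta}$ from Theorem~\ref{RegretBoundBinsLipschitzNoMargin} disappears in the exogenous-groups benchmark: we now target the within-bin optimum $f_j^{(*)}$ directly, so no bias is incurred from comparing to the fully individualized oracle $f^{(\star)}(x)$.
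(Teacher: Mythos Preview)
Your approach is essentially the same as the paper's: the paper's one-line proof simply points back to the proof of Theorem~\ref{RegretBoundBinsLipschitzNoMargin} and keeps only the contribution $\E[\tilde R_j(\bar\pi)]$ from (\ref{RegretTransformLipschitzNoMargin}), which was bounded there, bin by bin, via Lemma~\ref{LipschitzBound} with the choice $\Delta=\sqrt{\overline m K\log(\overline m K)/(n\bar B_j)}$, $\gamma=\mathcal K L$, $T=n\bar B_j$. Your write-up spells out the reduction to the no-covariate problem more explicitly, which is fine.

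One small technical slip: you invoke Theorem~\ref{NoCovRegret}, but that result is stated for the particular parameter choice $\gamma=\mathcal K$ and $T=n$, whereas the bin-level policy $\hat\pi_{B_j}$ is run with $\gamma=\mathcal K L$ and $T=n\bar B_j$. Strictly speaking, Theorem~\ref{NoCovRegret} does not apply as stated; the correct reference is the more general Lemma~\ref{LipschitzBound} (which only requires $\gamma\geq\mathcal K$ and arbitrary $T>0$), and that is precisely what the paper invokes through the proof of Theorem~\ref{RegretBoundBinsLipschitzNoMargin}. The resulting extra factors of $L$ and $\bar c$ are absorbed into $C$, so your conclusion is unaffected once the citation is corrected.
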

The upper bound on modified regret is identical to the one in Theorem \ref{RegretBoundBinsLipschitzNoMargin} except for the absence of the term $C\sum_{j=1}^F\bar{B}_jV_j^{\beta}$ which previously served as an upper bound on the approximation error $f^{(\star)}(x)-f_j^{(*)}$ for all $x\in B_j$. However, as the groups are now exogenously given, this approximation error is unavoidable and it no longer makes sense to target $f^{(\star)}(x)$ as we can no longer choose the characteristics of the groups $\bar{B}_j$ and $V_j^{\beta}$ such that $n\bar{B}_jV_j^{\beta}$ is small. Note also how we no longer need the Hölder continuity of  $\mu^{(i)}$ and $\sigma^{(i)}$ since there is no approximation error to control.%Thus, we instead target the policy which is best for the average individual of group $B_j$ (and yields welfare $f_j^*$). The approximation error vanishes since we get less ambitious due to exogenous constraints and adjust our target accordingly.

\subsection{Discrete covariates}\label{Sec:disc}
Until now we have assumed $\mathbb{P}_X$ to be absolutely continuos with respect to the Lebesgue measure. However, many covariates that may influence the identity of the optimal treatment are discrete. For example, gender may affect the outcome of an allocation in an unemployment program. Furthermore, we may not always observe a continuous variable perfectly as data might only be informative about which of finitely many wealth groups an individual belongs to without providing the exact, continuously scaled, wealth.

In order to accommodate discrete covariates, partition $X_t=(X_{t,D}',X_{t,C}')'$ where $X_{t,D}\in A=A_1\times...\times A_{d_D}$ contains the measurements of the $d_D$ discrete covariates. Each $A_{l}\subseteq \mathbb{N},\ l=1,...,d_D$ is finite with cardinality $|A_{l}|$. For the continuous covariates we assume $X_{t,C}\in [0,1]^{d_C}$ such that $X_t$ is $(d_D+d_C)$-dimensional. As in (\ref{eq:regret}) the regret of our treatment policy is measured against the infeasible target $f^{(\star)}(X_t)=\max_{1\leq i\leq K+1}f(\mu^{(i)}(X_t), (\sigma^2)^{(i)}(X_t))$. On the other hand, it does not make sense to assume $\mu^{(i)}(x)=\mu^{(i)}(x_D,x_C)$ or $(\sigma^2)^{(i)}(x)=(\sigma^2)^{(i)}(x_D,x_C)$ to be $(\beta,L)-$H{\"o}lder continuous in $x_D$. Thus, discrete covariates must be handled differently from continuous ones. Instead we shall now assume that for each \textit{fixed} $a\in A$ one has that $\mu_{a}^{(i)}(x_C):=\mu^{(i)}(a,x_C)$ and $(\sigma^2)_{a}^{(i)}(x_C):=(\sigma^2)^{(i)}(a,x_C)$ belong to $\mathcal{H}(\beta,L)$. Since $a$ can only take $F_D=|A|=|A_1|\cdot...\cdot|A_{d_D}|$ possible values it is without loss of generality to assume $\beta$ and $L$ not to depend on $a$. 

Our treatment policy now works by fully individualizing treatments across the discrete covariates. In other words, for any of the $F_D$ possible values of the vector of discrete covariates we implement the sequential treatment policy $\bar{\pi}$ by constructing groups only based on the continuous variables just as in Section \ref{Subsec:Group}. For each value of the discrete covariate we allow for different ways of grouping based on the continuous covariates. For example, one may want to construct different wealth groups for men and women in order to obtain, e.g., groups with equally many individuals. For each $a\in A$ let $B_{a,j},\ j=1,...,F_{a}$ be the partition of $[0,1]^{d_C}$ used. %Clearly, the partition need not depend on $a$ and one can, for example, use the square grouping in (\ref{SquareBins}) for all $a\in A$.  

Formally, for each $a\in A$, let $\bar{\pi}_{t,a}$ be the sequential treatment policy with continuous covariates applied to the grouping $B_{a,j},\ j=1,...,F_{a}$. Thus, the sequential treatment policy in the presence of discrete covariates, $\tilde{\pi}$, is a sequence of mappings $\tilde{\pi}_t: A_1\times...\times A_{d_D}\times [0,1]^{d_C}\to \cbr[0]{1,...,K+1}$ where
\begin{align*}
\tilde{\pi}_t(x)=\bar{\pi}_{t,a}(x_C)=\hat{\pi}_{(\cbr[0]{a}\times B_{a,j}),N_{a,j}(t)},\ \qquad x_D=a \text{ and } x_C\in B_{a,j}
\end{align*}          
with $N_{a,j}(t)=\sum_{s=1}^t1_{\left(X_{s,D}=a,X_{s,C}\in B_{a,j}\right)}$. Denote by $\tilde{\mathcal{S}}=\tilde{\mathcal{S}}(\beta, L, \mathcal{K},d_C, \bar{c},\overline{m})$ a treatment problem where $f$ is Lipschitz continuous with constant $\mathcal{K}$, $X_{t,D}\in A$ is discrete, $X_{C,t}\in[0,1]^d$ has distribution $\mathbb{P}_X$ which is absolutely continuous with respect to the Lebesgue measure with density bounded from above by $\bar{c}$, maximal batch size $\overline{m}$ and $\mu_{a}^{(i)},\ (\sigma^2)_{a}^{(i)}\in \mathcal{H}(\beta, L)$ for all $i=1,...,K+1$ and $a\in A$. Letting $V_{a,j}=\sup_{x,y\in B_{a,j}}||x-y||$ we have that $\tilde{\pi}$ enjoys the following upper bound on regret.
\begin{theorem}\label{RegretBound_DiscCov}
Fix $\beta\in(0,1]$, $\mathcal{K},L>0$, $d\geq 2$ and consider a treatment problem in $\tilde{\mathcal{S}}$. Then, if for for each $a\in A$ individuals are grouped as $\cbr[0]{B_{a,1},...,B_{a,F_a}}$, expected regret is bounded by
\begin{align}
\mathbb{E}\left[R_N(\tilde{\pi})\right]\leq
&C\sum_{a\in A}\sum_{j=1}^{F_a}\left(\sqrt{\overline{m}K\log(\overline{m}K)n\mathbb{P}(X_{t,D}=a,X_{t,C}\in B_{a,j})}\right.\nonumber\\
&+\left. \vphantom{\sqrt{\overline{m}K\log(\overline{m}K)n\mathbb{P}(X_{t,D}=a,X_{t,C}\in B_{a,j})}}n\mathbb{P}(X_{t,D}=a,X_{t,C}\in B_{a,j})V_{a,j}^{\beta}\right).\label{eq:DiscCov}
\end{align}
for a positive constant $C$. In particular, (\ref{eq:DiscCov}) is valid uniformly over $\tilde{\mathcal{S}}$.
\end{theorem}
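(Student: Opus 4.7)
The plan is to reduce the proof to $|A|$ separate applications of Theorem \ref{RegretBoundBinsLipschitzNoMargin}, one for each value $a\in A$ of the discrete covariate vector, and then to sum the resulting bounds. The key structural observation is that $\tilde{\pi}$ handles each value $a$ in complete isolation: since the counter $N_{a,j}(t)$ only tallies previously treated individuals with $X_{s,D}=a$ and $X_{s,C}\in B_{a,j}$, the assignment $\tilde{\pi}_t(x)$ for an individual with $X_{t,D}=a$ depends exclusively on the outcomes of earlier individuals with the same discrete value $a$. Accordingly, I would decompose
\begin{align*}
R_N(\tilde{\pi}) = \sum_{a\in A} R_N^{(a)}(\tilde{\pi}),\qquad R_N^{(a)}(\tilde{\pi}) := \sum_{t=1}^N 1_{\{X_{t,D}=a\}}\bigl(f^{(\star)}(X_t) - f^{(\tilde{\pi}_t(X_t))}(X_t)\bigr),
\end{align*}
and treat each $\mathbb{E}[R_N^{(a)}(\tilde{\pi})]$ separately.

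For each fixed $a\in A$, the sub-problem restricted to individuals with $X_{s,D}=a$ is exactly a continuous-covariate treatment problem of the type studied in Section \ref{Sec:Cov}. Indeed, the sub-sample is i.i.d.; its size $N_a=\sum_{s=1}^N 1_{\{X_{s,D}=a\}}$ is independent of the treatment outcomes with $\mathbb{E}(N_a)=n_a:=n\mathbb{P}(X_{s,D}=a)$; no batch contains more than $\bar{m}$ such individuals; conditional on $X_{s,D}=a$, the covariate $X_{s,C}$ is absolutely continuous on $[0,1]^{d_C}$; and $x_C\mapsto\mu_a^{(i)}(x_C)$, $x_C\mapsto(\sigma^2)_a^{(i)}(x_C)$ lie in $\mathcal{H}(\beta,L)$ by assumption. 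Moreover, $\bar{\pi}_{\cdot,a}$ applied to the grouping $\{B_{a,j}\}_{j=1}^{F_a}$ is, by construction, precisely the sequential treatment policy with continuous covariates acting on this sub-problem. Applying Theorem \ref{RegretBoundBinsLipschitzNoMargin} with $n_a$ in place of $n$, and using that the expected number of individuals falling in the joint group $\{a\}\times B_{a,j}$ equals $n\mathbb{P}(X_{t,D}=a,X_{t,C}\in B_{a,j})$, yields
\begin{align*}
\mathbb{E}[R_N^{(a)}(\tilde{\pi})] \leq C\sum_{j=1}^{F_a}\left(\sqrt{\bar{m}K\log(\bar{m}K)\,n\mathbb{P}(X_{t,D}=a,X_{t,C}\in B_{a,j})} + n\mathbb{P}(X_{t,D}=a,X_{t,C}\in B_{a,j})\,V_{a,j}^\beta\right).
\end{align*}
Summing this over $a\in A$ gives exactly (\ref{eq:DiscCov}).

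The main obstacle is largely bookkeeping, with two mild subtleties. First, one must confirm that Theorem \ref{RegretBoundBinsLipschitzNoMargin} remains applicable with the random sub-sample size $N_a$: because $N_a$ is a function of the covariates alone and hence independent of the treatment outcomes, and because that theorem uses $N$ only through its expectation, the same argument goes through verbatim with $n_a$ replacing $n$. Second, the conditional density of $X_{t,C}$ given $X_{t,D}=a$ is bounded by $\bar{c}/\mathbb{P}(X_{t,D}=a)$ rather than by $\bar{c}$; however, because the bound in the theorem is phrased directly in terms of the joint probabilities $\mathbb{P}(X_{t,D}=a,X_{t,C}\in B_{a,j})$ (and not in terms of $\lambda_{d_C}(B_{a,j})$), this factor is absorbed into the statement rather than inflating the universal constant. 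Uniformity over $\tilde{\mathcal{S}}$ then follows from the uniform version of Theorem \ref{RegretBoundBinsLipschitzNoMargin} together with the finiteness of $A$.
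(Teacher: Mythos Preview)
Your proposal is correct and follows essentially the same route as the paper. Both arguments decompose the regret to the cell level $(a,j)$, control the approximation error via the H\"older assumption on $x_C\mapsto \mu_a^{(i)}(x_C),(\sigma^2)_a^{(i)}(x_C)$, and then invoke the no-covariate bound (Lemma~\ref{LipschitzBound}) on each cell; the only organizational difference is that the paper inlines the proof of Theorem~\ref{RegretBoundBinsLipschitzNoMargin} cell by cell, whereas you quote that theorem once per value of $a$.

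One point worth making sharper in your write-up: Theorem~\ref{RegretBoundBinsLipschitzNoMargin} as \emph{stated} is in terms of $\bar{B}_j=\lambda_d(B_j)$, not probabilities, so a literal black-box application to the $a$-subproblem would produce a constant depending on the conditional density bound $\bar{c}/\mathbb{P}(X_{t,D}=a)$, hence on $\min_{a}\mathbb{P}(X_{t,D}=a)$, which is not uniform over $\tilde{\mathcal{S}}$. What you actually need---and what the paper's proof uses---is that the \emph{proof} of Theorem~\ref{RegretBoundBinsLipschitzNoMargin} already delivers the bound with $n\mathbb{P}_X(B_j)$ in place of $n\bar{B}_j$ (the passage to Lebesgue measure via $\bar c$ is only the very last step, see~(\ref{RegretTransformLipschitzNoMargin}) and the choice $T=n\bar{B}_j$ there). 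Invoking that probability version on the $a$-subproblem gives $n_a\,\mathbb{P}(X_{t,C}\in B_{a,j}\mid X_{t,D}=a)=n\,\mathbb{P}(X_{t,D}=a,X_{t,C}\in B_{a,j})$ on the nose, with a constant independent of $a$, which is exactly what (\ref{eq:DiscCov}) asserts. Your second ``subtlety'' paragraph gestures at this; stating it this way removes any ambiguity.
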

The upper bound on regret in (\ref{eq:DiscCov}) generalizes the upper bounds in Theorems \ref{NoCovRegret} (no covariates) and \ref{RegretBoundBinsLipschitzNoMargin} (continuous covariates only). For example, the latter follows from (\ref{eq:DiscCov}) by letting $|A|=1$ and using that $X_{t,C}$ is absolutely continuous with respect to the Lebesgue measure with density bounded from above by $\bar{c}$. Also, the case of purely discrete covariates is covered as a special case of (\ref{eq:DiscCov}). In that case the approximation error vanishes as $V_{a,j}=0$

\section{Treatment outcomes observed with delay}\label{Sec:Delay}
Oftentimes the outcome of a treatment is only observed with delay. For example, a medical doctor may choose not to measure the effect of a treatment immediately after it has been assigned as it takes time for the treatment to work. However, delaying the measurement for an extended period of time also implies that many new patients will arrive before the outcome of the previous treatment is known. Thus, the type of treatment assigned to these patients must be decided based on less information. Put differently, there is a tradeoff between getting imprecise information now and obtaining precise information later. A similar tradeoff exists when assigning unemployed to job training programs as it takes time to find a job. Therefore, it may not be advisable to measure the effect of a job training program very shortly after its termination. 

In this section we formalize this intuition by proposing the following model for treatments being observed with delay. For simplicity, we focus first on the setting without covariates. We can decompose $Y_{t}^{(i)}$ as
\begin{align*}
Y_t^{(i)}=\mu^{(i)}+\eta_t^{(i)}
\end{align*} 
where $\mathbb{E}(\eta_t^{(i)})=0$. Since $Y_t^{(i)}, \mu^{(i)}\in[0,1]$ it follows that $\eta_t^{(i)}=Y_t^{(i)}-\mu^{(i)}\in[-1,1]$. Thus, without further assumptions, the deviations of $Y_t^{(i)}$ around its mean are in $[-1,1]$. We shall model the idea of measurements becoming more precise if they are delayed by restricting this interval. To be precise, we assume that 
\begin{align}
\eta_t^{(i)}=Y_t^{(i)}-\mu^{(i)}\in[-\bar{a}_l,\bar{a}_u]\label{eq:delayint}
\end{align} 
where $\bar{a}_l,\bar{a}_u\in[0,1]$. In this section we let $\bar{a}(D)=\bar{a}_u(D)+\bar{a}_l(D)$ be a function of the number of batches $D$ the measurements are delayed by. Thus, if $\bar{a}(D)$ is a decreasing function, increasing the delay results in $Y_t^{(i)}$ being a less noisy measure of $\mu^{(i)}$. Restricting the support of $\eta_t^{(i)}$ is not the only way of modelling that measurements become more precise if they are delayed. One could also let the variance of the $\eta_t^{(i)}$ be a decreasing function of $D$. In fact, any assumption which implies stronger concentration of sample averages around the population means will suffice. As the welfare function $f$ also depends on the second moment $\mu_2^{(i)}=\mathbb{E}\sbr[1]{{Y_t^{(i)}}^2}$ and since ${Y_t^{(i)}}^2, \mu_2^{(i)}\in[0,1]$  we will model increased measurement precision of second moments due to delay as\footnote{Assuming the same lower and upper bounds in (\ref{eq:delayint}) and (\ref{eq:delayint2}) is without loss of generality as one can simply take the smallest of the lower bounds and the largest of the upper bounds as the common values.%The upper bound follows immediately from \ref{eq:delayint}, however, the lower bound does constitute a small strengthening of the assumption on the support of $\eta_t$. Alternatively, one can define constants $\bar{a}_l^{(2)},\bar{a}_u^{(2)}$ such that ${Y_t^{(i)}}^2-\mu_2^{(i)}\in[-\bar{a}_l^{(2)},\bar{a}_u^{(2)}]$ and impose the restriction that $\bar{a}_l^{(2)}+\bar{a}_l^{(2)}\leq \bar{a}$.
}
\begin{align}
{Y_t^{(i)}}^2-\mu_2^{(i)}\in[-\bar{a}_l,\bar{a}_u]\label{eq:delayint2}
\end{align} 

First, we establish an upper bound on regret of the sequential treatment policy when treatment outcomes are observed with delay in the absence of covariates. %The dynamic treatment policy is implemented as follows in this setting.

\vspace{0.5cm}

\textbf{Sequential treatment policy}
Denote by $\hat{\pi}$ the sequential treatment policy. Let $\mathcal{I}_b\subseteq\cbr[0]{1,...,K+1}$ be the set of remaining treatments before batch $b$ and let $\underline{B}(b)=\min_{i\in\mathcal{I}_b}B_i(b)$ be the number of outcomes that have been observed for each of the remaining treatments after batch $b$. 
\begin{enumerate}
\item In each batch $b=1,...,D-1$ we take turns assigning the treatments $\cbr[0]{1,...,K+1}$. No elimination takes place as no outcomes are observed.
\item In each batch $b=D,...,M$ we take turns assigning each remaining treatment (treatments in $\mathcal{I}_b$). %Thus, the difference between the number of times any pair of remaining treatments has been assigned at the end of a batch is at most one.
\item At the end of batch $b=D,...,M$ eliminate treatment $\tilde{i}\in\mathcal{I}_b$ if
\begin{align*}
\max_{i\in \mathcal{I}_b} f(\hat{\mu}^{(i)}_{\underline{B}(b)},(\hat{\sigma}^2_{\underline{B}(b)})^{(i)})-f(\hat{\mu}^{(\tilde{i})}_{\underline{B}(b)},(\hat{\sigma}^2_{\underline{B}(b)})^{(\tilde{i})})\geq 16\gamma\sqrt{\frac{2\bar{a}^2}{\underline{B}(b)}\overline{\log}\left(\frac{T}{\underline{B}(b)}\right)}
\end{align*}
where $\gamma>0,\ T\in\mathbb{N}$ and $\overline{\log}(x)=\log(x)\vee 1$.
\end{enumerate}

\vspace{0.5cm}

Notice how the sequential treatment policy in the presence of delay differs from the one without delay. First, no elimination takes place after the first $D-1$ batches as no treatment outcomes are observed after these. Second, the elimination rule has been slightly modified as we can now eliminate more aggressively if $\bar{a}$ is small, i.e. the treatment outcomes are less noisy measurements of the population parameters. 

\begin{theorem}[No covariates]\label{LipschitzBoundDelay}
Consider a treatment problem with $(K+1)$ treatments and an unknown number of assignments $N$ with expectation $n$ that is independent of the treatment outcomes. The treatment outcomes are observed with a delay of $D$ batches as outlined above. By implementing the sequential treatment policy with parameters $\gamma=\mathcal{K}$ and $T=n$ one obtains the following bound on the expected regret
\begin{align}
&\mathbb{E}\left[R_N(\hat{\pi})\right]\leq\notag\\
 &C \min\left(\mathcal{K}^2\bar{a}^2\sum_{i=1}^K\frac{1}{\Delta_i}\overline{\log}\left(\frac{n\Delta_i^2}{\bar{a}^2}\right)+\overline{m}(K+D),\sqrt{\mathcal{K}^3\bar{a}^3\overline{m}K\overline{\log}\left(\overline{m}K/\mathcal{K}\bar{a}\right)n}+\overline{m}(K+D)\right),
\end{align}
where $C$ is a positive constant.%The bounds are valid uniformly over all $\mathcal{K}$-Lipschitz continuous functions.
\end{theorem}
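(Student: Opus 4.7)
The plan is to adapt the proof of Theorem \ref{NoCovRegret} to the delayed-feedback setting. The two new ingredients are: (i) the mandatory pure-exploration cost of the first $D-1$ batches, and (ii) refined Hoeffding concentration that exploits the tighter support $[-\bar{a}_l,\bar{a}_u]$ of the noise, which is precisely what licenses the smaller constant ($16$ instead of $32$) and the $\bar{a}^2$ factor inside the elimination threshold.

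First, I would split the expected regret as $\mathbb{E}[R_N(\hat\pi)]=\mathbb{E}[R_N^{\text{init}}(\hat\pi)]+\mathbb{E}[R_N^{\text{elim}}(\hat\pi)]$, where $R_N^{\text{init}}$ collects the regret of the first $D$ batches and $R_N^{\text{elim}}$ that of batches $D+1,\dots,M$. Since $f$ is Lipschitz with constant $\mathcal{K}$ and $\mu^{(i)},(\sigma^2)^{(i)}\in[0,1]$, each $\Delta_i$ is bounded, so $\mathbb{E}[R_N^{\text{init}}]\leq C\mathcal{K}\overline{m}D$, which supplies the $\overline{m}D$ contribution in the bound. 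The additional $\overline{m}K$ in $\overline{m}(K+D)$ will arise from the "tail" of each elimination phase — once a suboptimal arm has accumulated enough samples to be eliminated, we still incur up to $\overline{m}$ more assignments of it before the current batch ends — summed over the $K$ suboptimal arms.

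Second, I would define a high-probability event $\mathcal{A}$ on which, for every treatment $i\in\{1,\dots,K+1\}$ and every feasible sample count $m\in\{1,\dots,n\}$,
\begin{align*}
\bigl|\hat\mu^{(i)}_m-\mu^{(i)}\bigr|
\leq
\bar{a}\sqrt{\tfrac{\overline{\log}(n/m)}{2m}},\qquad
\bigl|\hat\mu^{(i)}_{2,m}-\mu_2^{(i)}\bigr|
\leq
\bar{a}\sqrt{\tfrac{\overline{\log}(n/m)}{2m}},
\end{align*}
where $\hat\mu^{(i)}_{2,m}$ is the sample second moment. Both inequalities follow from Hoeffding's inequality applied to the $\bar{a}$-bounded variables $Y^{(i)}_t-\mu^{(i)}$ and ${Y^{(i)}_t}^2-\mu_2^{(i)}$ (by \eqref{eq:delayint} and \eqref{eq:delayint2}), combined with a union bound over $i$ and $m$, yielding $\mathbb{P}(\mathcal{A}^c)=O(1/n)$ so that $\mathcal{A}^c$ contributes only $O(1)$ to expected regret. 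On $\mathcal{A}$, since $\hat\sigma^{2,(i)}_m=\hat\mu^{(i)}_{2,m}-(\hat\mu^{(i)}_m)^2$ and $x\mapsto x^2$ is Lipschitz on $[0,1]$, one obtains $|\hat\sigma^{2,(i)}_m-(\sigma^2)^{(i)}|\leq C\bar{a}\sqrt{\overline{\log}(n/m)/m}$; combining with the Lipschitz property of $f$ gives $|f(\hat\mu^{(i)}_m,\hat\sigma^{2,(i)}_m)-f^{(i)}|\leq C\mathcal{K}\bar{a}\sqrt{\overline{\log}(n/m)/m}$. With $\gamma=\mathcal{K}$ and $T=n$ this is exactly matched to the elimination threshold, so on $\mathcal{A}$ the best treatment is never eliminated and each suboptimal treatment $i$ is eliminated as soon as $\underline{B}(b)\gtrsim\mathcal{K}^2\bar{a}^2\overline{\log}(n\Delta_i^2/\bar{a}^2)/\Delta_i^2$.

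Third, I would translate these sample complexities into the regret bound exactly as in Theorem \ref{NoCovRegret}. Summing $\Delta_i$ times the critical sample size over $i=1,\dots,K$ produces the distribution-dependent term $\mathcal{K}^2\bar{a}^2\sum_i\Delta_i^{-1}\overline{\log}(n\Delta_i^2/\bar{a}^2)$, with the $\overline{m}K$ overshoot absorbed into $\overline{m}(K+D)$. The uniform bound follows from the standard dichotomy: split the arms at a threshold $\Delta^*$, apply the distribution-dependent bound for $\Delta_i>\Delta^*$, pay at most $n\Delta^*$ for the rest, and optimize $\Delta^*$; this yields $\sqrt{\mathcal{K}^3\bar{a}^3\overline{m}Kn\,\overline{\log}(\overline{m}K/(\mathcal{K}\bar{a}))}$. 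Taking the minimum of the two bounds and adding the exploration cost $\overline{m}(K+D)$ completes the argument. The main obstacle is the careful propagation of $\bar{a}$ through the nonlinear map $(\mu,\mu_2)\mapsto(\mu,\mu_2-\mu^2)$ so that the variance estimator inherits concentration at scale $\bar{a}$ rather than $1$; a secondary delicate point is the peeling/union bound over all sample sizes $m$, which must be exactly calibrated against the $\overline{\log}(T/m)$ inside the elimination threshold in order that the best arm survives with probability $1-O(1/n)$ while suboptimal arms are still eliminated at the sharp rate.
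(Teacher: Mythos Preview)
Your high-level plan --- splitting off the $O(\overline{m}D)$ cost of the initial blind phase and propagating the $\bar{a}$ factor through Hoeffding --- matches the paper, and the propagation of $\bar{a}$ through the variance estimator (which you flag as the main obstacle) is in fact routine: $|\hat\sigma^2-\sigma^2|\leq|\hat\mu_2-\mu_2|+2|\hat\mu-\mu|$ on $[0,1]$, so the $\bar{a}$ scale is inherited directly. The genuine gap is your second step. The single good event $\mathcal{A}$ on which the sample moments concentrate uniformly over all $m\in\{1,\dots,n\}$ does \emph{not} satisfy $\mathbb{P}(\mathcal{A}^c)=O(1/n)$: with $t_m=\bar{a}\sqrt{\overline{\log}(n/m)/(2m)}$, Hoeffding gives $\mathbb{P}(|\hat\mu_m^{(i)}-\mu^{(i)}|>t_m)\leq 2\exp(-\overline{\log}(n/m))$, and for every $m\geq n/e$ this equals $2e^{-1}$, a fixed constant. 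A union bound over these $\Theta(n)$ values of $m$ already blows up, and no peeling device rescues it, because the policy's threshold is by design only $\overline{\log}(T/m)=1$ once $m\geq T/e$ (this is precisely what prevents over-aggressive elimination at moderate sample sizes). Hence $\mathbb{P}(\mathcal{A}^c)$ is of constant order and its contribution to expected regret is linear in $n$.

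The paper avoids this by \emph{not} seeking a uniform good event. It introduces the nested events $\mathcal{C}_i=\mathcal{A}_i\cap\mathcal{B}_i$ (the optimal arm survives past batch $b_i$, and every arm $j\leq i$ has been eliminated by batch $b_j$), decomposes regret along $\{\mathcal{C}_{i-1}\setminus\mathcal{C}_i\}_{i\leq K_0}$, and applies the maximal inequality $\mathbb{P}\bigl(\exists s\leq\tau:\ |\hat\mu_s-\mu|\geq c\epsilon_s\bigr)\leq C\tau/T$ only up to $\tau=\tau_i+\overline{m}$, not up to $n$. This gives $\mathbb{P}(\mathcal{C}_{i-1}\setminus\mathcal{C}_i)\leq C(\tau_i+\overline{m})/n$; since the regret on that event is at most $n\Delta_i$, the product $\Delta_i\tau_i$ delivers exactly $\mathcal{K}^2\bar{a}^2\Delta_i^{-1}\overline{\log}(n\Delta_i^2/\bar{a}^2)$, and summing over $i$ plus the delay term $\overline{m}D$ yields the distribution-dependent bound. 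The uniform bound then follows, as you say, by the threshold trick. The level-by-level coupling of failure probability and regret scale is the idea your single-event approach is missing.
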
  
Assume that $\bar{a}=\bar{a}(D)$ is a decreasing function. Then Theorem \ref{LipschitzBoundDelay} illustrates the tradeoff between getting imprecise information now and precise information later. This tradeoff is found in the adaptive part (first part) as well as the uniform part (second part) of the upper bound on regret of the sequential treatment policy. Increasing $D$ directly increases the upper bound on regret since information is obtained later but indirectly decreases the regret via a reduced $\bar{a}$. By making a concrete choice for $\bar{a}(D)$ one can determine the optimal delay by minimizing the upper bound on regret. %As expected, the faster $\bar{a}$ decreases in $D$, the smaller will the optimal delay be. 
It can also be shown that the bound in Theorem \ref{LipschitzBoundDelay} reduces to the one in Theorem  \ref{NoCovRegret} when $D=0$ and $\bar{a}=1$.

We turn next to the setting with continuous covariates and treatment outcomes being observed with delay. The introduction of covariates leads to a variant of (\ref{eq:delayint}). To be precise, we  assume that
\begin{align*}
{Y_t^{(i)}}-\mu_1^{(i)}(X_t),{Y_t^{(i)}}^2-\mu_2^{(i)}(X_t)\in[-\bar{a}_l,\bar{a}_u],
\end{align*} 
where $\mu_1^{(i)}(X_t)=\E\sbr[1]{Y_t^{(i)}|X_t}$ and $\mu_2^{(i)}(X_t)=\E\sbr[1]{{Y_t^{(i)}}^2|X_t}$. As in the setting without delay, we implement the sequential treatment policy separately for each group $B_1,...,B_F$ with parameters $\gamma=\mathcal{K}L$ and $T=n\bar{B}_j,\ j=1,...,F$.

\begin{theorem}\label{Delay}
Fix $\beta\in(0,1]$, $\mathcal{K},L>0$, $d\geq 2$ and consider a treatment problem in $\mathcal{S}$ where the outcomes are observed with a delay of $D$ batches. Then, for a grouping characterized by $\cbr[0]{V_1,...,V_F}$ and $\cbr[0]{\bar{B}_1,...,\bar{B}_F}$, expected regret is bounded by
\begin{align}
\mathbb{E}\left[R_N(\bar{\pi})\right]\leq C\left(\sum_{j=1}^F\left[\sqrt{\overline{m}K\bar{a}^3\log(\overline{m}K/\bar{a})n\bar{B}_j}+n\bar{B}_jV_j^{\beta}+K\overline{m}\right]+\overline{m}D\right).
\label{partDelay}
\end{align}
for a positive constant $C$. In particular, (\ref{partDelay}) is valid uniformly over $\mathcal{S}$.
\end{theorem}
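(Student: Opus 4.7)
The plan is to reduce Theorem \ref{Delay} to the no-covariate delayed-feedback bound of Theorem \ref{LipschitzBoundDelay}, applied separately on each group $B_j$, and then to add the approximation error arising from the sequential treatment policy targeting the group-averaged welfare $f_j^{(*)}=\max_i f(\bar{\mu}^{(i)}_j,(\bar{\sigma}^2)^{(i)}_j)$ rather than the individualized oracle welfare $f^{(\star)}(X_t)$. This mirrors the strategy used in the proof of Theorem \ref{RegretBoundBinsLipschitzNoMargin}, with Theorem \ref{LipschitzBoundDelay} playing the role that Theorem \ref{NoCovRegret} played there.

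First, I would decompose
\begin{align*}
R_N(\bar{\pi})=\sum_{j=1}^F\sum_{t=1}^N 1_{\{X_t\in B_j\}}\bigl[f^{(\star)}(X_t)-f^{(\bar{\pi}_t(X_t))}(X_t)\bigr]
\end{align*}
and on each summand add and subtract $f_j^{(*)}$ and $f_j^{(\bar{\pi}_t(X_t))}$. The $\mathcal{K}$-Lipschitz continuity of $f$ together with $\mu^{(i)},(\sigma^2)^{(i)}\in\mathcal{H}(\beta,L)$ yields $|\mu^{(i)}(x)-\bar{\mu}^{(i)}_j|\leq L V_j^\beta$ (and similarly for the variance) for every $x\in B_j$, so the two approximation summands contribute at most $4\mathcal{K}L V_j^\beta$ per individual in $B_j$. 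Taking expectations and using $\mathbb{P}_X(B_j)\leq\bar{c}\bar{B}_j$ yields the $n\bar{B}_jV_j^\beta$ term in (\ref{partDelay}) after absorbing $\bar{c},\mathcal{K},L$ into $C$.

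The middle term $f_j^{(*)}-f_j^{(\bar{\pi}_t(X_t))}$, summed over individuals landing in $B_j$, is by construction of $\bar{\pi}$ exactly the regret of the no-covariate delayed-feedback sequential treatment policy on the subsequence of arrivals in $B_j$. Since the expected length of that subsequence is at most $\bar{c}\,n\bar{B}_j$, the within-group maximum batch size is at most $\overline{m}$, the parameters $\bar{a}$ and $\mathcal{K}$ are unchanged, and the delay measured in group-$j$ batches is at most $D$, Theorem \ref{LipschitzBoundDelay} applied conditionally on the realised group-$j$ sample size (together with Jensen's inequality, which pulls the expectation inside the concave square root) delivers a per-group contribution of the form $\sqrt{\overline{m}K\bar{a}^3\log(\overline{m}K/\bar{a})n\bar{B}_j}+\overline{m}K+\overline{m}D$.

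The main obstacle is this last step: a naive summation over the $F$ groups would produce a spurious $F\overline{m}D$ overhead rather than the single $\overline{m}D$ appearing in (\ref{partDelay}). The sharpening exploits that the initial $D$-batch pure-exploration phase is a \emph{global} event, not a group-wise one: at most $\overline{m}D$ individuals arrive in total before any elimination can occur in any group, and each contributes at most $\mathcal{K}$ to the regret since $f$ is Lipschitz on $[0,1]^2$. I would therefore split the analysis into (a) the first $D$ overall batches, bounded globally by $\overline{m}D\mathcal{K}$, and (b) the post-delay phase, in which a version of Theorem \ref{LipschitzBoundDelay} with its initial-phase contribution extracted is applied group-wise so that only the $\overline{m}K$ experimentation overhead survives per group. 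Extracting the initial-phase contribution as a standalone term from the proof of Theorem \ref{LipschitzBoundDelay} is the technical heart of the argument; combining (a), (b), and the approximation error bound established above then yields (\ref{partDelay}).
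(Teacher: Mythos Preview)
Your overall architecture matches the paper's: decompose $R_N(\bar\pi)=\sum_{j}R_j(\bar\pi)$, split each $R_j$ into an approximation part controlled by $V_j^\beta$ via H\"older continuity and Lipschitzness of $f$, and a within-group part that is a no-covariate delayed problem to which Theorem~\ref{LipschitzBoundDelay} applies with $T=n\bar B_j$, $\gamma=\mathcal{K}L$.

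The only substantive difference is how you collapse the per-group delay overhead to a single $\overline{m}D$. You propose a global/local split: bound the first $D$ overall batches by $\overline{m}D$ and then re-run a modified version of Theorem~\ref{LipschitzBoundDelay} on the post-delay phase with its initial term stripped out. The paper avoids this extra work by a sharper bookkeeping observation: in the proof of Theorem~\ref{LipschitzBoundDelay} the delay term is actually $mD$ with $m$ the \emph{expected} batch size (only at the very end is $m\leq\overline{m}$ invoked). When the theorem is applied on bin $B_j$, the relevant expected batch size is $m_j$, the expected number of arrivals per batch that land in $B_j$, and $m_j\leq \bar c\,\overline{m}\,\bar B_j$ because the density is bounded by $\bar c$. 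Hence the per-bin delay contribution is $m_jD$, and summing over $j$ gives $\sum_j m_jD\leq \bar c\,\overline{m}D\sum_j\bar B_j=\bar c\,\overline{m}D$ automatically, since the $\bar B_j$ sum to one. No splitting of phases or surgery on the proof of Theorem~\ref{LipschitzBoundDelay} is needed. Your route would work, but it is more laborious and requires reopening the earlier proof; the paper's route is a one-line consequence of reading the delay term at the right level of granularity.
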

%SKAL $K\bar{m}$ VIRKELIG VAERE UNDER SUMMEN??

The first part of the upper bound on expected regret in (\ref{partDelay}) (the sum over the $F$ groups) is identical to the upper bound in Theorem \ref{RegretBoundBinsLipschitzNoMargin} except for the presence of $\bar{a}$. The smaller $\bar{a}$ is the smaller will this part be as the observed outcomes of the treatments will be very close to the population counterparts and the treatment that is best for each group is quickly found. As $\bar{a}$ is usually a decreasing function in $D$, the upper bound in (\ref{partDelay}) clearly illustrates the tradeoff between postponing the measurement to get precise information later and getting (imprecise) information quickly. The term under the square root holds the key to the benefit from delaying as it corresponds to the regret of a treatment problem which starts only after $D$ batches but where measurements are observed more precisely. On the other hand, the term $\overline{m}D$ is an upper bound on the regret incurred from assigned individuals blindly for $D$ batches each of which contains no more than $\overline{m}$ individuals.

\section{Conclusions}\label{sec:conc}
This paper considers a treatment allocation problem where the individuals to be treated arrive gradually and potentially in batches. The goal of the policy maker is to maximize the welfare over the $N$ treatment assignments made. As the policy maker does not know a priori about the virtues of the available treatments, he faces an exploration-exploitation tradeoff. Prior to each assignment he observes covariates on the individual to be treated thus allowing for the optimal treatment to vary across individuals. Our setup allows the welfare function not only to depend on the expected treatment outcome but also on the risk of the treatment. We show that a variant of the sequential treatment policy obtains the minimax optimal regret. This strong welfare guarantee does not come at the price of overly wild experimentation as we show that the number of suboptimal treatments only grows quite slowly in the total number of assignments made. We also establish upper bounds on the regret of the sequential treatment policy when the outcome of the treatments are observed with delay. Finally, we introduce the ``out-of-sample'' policy for treating individuals after the initial treatment period and provide upper bounds on its regret. % and use our allocation rule to assign unemployed Danish workers to job training programs.%Delayed observations are practically  relevant as it may be tempting for a doctor to delay the measurement of the effect of a treatment in order to gain more precise information later as it takes time for the effect of a drug to set in. However, the price paid is that some individuals must be treated without the doctor having observed any previous outcomes.

%APPLICATION: BRIEF FINDING!

\section{Appendix}\label{Sec:Appendix}
Throughout the appendix we let $C>0$ be a constant that may change from line to line. 
\subsection{Proof of Theorems \ref{NoCovRegret} and \ref{Thm3.2}}
The following lemma will lead to Theorem \ref{NoCovRegret}.
\begin{lemma}\label{LipschitzBound}
Consider a treatment problem with $(K+1)$ treatments and unknown number of assignments $N$ with expectation $n$ that is independent of the treatment outcomes. Suppose that $f$ is Lipschitz continuous with known constant $\mathcal{K}$. For any $\Delta>0$, $T>0$ and $\gamma\geq \mathcal{K}$ the expected regret from running the sequential treatment policy can then be bounded as
\begin{align}
\mathbb{E}\left[R_N(\hat{\pi})\right]\leq C\left(\frac{\gamma^2K}{\Delta}\left(1+\frac{n}{T}\right)\overline{\log}\left(\frac{T\Delta^2}{4608\gamma^2}\right)+n\Delta^{-}+\frac{n\overline{m}K}{T}\right), \label{GenThm}
\end{align}
where $\Delta^-$ is the largest $\Delta_j$ such that $\Delta_j<\Delta$ if such a $\Delta_j$ exists, and $\Delta^-=0$ otherwise.
\end{lemma}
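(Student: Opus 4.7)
The strategy is a clean-event analysis of the successive-elimination policy, adapted to two new features: a Lipschitz welfare depending jointly on first and second moments, and batched feedback. First, I would define a clean event $\mathcal{A}$ on which, simultaneously for every treatment $i\in\{1,\dots,K+1\}$ and every pull count $s\in\{1,\dots,N\}$ reached in the execution, both $|\hat{\mu}^{(i)}_s-\mu^{(i)}|$ and $|\hat{\mu}^{(i)}_{2,s}-\mu^{(i)}_2|$ are at most $\varepsilon_s=(2\gamma/\mathcal{K})\sqrt{(2/s)\overline{\log}(T/s)}$. Hoeffding's inequality applied to the i.i.d.\ sequences $Y_t^{(i)}$ and $(Y_t^{(i)})^2$, both bounded in $[0,1]$, controls each of these deviations. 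Using $\hat{\sigma}^2=\hat{\mu}_2-\hat{\mu}^2$ together with $|\hat{\mu}^2-\mu^2|\le 2|\hat{\mu}-\mu|$ on $[0,1]$, the variance estimator deviates by at most $3\varepsilon_s$, and the Lipschitz assumption $|f(u_1,u_2)-f(v_1,v_2)|\le\mathcal{K}(|u_1-v_1|+|u_2-v_2|)$ then yields $|f(\hat{\mu}^{(i)}_s,(\hat{\sigma}^2_s)^{(i)})-f^{(i)}|\le 4\mathcal{K}\varepsilon_s=8\gamma\sqrt{(2/s)\overline{\log}(T/s)}$ on $\mathcal{A}$ -- exactly one quarter of the elimination threshold $\Theta_s=32\gamma\sqrt{(2/s)\overline{\log}(T/s)}$. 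A union bound over $i$ and batch indices $b\le M$, exploiting the strong exponent $16\gamma^2/\mathcal{K}^2\ge 16$ in $\exp(-2s\varepsilon_s^2)=(s/T)^{16\gamma^2/\mathcal{K}^2}$, then gives $\mathbb{P}(\mathcal{A}^c)\le C\overline{m}K/T$; the $\overline{m}$ factor reflects that the concentration is only needed at batch ends.

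Second, on $\mathcal{A}$ the optimal arm $*$ is never eliminated, because the elimination inequality applied to $*$ combined with the clean-event band on both sides would imply $f^{(i)}\ge f^{(*)}$ for some $i\ne *$, contradicting the definition of $*$. Moreover, for any suboptimal arm $i$ with $\Delta_i\ge\Delta$, the same clean-event estimate yields an empirical gap of at least $\Delta - 2\cdot(\Theta_s/4)=\Delta-\Theta_s/2$ between $*$ and $i$; elimination of $i$ against $*$ is therefore forced as soon as $\Delta\ge(3/2)\Theta_s$, which translates into the implicit pull-count bound $s\ge 4608\gamma^2\Delta^{-2}\overline{\log}(T/s)$. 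Inverting this self-referential inequality gives the explicit level $s_\Delta=\lceil C\gamma^2\Delta^{-2}\overline{\log}(T\Delta^2/4608\gamma^2)\rceil$ -- this is precisely where the constant $4608$ in the log argument of (A.1) originates. Because assignments within each batch are round-robin among surviving arms, arm $i$ is pulled at most $s_\Delta+\overline{m}$ times on $\mathcal{A}$.

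Third, I would decompose $\mathbb{E}[R_N(\hat{\pi})]=\mathbb{E}[R_N(\hat{\pi})\mathbf{1}_{\mathcal{A}}]+\mathbb{E}[R_N(\hat{\pi})\mathbf{1}_{\mathcal{A}^c}]$. On $\mathcal{A}$, arms with $\Delta_i\ge\Delta$ contribute at most $\Delta_i(s_\Delta+\overline{m})$ each, so at most $C\gamma^2K\Delta^{-1}\overline{\log}(T\Delta^2/4608\gamma^2)+CK\overline{m}\Delta$ in aggregate; arms with $\Delta_i<\Delta$ contribute at most $\Delta^-$ per pull and hence at most $n\Delta^-$ in aggregate. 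On $\mathcal{A}^c$, the per-step regret is bounded by a fixed constant and $N$ is independent of the bandit process with $\mathbb{E}N=n$, so this contributes at most $Cn\overline{m}K/T$. The $(1+n/T)$ pre-factor in the leading term of (A.1) emerges once the $K\overline{m}\Delta$ batching overhead and part of the bad-event term are absorbed into the leading $\gamma^2 K/\Delta\cdot\overline{\log}(\cdot)$ expression, which is permissible because $\overline{\log}(T\Delta^2/4608\gamma^2)\ge 1$ and because in the regime $T\le n$ the extra factor $n/T$ covers the $K\overline{m}$-type residuals, while for $T\ge n$ these residuals are already of lower order.

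The main technical obstacles are the coordination of constants -- the Hoeffding exponent, the Lipschitz inflation by $4\mathcal{K}$, and the threshold scaling $\Theta_s$ must interlock so that the clean-event band is exactly $\Theta_s/4$ and $\mathbb{P}(\mathcal{A}^c)\le C\overline{m}K/T$ simultaneously -- and the inversion of the self-referential inequality $s\ge 4608\gamma^2\Delta^{-2}\overline{\log}(T/s)$ into an explicit bound $s\le C\gamma^2\Delta^{-2}\overline{\log}(T\Delta^2/4608\gamma^2)$, where the use of $\overline{\log}$ (rather than $\log$) is what keeps the argument well-defined near $s\approx T$. A secondary subtlety is ensuring that randomness of $N$ (independent of the algorithm's randomness) does not spoil the bound: since only $\mathbb{E}N=n$ enters through the bad-event term, the conditioning argument is clean.
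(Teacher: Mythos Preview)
Your clean-event analysis has a genuine gap in the bad-event term. The bound $\mathbb{P}(\mathcal{A}^c)\leq C\overline{m}K/T$ cannot hold for the event $\mathcal{A}$ as you define it (concentration at every pull count reached in the execution). With $\varepsilon_s=(2\gamma/\mathcal{K})\sqrt{(2/s)\,\overline{\log}(T/s)}$, Hoeffding's per-$s$ failure probability is $\exp\bigl(-16(\gamma/\mathcal{K})^2\,\overline{\log}(T/s)\bigr)$, which for $s>T/e$ equals the fixed constant $e^{-16(\gamma/\mathcal{K})^2}$ and does \emph{not} decay in $s$. Union bounding over all batch-end pull counts up to $N$---potentially $\Theta(n)$ many, and unrelated to $T$---therefore yields a constant times $n$, not $O(\overline{m}K/T)$. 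The ``strong exponent'' you invoke helps only for $s\ll T$; it is vacuous once $s$ is comparable to $T$. Hence the claimed contribution $\mathbb{E}[R_N\mathbf{1}_{\mathcal{A}^c}]\leq Cn\overline{m}K/T$ is unjustified, and with it the $(1+n/T)$ prefactor collapses.

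The paper circumvents this via a layered decomposition rather than a single global clean event. It introduces arm-specific horizons $\tau_i$ through $\Delta_i=48\gamma\sqrt{(2/\tau_i^*)\,\overline{\log}(T/\tau_i^*)}$ and nested events $\mathcal{C}_i=\mathcal{A}_i\cap\mathcal{B}_i$ (optimal arm survives to batch $b_i$; arms $1,\ldots,i$ eliminated by their respective $b_j$), then decomposes regret over the disjoint slices $\mathcal{C}_{i-1}\setminus\mathcal{C}_i$. Each such failure is localized to times $\leq\tau_i+\overline{m}$, so a maximal inequality (Lemma~A.1 of Perchet--Rigollet) gives $\mathbb{P}(\mathcal{C}_{i-1}\setminus\mathcal{C}_i)\leq C(\tau_i+\overline{m})/T$, and on this event the per-step regret is only $\Delta_i$, not a generic constant. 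The product $n\Delta_i\tau_i/T$ is of order $(n/T)\,\gamma^2\Delta_i^{-1}\,\overline{\log}(T\Delta_i^2/\gamma^2)$; an Abel-summation step combined with the quasi-monotonicity of $\phi(x)=x^{-1}\overline{\log}(Tx^2/4608\gamma^2)$ then produces the $(n/T)$ half of the leading term. A single clean event cannot reproduce this coupling of failure horizon to per-step cost: even if you truncated $\mathcal{A}$ at $s\leq s_\Delta+\overline{m}$ (so that the horizon is finite and arms with $\Delta_i\geq\Delta$ are handled on $\mathcal{A}$), the best you get is $\mathbb{P}(\mathcal{A}^c)\leq CK\gamma^2\Delta^{-2}T^{-1}\,\overline{\log}(\cdot)$, which after multiplying by $n$ is off by a factor $1/\Delta$ from the target.
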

%In order to prove the theorem we will need the following lemma.
%
%\begin{lemma}\label{lemma4}
%Let $Z_t$ be a martingale difference sequence with $a\leq Z_t \leq b$ then, for every $\delta>0$, $c>0$, and $T\geq1$,
%\begin{align*}
%\mathbb{P}\left(\exists s\leq \tau, \frac{1}{s}\sum_{t=1}^sZ_t\geq c\sqrt{\frac{2(b-a)^2}{s}\log\left(\frac{4\tau}{\delta s}\right)}\right)\leq  4\tau^{1-c}\left(\frac{\delta}{4}\right)^c 
%\end{align*}
%\end{lemma}

\begin{proof}
Define $\epsilon_{s}=u(s,T)=32\sqrt{\frac{2}{s}\overline{\log}\left(\frac{T}{s}\right)}$ and $\hat{\Delta}_i(s)=f(\hat{\mu}_s^{(*)},(\hat{\sigma}^2_s)^{(*)})-f(\hat{\mu}^{(i)}_s,(\hat{\sigma}^2_s)^{(i)})$. Recall that if the optimal treatment as well as some treatment $i$ have not been eliminated before batch $b$ (i.e., $i, *\in \mathcal{I}_b$), then the optimal treatment will eliminate treatment $i$ if $\hat{\Delta}_i(\underline{B}(b))\geq\gamma\epsilon_{\underline{B}(b)}$, and treatment $i$ will eliminate the optimal treatment if $\hat{\Delta}_i(\underline{B}(b))\leq-\gamma\epsilon_{\underline{B}(b)}$.

To say something about when either of these two events occurs we introduce the (unknown) quantity $\tau_i^*$ which is defined through the relation
\begin{align*}
\Delta_i=48\gamma\sqrt{\frac{2}{\tau_i^*}\overline{\log}\left(\frac{T}{\tau_i^*}\right)},\qquad i=1,...,K.
\end{align*}
Since $\tau_i^*$ in general will not be an integer, we also define $\tau_i=\ceil{\tau_i^*}$. Next introduce the hypothetical batch $b_i=\min\{l:\underline{B}(l)\geq \tau_i^*\}$. It is the first batch after which we have more than $\tau_i^*$ observations on all remaining treatment. Notice that
\begin{align}
\tau_i^*&\leq \underline{B}(b_i)\leq \tau_i^*+\overline{m}\leq C\frac{\gamma^2}{\Delta_i^2}\overline{\log}\left(\frac{T\Delta_i^2}{4608\gamma^2}\right)+\overline{m},\label{11.1}\\
\tau_i&\leq \underline{B}(b_i),\label{11.2}\\
\underline{B}(b_i)&\leq \tau_i+\overline{m},\label{11.3}
\end{align}
Notice that $1\leq \tau_1\leq ...\leq \tau_K$ and $1\leq b_1 \leq ...\leq b_K$.
Define the following events:
\begin{align*}
\mathcal{A}_i&=\{\text{The optimal treatment has not been eliminated before batch }b_i \},\\
\mathcal{B}_i&=\{\text{Every treatment }j\in\{1,...,i\} \text{ has been eliminated after batch } b_j\}.
\end{align*}
Furthermore, let $\mathcal{C}_i=\mathcal{A}_i\cap \mathcal{B}_i$, and observe that $\mathcal{C}_1\supseteq...\supseteq\mathcal{C}_K$. For any $i=1,...,K$, the contribution to regret incurred after batch $b_i$ is at most $\Delta_{i+1}N$ on $\mathcal{C}_i$. In what follows we fix a treatment, $K_0$, which we will have more to say about later. Using this we get the following decomposition of expected regret:
\begin{align}
\mathbb{E}\left[R_N\left(\hat{\pi}\right)\right]&=\mathbb{E}\left[R_N\left(\hat{\pi}\right)\left(\sum_{i=1}^{K_0}1_{\mathcal{C}_{i-1}\backslash\mathcal{C}_i}+1_{\mathcal{C}_{K_0}}\right)\right]\nonumber\\
&\leq n\sum_{i=1}^{K_0}\Delta_i\mathbb{P}\left(\mathcal{C}_{i-1}\backslash\mathcal{C}_i\right)+\sum_{i=1}^{K_0}B_i(b_i)\Delta_i+n\Delta_{K_0+1}.\label{ExpRegretLip}
\end{align}
where $\mathcal{C}_0$ denotes the underlying sample space. For every $i=1,...,K$ the event $\mathcal{C}_{i-1}\backslash\mathcal{C}_i$ can be decomposed as $\mathcal{C}_{i-1}\backslash\mathcal{C}_i=\left(\mathcal{C}_{i-1}\cap \mathcal{A}_i^c\right)\cup\left(\mathcal{B}_i^c\cap\mathcal{A}_i\cap \mathcal{B}_{i-1}\right)$.
%\begin{align*}
%\mathcal{C}_{i-1}\backslash\mathcal{C}_i=\left(\mathcal{C}_{i-1}\cap \mathcal{A}_i^c\right)\cup\left(\mathcal{B}_i^c\cap\mathcal{A}_i\cap \mathcal{B}_{i-1}\right).
%\end{align*}
Therefore, the first term on the right-hand side of (\ref{ExpRegretLip}) can be written as
\begin{align}
n\sum_{i=1}^{K_0}\Delta_i\mathbb{P}\left(\mathcal{C}_{i-1}\backslash\mathcal{C}_i\right)=n\sum_{i=1}^{K_0}\Delta_i\mathbb{P}\left(\mathcal{C}_{i-1}\cap\mathcal{A}_i^c\right)+n\sum_{i=1}^{K_0}\Delta_i\mathbb{P}\left(\mathcal{B}_i^c\cap\mathcal{A}_i\cap \mathcal{B}_{i-1}\right).\label{FirstTermLip}
\end{align}
Notice that $\mathbb{P}\left(\mathcal{C}_{i-1}\cap\mathcal{A}_i^c\right)=0$ if $b_{i-1}=b_i$. On the event $\mathcal{B}_i^c\cap\mathcal{A}_i\cap \mathcal{B}_{i-1}$ the optimal treatment has not eliminated treatment $i$ after batch $b_i$. Therefore, for the last term on the right hand side of equation (\ref{FirstTermLip}) we find that
\begin{align*}
\mathbb{P}\left(\mathcal{B}_i^c\cap\mathcal{A}_i\cap \mathcal{B}_{i-1}\right)&\leq \mathbb{P}\left(\hat{\Delta}_i(\underline{B}(b_i))\leq \gamma\epsilon_{\underline{B}(b_i)}\right)\\
&\leq \mathbb{P}\left(\hat{\Delta}_i(\underline{B}(b_i))-\Delta_i\leq \gamma\epsilon_{\tau_i}-\Delta_i\right)\\
&\leq
 \mathbb{E}\left[\mathbb{P}\left(|\hat{\Delta}_i(\underline{B}(b_i))-\Delta_i|\geq \frac{1}{2}\gamma\epsilon_{\tau_i}|\underline{B}(b_i)\right)\right]
\end{align*}
For any $s\geq \tau_i$ we have that
\begin{align}
&\mathbb{P}\left(|\hat{\Delta}_i(s)-\Delta_i|\geq \frac{1}{2}\gamma\epsilon_{\tau_i}\right)\notag\\
&\leq \mathbb{P}\left(|f(\hat{\mu}_s^{(*)},(\hat{\sigma}^2_s)^{(*)})-f(\hat{\mu}^{(i)}_s,(\hat{\sigma}^2_s)^{(i)})+f(\mu^{(i)},(\sigma^2)^{(i)})-f(\mu^{(*)},(\sigma^2)^{(*)})|\geq \frac{1}{2}\gamma\epsilon_{\tau_i}\right)\nonumber\\
&\leq \mathbb{P}\left(|f(\hat{\mu}_s^{(*)},(\hat{\sigma}^2_s)^{(*)})-f(\mu^{(*)},(\sigma^2)^{(*)})|\geq \frac{1}{4}\gamma\epsilon_{\tau_i}\right)
+\mathbb{P}\left(|f(\hat{\mu}_s^{(i)},(\hat{\sigma}^2_s)^{(i)})-f(\mu^{(i)},(\sigma^2)^{(i)})|\geq \frac{1}{4}\gamma\epsilon_{\tau_i}\right).\label{11.4}
\end{align}
Furthermore, for any $j\in\{i,*\}$, we have
\begin{align}
&\mathbb{P}\left(|f(\hat{\mu}_s^{(j)},(\hat{\sigma}^2_s)^{(j)})-f(\mu^{(j)},(\sigma^2)^{(j)})|\geq \frac{1}{4}\gamma\epsilon_{\tau_i}\right)\notag\\
&\leq 
\mathbb{P}\left(|\hat{\mu}_s^{(j)}-\mu^{(j)}|+|(\hat{\sigma}^2_s)^{(j)}-(\sigma^2)^{(j)}|\geq \frac{1}{4\mathcal{K}}\gamma\epsilon_{\tau_i}\right)\nonumber\\
&\leq \mathbb{P}\left(|\hat{\mu}_s^{(j)}-\mu^{(j)}|\geq \frac{1}{8\mathcal{K}}\gamma\epsilon_{\tau_i}\right)+\mathbb{P}\left(|(\hat{\sigma}^2_s)^{(j)}-(\sigma^2)^{(j)}|\geq \frac{1}{8\mathcal{K}}\gamma\epsilon_{\tau_i}\right).\label{11.5}
\end{align}
By the mean value theorem we have that
\begin{align}
\mathbb{P}\left(|(\hat{\sigma}^2_s)^{(j)}-(\sigma^2)^{(j)}|\geq \frac{1}{8\mathcal{K}}\gamma\epsilon_{\tau_i}\right)\leq \mathbb{P}\left(|\hat{\mu}_s^{(j)}-\mu^{(j)}|\geq \frac{1}{32\mathcal{K}}\gamma\epsilon_{\tau_i}\right)+\mathbb{P}\left(|(\hat{\mu}_{2,s})^{(j)}-\mu_2^{(j)}|\geq \frac{1}{16\mathcal{K}}\gamma\epsilon_{\tau_i}\right),\label{11.6}
\end{align}
where $\mu_2=\mathbb{E}\left[Y_1^2\right]$ and $\hat{\mu}_{2,s}=\frac{1}{s}\sum_{i=1}^sY_i^2$. By combining equations (\ref{11.4}),(\ref{11.5}), (\ref{11.6}, and applying Hoeffding's inequality as well as the fact that $\gamma\geq\mathcal{K}$, we arrive at the following bound,
\begin{align*}
\mathbb{P}\left(|\hat{\Delta}_i(s)-\Delta_i|\geq \frac{1}{2}\gamma\epsilon_{\tau_i}\right)&\leq C\exp\left(-\frac{1}{1024}\epsilon_{\tau_i}^2 s\right)\\
&\leq C\exp\left(-\frac{1}{1024}\epsilon_{\tau_i}^2 \tau_i\right)\\
&=C\exp\left(-\overline{\log}\left(\frac{T}{\tau_i}\right)\right)\\
&\leq C\frac{\tau_i}{T}.
\end{align*}
Thus,
\begin{align}
\mathbb{P}\left(\mathcal{B}_i^c\cap\mathcal{A}_i\cap \mathcal{B}_{i-1}\right)&\leq C\frac{\tau_i}{T}\label{eq:B_icAB_i-1}
\end{align} 
On the event $\mathcal{C}_{i-1}\cap\mathcal{A}_i^c$ the optimal treatment is eliminated between batch $b_{i-1}+1$ and $b_i$. Furthermore, every suboptimal treatment $j\leq i-1$ has also been eliminated. Therefore the probability of this event can be bounded as follows:
\begin{align*}
\mathbb{P}\left(\mathcal{C}_{i-1}\cap\mathcal{A}_i^c\right)&\leq\mathbb{P}\left(\exists(j,s),i\leq j\leq K,b_{i-1}+1\leq s\leq b_i;\hat{\Delta}_j(\underline{B}(s))\leq -\gamma \epsilon_{\underline{B}(s)}\right)\\
&\leq \sum_{j=i}^K\mathbb{P}\left(\exists s,b_{i-1}+1\leq s\leq b_i;\hat{\Delta}_j(\underline{B}(s))\leq -\gamma \epsilon_{\underline{B}(s)}\right)\\
&= \sum_{j=i}^K\left[\Phi_j(b_i)-\Phi_j(b_{i-1})\right],
\end{align*}
where $\Phi_j(b)=\mathbb{P}\left(\exists s\leq b; \hat{\Delta}_j(\underline{B}(s))\leq -\gamma \epsilon_{\underline{B}(s)}\right)$. We now proceed to bound terms of the form $\Phi_j(b_i)$ for $j\geq i$. 
\begin{align*}
\mathbb{P}\left(\exists s\leq b_i; \hat{\Delta}_j(\underline{B}(s))\leq -\gamma \epsilon_{\underline{B}(s)}\right)&\leq \mathbb{P}\left(\exists s\leq b_i; \hat{\Delta}_j(\underline{B}(s))-\Delta_j\leq -\gamma \epsilon_{\underline{B}(s)}\right)\\
&\leq\mathbb{P}\left(\exists s\leq \underline{B}(b_i);\hat{\Delta}_j(s)-\Delta_j\leq -\gamma \epsilon_{s}\right)\\
&\leq\mathbb{P}\left(\exists s\leq \tau_i+\overline{m};\hat{\Delta}_j(s)-\Delta_j\leq -\gamma \epsilon_{s}\right)\\
&\leq \mathbb{P}\left(\exists s\leq \tau_i+\overline{m};|f(\hat{\mu}_s^{(j)},(\hat{\sigma}_s^2)^{(j)})-f(\mu^{(j)},(\sigma^2)^{(j)})|\geq \frac{1}{2}\gamma \epsilon_{s}\right)\\
&+ \mathbb{P}\left(\exists s\leq \tau_i+\overline{m};|f(\hat{\mu}_s^{(*)},(\hat{\sigma}_s^2)^{(*)})-f(\mu^{(*)},(\sigma^2)^{(*)})|\geq \frac{1}{2}\gamma \epsilon_{s}\right).
\end{align*}
For any $j\in\{i,...,K,*\}$ we find that
\begin{align*}
&\mathbb{P}\left(\exists s\leq \tau_i+\overline{m};|f(\hat{\mu}_s^{(j)},(\hat{\sigma}_s^2)^{(j)})-f(\mu^{(j)},(\sigma^2)^{(j)})|\geq \frac{1}{2}\gamma \epsilon_{s}\right)\\ &\leq 
\mathbb{P}\left(\exists s\leq \tau_i+\overline{m};|\hat{\mu}_s^{(j)}-\mu^{(j)}|\geq \frac{1}{4\mathcal{K}}\gamma \epsilon_{s}\right)
+\mathbb{P}\left(\exists s\leq \tau_i+\overline{m};|(\hat{\sigma}_s^2)^{(j)})-(\sigma^2)^{(j)})|\geq \frac{1}{4\mathcal{K}}\gamma \epsilon_{s}\right)\\
&\leq \mathbb{P}\left(\exists s\leq \tau_i+\overline{m};|\hat{\mu}_s^{(j)}-\mu^{(j)}|\geq \frac{1}{4\mathcal{K}}\gamma \epsilon_{s}\right)
+\mathbb{P}\left(\exists s\leq \tau_i+\overline{m};|(\hat{\mu}_{2,s})^{(j)}-\mu_2^{(j)}|\geq \frac{1}{8\mathcal{K}}\gamma \epsilon_{s}\right)\\
&+\mathbb{P}\left(\exists s\leq \tau_i+\overline{m};|\hat{\mu}_s^{(j)}-\mu^{(j)}|\geq \frac{1}{16\mathcal{K}}\gamma \epsilon_{s}\right)\\
&\leq C\frac{\tau_i+\overline{m}}{T}
\end{align*}
where we have used equation (\ref{11.3}) and Lemma A.1 in \cite{CovBandit}. It follows that
\begin{align}
\sum_{i=1}^{K_0}\Delta_i\mathbb{P}\left(\mathcal{C}_{i-1}\cap\mathcal{A}_i^c\right)&\leq \sum_{i=1}^{K_0}\Delta_i\sum_{j=i}^{K}\left[\Phi_j(b_i)-\Phi_j(b_{i-1})\right]\nonumber\\
&\leq \sum_{j=1}^{K}\sum_{i=1}^{j\wedge K_0-1}\Phi_j(b_i)\left(\Delta_i-\Delta_{i+1}\right)+\sum_{j=K_0}^K\Delta_{K_0}\Phi_j(b_{K_0})+\sum_{j=1}^{K_0-1}\Delta_j\Phi_j(b_j)\nonumber\\
&\leq \frac{48}{T}\sum_{j=1}^{K}\sum_{i=1}^{j\wedge K_0-1}\left(\tau_i+\overline{m}\right)\left(\Delta_i-\Delta_{i+1}\right)+\frac{48}{T}\sum_{j=1}^K\Delta_{j\wedge K_0}\left(\tau_{j\wedge K_0}+\overline{m}\right).\label{11.7}
\end{align} 
Using equation (\ref{11.2}) we obtain 
\begin{align*}
\sum_{j=1}^{K}\sum_{i=1}^{j\wedge K_0-1}\tau_i\left(\Delta_i-\Delta_{i+1}\right)&\leq C\gamma^2\sum_{j=1}^{K}\sum_{i=1}^{j\wedge K_0-1}\frac{\left(\Delta_i-\Delta_{i+1}\right)}{\Delta_i^2}\overline{\log}\left(\frac{T\Delta_i^2}{4608\gamma^2}\right)\\
&\leq C\gamma^2\sum_{j=1}^{K}\int_{\Delta_{j\wedge K_0}}^{\Delta_1}\frac{1}{x^2}\overline{\log}\left(\frac{Tx^2}{4608\gamma^2}\right)dx\\
&\leq C\gamma^2\sum_{j=1}^{K}\frac{1}{\Delta_{j\wedge K_0}}\overline{\log}\left(\frac{T\Delta_{j\wedge K_0}^2}{4608\gamma^2}\right).
\end{align*}
The part involving $\overline{m}$ in equation (\ref{11.7})can be bounded by
\begin{align*}
\overline{m}\sum_{j=1}^{K}\sum_{i=1}^{j\wedge K_0-1}\left(\Delta_i-\Delta_{i+1}\right)+\sum_{j=1}^K\Delta_{j\wedge K_0}\overline{m}&\leq \overline{m}K.  
\end{align*}
Bringing things together we have 
\begin{align}
\sum_{i=1}^{K_0}\Delta_i\mathbb{P}\left(\mathcal{C}_{i-1}\cap\mathcal{A}_i^c\right)&\leq C\left(\frac{\gamma^2}{T}\sum_{j=1}^K\frac{1}{\Delta_{j\wedge K_0}}\overline{\log}\left(\frac{T\Delta_{j\wedge K_0}^2}{4608\gamma^2}\right)+\frac{K\overline{m}}{T}\right)
\end{align}
Combining this with equation (\ref{FirstTermLip}) and (\ref{ExpRegretLip}) we obtain
\begin{align}
\mathbb{E}\left[R_N\left(\hat{\pi}\right)\right]&\leq C\biggl(\frac{\gamma^2n}{T}\sum_{j=1}^K\frac{1}{\Delta_{j\wedge K_0}}\overline{\log}\left(\frac{T\Delta_{j\wedge K_0}^2}{4608\gamma^2}\right)
+\frac{n\gamma^2}{T}\sum_{j=1}^{K_0}\frac{1}{\Delta_{j}}\overline{\log}\left(\frac{T\Delta_{j}^2}{4608\gamma^2}\right)\nonumber\\
&+\sum_{i=1}^{K_0}B_i(b_i)\Delta_i+n\Delta_{K_0+1}+\frac{n\overline{m}K}{T}\biggr)\nonumber\\
&\leq C\biggl(\left(1+\frac{n}{T}\right) \gamma^2\sum_{j=1}^{K_0}\frac{1}{\Delta_{j}}\overline{\log}\left(\frac{T\Delta_{j}^2}{4608\gamma^2}\right)+\frac{\gamma^2n}{T}\frac{K-K_0}{\Delta_{K_0}}\overline{\log}\left(\frac{T\Delta_{K_0}^2}{4608\gamma^2}\right)\nonumber\\
&+n\Delta_{K_0+1}+\frac{n\overline{m}K}{T}\biggr).\label{11.8}
\end{align}
Fix $\Delta>0$ and let $K_0$ be such that $\Delta_{K_0+1}=\Delta^-$. Define the function 
\begin{align*}
\phi(x)=\frac{1}{x}\overline{\log}\left(\frac{Tx^2}{4608\gamma^2}\right),
\end{align*}
and notice that $\phi(x)\leq 2e^{-1/2}\phi(x')$ for any $x\geq x'\geq 0$. Using this with $x'=\Delta$ and $x=\Delta_i$ for $i\leq K_0$ we obtain 
\begin{align}
\mathbb{E}\left[R_N(\hat{\pi})\right]\leq C\left(\frac{\gamma^2K}{\Delta}\left(1+\frac{n}{T}\right)\overline{\log}\left(\frac{T\Delta^2}{4608\gamma^2}\right)+n\Delta^{-}+\frac{n\overline{m}K}{T}\right).\label{11.9}
\end{align}
\qed
\end{proof}
%Note that the bound in equation (\ref{11.8}) alternatively can be written as
%\begin{align*}
%\mathbb{E}\left[R_N(\hat{\pi})\right]\leq \frac{48(23757+\overline{m})\gamma^2K}{\Delta}\left(1+\frac{n}{T}\right)\overline{\log}\left(\frac{T\Delta^2}{4608\gamma^2}\right)+n\Delta^{-}
%\end{align*}

\begin{proof}[Proof of Theorem \ref{NoCovRegret}]
Consider the sequential treatment policy with $\gamma=\mathcal{K}$ and $T=n$. From equation (\ref{11.8}) it follows that for any $K_0\leq K$
\begin{align}
\mathbb{E}\left[R_N(\hat{\pi})\right]
&\leq C\biggl(\mathcal{K}^2\sum_{j=1}^{K_0}\frac{1}{\Delta_{j}}\overline{\log}\left(\frac{n\Delta_{j}^2}{4608\mathcal{K}^2}\right)\notag\\
&+\mathcal{K}^2\frac{K-K_0}{\Delta_{K_0}}\overline{\log}\left(\frac{n\Delta_{K_0}^2}{4608\mathcal{K}^2}\right)
+n\Delta_{K_0+1}+\frac{n\overline{m}K}{T}\biggr).\label{corolLip1}
\end{align}
This can be used to show
\begin{align}
\mathbb{E}\left[R_N(\hat{\pi})\right]\leq C\min\left(\mathcal{K}^2\sum_{j=1}^{K}\frac{1}{\Delta_{j}}\overline{\log}\left(\frac{n\Delta_{j}^2}{4608\mathcal{K}^2}\right)+\overline{m}K,\sqrt{n\mathcal{K}^3\overline{m}K\log(\overline{m}K/\mathcal{K})}\right).\label{corolLip22}
\end{align}
where the first part of the upper bound in (\ref{corolLip22}) follows by using (\ref{corolLip1}) with $K=K_0$. The second part follows from Lemma \ref{LipschitzBound} by choosing $\Delta=\sqrt{4608(23757+\overline{m})\mathcal{K}K\log((23757+\overline{m})K/\mathcal{K})/n}$. 
\end{proof}

\begin{proof}[Proof of Theorem \ref{Thm3.2}]
The idea of the proof is similar to the one used in the proof of Theorem 1 in \cite{auer2002}. For now we will keep $N$ fixed.\footnote{In other words all calculations are done conditional on $N$.} First, note that for any positive integer $l$
\begin{align*}
T_i(N)
=
1+\sum_{t=K+1}^N\bm{1}_{\cbr[0]{\hat{\pi}_t=i}}
\leq
l+\sum_{t=K+1}^N\bm{1}_{\cbr[0]{\hat{\pi}_t=i,\ T_i(t-1)\geq l}}
\leq
l+\sum_{t=K+1}^N\bm{1}_{\cbr[0]{T_i(t-1)\geq l}}
\leq
l+N\bm{1}_{\cbr[0]{T_i(N-1)\geq l}}
\end{align*}
It remains to bound the probability of the event $\cbr[0]{T_i(N-1)\geq l}$. This is the probability that treatment $i$ has not been eliminated before having been assigned at least $l$ times. Define $\tilde{b}_i=\max\{b:\sum_{j=1}^bm_{i,j}< l\}$ and note that if treatment $i$ is assigned $l$ times then it cannot have been eliminated after $\tilde{b}_i$ batches. In particular, it cannot have been eliminated by the optimal treatment. Let 
$$\mathcal{A}_i=\cbr[0]{\text{the optimal treatment has not been eliminated after batch $\tilde{b}_i$}}$$
For any $t$ we have that $\cbr[0]{T_i(N-1)\geq l}\subseteq (\cbr[0]{T_i(N-1)\geq l}\cap \mathcal{A}_i)\cup \mathcal{A}_i^c$.  Thus, $(\cbr[0]{T_i(N-1)\geq l}\cap \mathcal{A}_i)\subseteq \cbr[1]{\hat{\Delta}_i(\underline{B}(\tilde{b}_i))-\Delta_i\leq \gamma\epsilon_{\underline{B}(\tilde{b}_i)}-\Delta_i}$ which implies
\begin{align*}
\mathbb{E}\left[T_i(N)\right]&\leq l+N\mathbb{P}\left(\cbr[0]{T_i(N-1)\geq l}\cap \mathcal{A}_i\right)+ N\mathbb{P}\left(\mathcal{A}_i^c\right)\\
&\leq l+N\mathbb{P}\left(\hat{\Delta}_i(\underline{B}(\tilde{b}_i))-\Delta_i\leq \gamma\epsilon_{\underline{B}(\tilde{b}_i)}-\Delta_i\right)+ N\mathbb{P}\left(\mathcal{A}_i^c\right)
\end{align*}
From equations (\ref{11.1}) and (\ref{11.2}) of Lemma \ref{LipschitzBound} we have that (where $\tau_i$ is defined in the said lemma)
\begin{align*}
\tau_i\leq \frac{C\mathcal{K}^2}{\Delta_i^2}\overline{\log}\left(\frac{n}{4609\mathcal{K}^2}\right)
\end{align*}
Thus, by letting $l=\bar{m}+\lceil\frac{4609\mathcal{K}^2}{\Delta_i^2}\overline{\log}\left(\frac{n}{4609\mathcal{K}^2}\right)\rceil$ it follows that $\tau_i\leq l-\bar{m}\leq \underline{B}(\tilde{b}_i)<l$. In particular, we have that $\gamma\epsilon_{\underline{B}(\tilde{b}_i)}\leq \gamma\epsilon_{\tau_i}\leq \frac{2}{3}\Delta_i$. Hence,
\begin{align*}
\mathbb{P}\left(\hat{\Delta}_i(\underline{B}(\tilde{b}_i))-\Delta_i\leq \gamma\epsilon_{\underline{B}(\tilde{b}_i)}-\Delta_i\right)&\leq \mathbb{P}\left(|\hat{\Delta}_i(\underline{B}(\tilde{b}_i))-\Delta_i|\geq \frac{1}{3}\Delta_i\right)\\
&\leq C\mathbb{E}\left[ \exp\left(-\frac{\underline{B}(\tilde{b}_i)\Delta_i^2}{1536}\right)\right]\\
&\leq C\exp\left(-\frac{(l-\bar{m})\Delta_i^2}{1536}\right)\\
&\leq C\frac{\mathcal{K}^2}{n}. 
\end{align*}
Next, we bound the term involving $\mathcal{A}_i^c$. To this end we start by noting that if the optimal treatment does not survive until batch $\tilde{b}_i$, then it must have been eliminated in one of the batches before $\tilde{b}_i$.
\begin{align}
\mathbb{P}\left(\mathcal{A}_i^c\right)&\leq \sum_{j=1}^K\mathbb{P}\left(\exists s\leq \underline{B}(\tilde{b}_i):\hat{\Delta}_j(s)\leq -\gamma\epsilon_s\right)\\
&\leq \sum_{j=1}^K\mathbb{P}\left(\exists s\leq l:\hat{\Delta}_j(s)\leq -\gamma\epsilon_s\right)\\
&\leq CK\frac{l}{n}\label{eq:Ac},
\end{align} 
where the last inequality follows from an application of Lemma A.1 in \cite{CovBandit}. Bringing things together, taken expectations with respect to $N$ and using Jensen's inequality in order to replace $N$ with its expectation yields the desired result.
\end{proof}

\begin{proof}[Proof of Theorem \ref{thm:oos}]
The notation is as in the proof of Lemma \ref{LipschitzBound}. Assume that $\bar{i}\neq *$. Clearly, $T_{\bar{i}}(n)\geq \frac{n}{K+1}\geq \sbr[1]{C\frac{\mathcal{K}^2}{\Delta_{\bar{i}}^2}\overline{\log}\del[1]{\frac{n\Delta_{\bar{i}}^2}{4608\mathcal{K}^2}}+\overline{m}}$. Thus, treatment $\bar{i}$ can not have been eliminated after batch $b_{\bar{i}}$. Therefore, for all $i=1,...,K$, 
\begin{align*}
\cbr[1]{\bar{i}=i}
\subseteq
\cbr[2]{T_{i}(n)\geq \frac{n}{K+1}}
\subseteq 
\mathcal{B}_{i}^c
\subseteq
(\mathcal{B}_{i}^c\cap\mathcal{A}_{i})\cup\mathcal{A}_{i}^c
\end{align*}
which implies that we can bound $\E(r_n)$ as
\begin{align}
\E(r_n)
\leq
\sum_{i=1}^K\Delta_i\mathbb{P}(\bar{i}=i)
\leq
\sum_{i=1}^K \Delta_i\del[1]{\mathbb{P}(\mathcal{B}_{i}^c\cap\mathcal{A}_i)+\mathbb{P}(\mathcal{A}_i^c)}\label{eq:simple_regret}
\end{align}
Let $i\in\cbr[0]{1,...,K}$ be arbitrary. Inspecting the argument leading to (\ref{eq:B_icAB_i-1}) in the proof of Lemma \ref{LipschitzBound} reveals that the upper bound there also upper bounds $\mathbb{P}(\mathcal{B}_{i}^c\cap\mathcal{A}_i)$. Thus,  $\mathbb{P}(\mathcal{B}_{i}^c\cap\mathcal{A}_i)\leq C\frac{\tau_i}{T}$. Similarly, $\mathbb{P}(\mathcal{A}_i^c)$ has been bounded by $CK\del[1]{\bar{m}+\lceil\frac{4609\mathcal{K}^2}{\Delta_i^2}\overline{\log}\del[1]{\frac{n}{4609\mathcal{K}^2}}\rceil}/n$ in the proof of Theorem \ref{Thm3.2} (see (\ref{eq:Ac})). Hence, upon using that $\gamma=\mathcal{K}$, $T=n$ as well as $\tau_i\leq C\frac{\mathcal{K}^2}{\Delta_i^2}\overline{\log}\del[2]{\frac{n\Delta_i^2}{4608\mathcal{K}^2}}+\overline{m}$ one has
\begin{align*}
\E(r_n)
\leq
CK\sum_{i=1}^K\frac{\frac{\mathcal{K}^2}{\Delta_i}\overline{\log}(\frac{n\Delta_i^2}{\mathcal{K}^2})+\Delta_i\bar{m}}{n}
\end{align*}
as claimed.

We now turn to the uniform bound. From (\ref{eq:simple_regret}) we get that for any $\Delta>0$
\begin{align*}
\E(r_n)
\leq
\sum_{i:\Delta_i\geq \Delta} \Delta_i\del[1]{\mathbb{P}(\mathcal{B}_{i}^c\cap\mathcal{A}_i)+\mathbb{P}(\mathcal{A}_i^c)}+\Delta
\end{align*}
Combining this with the upper bounds on $\mathbb{P}(\mathcal{B}_{i}^c\cap\mathcal{A}_i)$ and $\mathbb{P}(\mathcal{A}_i^c)$ used above and using $\Delta_i\leq 1$ for all $i=1,...,K$ yields that
\begin{align*}
\E(r_n)
\leq
CK\sum_{i:\Delta_i>\Delta}\frac{\frac{\mathcal{K}^2}{\Delta_i}\overline{\log}(\frac{n\Delta_i^2}{\mathcal{K}^2})+\Delta_i\bar{m}}{n}+\Delta
\leq
CK^2\frac{\frac{\mathcal{K}^2}{\Delta}\overline{\log}(\frac{n}{\mathcal{K}^2})}{n}+\Delta+ C\frac{K^2\bar{m}}{n}
\end{align*}
Minimizing the right hand side of the above display with respect to $\Delta$ results in $\Delta=CK\sqrt{\frac{\mathcal{K}^2\overline{\log}(\frac{n}{\mathcal{K}^2})}{n}}$ which upon insertion into the above display yields 
\begin{align*}
\E(r_n)
\leq
CK\sqrt{\frac{\mathcal{K}^2\overline{\log}(\frac{n}{\mathcal{K}^2})}{n}}+C\frac{K^2\bar{m}}{n}.
\end{align*}
\end{proof}

\subsection{Proof of Theorems in Section \ref{Sec:Cov}}

\begin{proof}[Proof of Theorem \ref{RegretBoundBinsLipschitzNoMargin}] 
It is convenient to define the constant $c_1=6L\mathcal{K}+1$, which will enter several of the bounds derived below. Furthermore, we let $c$ denote a positive constant which may change from line to line. By the construction of the treatment policy it follows that the regret can be written as $R_N(\bar{\pi})=\sum_{j=1}^FR_j(\bar{\pi})$, where
\begin{align*}
R_j(\bar{\pi})=\sum_{t=1}^N\left(f^{(\star)}(X_{t})-f^{(\hat{\pi}_{B_j},N_{B_j}(t))}(X_{t})\right)1_{\left(X_{t}\in B_j\right)}.
\end{align*}
We start by providing an upper bound on the welfare lost for each group $B_j$ due to the policy targeting $f_j^{(*)}=\max_{1\leq i\leq K} f(\bar{\mu}^{(i)}_j, (\bar{\sigma}^2)^{(i)}_j)$ instead of $f^{(\star)}(x)$. To this end note that
\begin{align*}
f^{(\star)}(x)&=\max_{1\leq i\leq K+1} f(\mu^{(i)}(x), (\sigma^2)^{(i)}(x))\\
&\leq
\max_{1\leq i\leq K+1} f(\bar{\mu}^{(i)}_j, (\bar{\sigma}^2)^{(i)}_j)
+
\mathcal{K}\max_{1\leq i\leq K+1}|\mu^{(i)}(x)-\bar{\mu}^{(i)}_j|+\mathcal{K}\max_{1\leq i\leq K+1}|(\sigma^2)^{(i)}(x)-(\bar{\sigma}^2)^{(i)}_j|.
\end{align*}
Fix $x\in B_j$ and $i\in\cbr[0]{1,...,K+1}$. Then, for all $y\in B_j$, one has by the ($L,\beta$)-H\"{o}lder continuity of $\mu^{(i)}(x)$
\begin{align*}
\mu^{(i)}(x)
\leq \mu^{(i)}(y)+|\mu^{(i)}(x)-\mu^{(i)}(y)|
\leq
\mu^{(i)}(y)+LV_j^{\beta} ,
\end{align*}
which upon integrating over $y$ yields $\mu^{(i)}(x)\leq \bar{\mu}^{(i)}_j+LV_j^{\beta}$. Similarly, it holds that $\mu^{(i)}(x)\geq \bar{\mu}^{(i)}_j-LV_j^{\beta}$ such that for all $x\in B_j$ we have $|\mu^{(i)}(x)-\bar{\mu}^{(i)}_j|\leq LV_j^{\beta}$. Next, note that the map $[0,1]\ni z\mapsto z^2$ is Lipschitz continuous with constant 2 which implies that $({\mu^{(i)}(x)})^2$ is $(2L, \beta)$-H\"{o}lder. This, together with the $(L, \beta)$-H\"{o}lder continuity of $(\sigma^2)^{(i)}(x)=\E({Y_t^{(i)}}^2|X_t=x)-({\mu^{(i)}(x)})^2$ implies that $\E({Y_t^{(i)}}^2|X_t=x)$ is $(3L, \beta)$-H\"{o}lder continuous. Thus, by similar arguments as above $\envert[0]{\E({Y_t^{(i)}}^2|X_t=x)-\E({Y_t^{(i)}}^2|X_t\in B_j)}\leq 3LV^{\beta}$ for all $x\in B_j$. The mean value theorem also yields that $\envert[0]{({\mu^{(i)}(x)})^2-{\bar{\mu}^{(i)}_j}^2}\leq 2LV_j^{\beta}$ for all $x\in B_j$. Therefore, 
\begin{align*}
\envert[1]{(\sigma^2)^{(i)}(x)-(\bar{\sigma}^2)^{(i)}_j}
&=
\envert[1]{\E({Y_t^{(i)}}^2|X_t=x)-({\mu^{(i)}(x)})^2-[\E({Y_t^{(i)}}^2|X_t\in B_j)-{\bar{\mu}^{(i)}_j}^2]}\\
&\leq 5LV_j^{\beta}
\end{align*}

%By an identical argument the ($L,\beta$)-H\"{o}lder continuity of $(\sigma^2)^{(i)}(x)$ implies that $|(\sigma^2)^{(i)}(x)-(\bar{\sigma}^2)_j^{(i)}|
%\leq LV^{\beta/2}$ for all $x\in B_j$. The map $[0,1]\ni z\mapsto z^2$ is Lipschitz continuous with constant 2 which by the same arguments as above implies that
%
%\begin{align*}
%\envert[2]{{\mu^{(i)}}^2(x)-\frac{1}{P_X(B_j)}\int_{B_j}{\mu^{(i)}}^2(x)dP_X(x)}
%\leq
%2LV^{\beta/2}.
%\end{align*}
%
%Therefore,
%\begin{align*}
%&\envert[1]{(\bar{\sigma}^2)_j^{(i)}-Var(Y^{(i)}|X_t\in B_j)}\\
%=
%&\envert[2]{\frac{1}{P_X(B_j)}\int_{B_j}\sbr[1]{E({Y^{(i)}}^2|X_t=x)-{\mu^{(i)}}^2(x)}dP_X(x)-\sbr[1]{E({Y^{(i)}}^2|X_t\in B_j)-[E(Y^{(i)}|X_t\in B_j)]^2}}\\
%&=
%\envert[2]{[E(Y^{(i)}|X_t\in B_j)]^2-\frac{1}{P_X(B_j)}\int_{B_j}{\mu^{(i)}}^2(x)dP_X(x)}
%\end{align*}
\noindent Thus, for $x\in B_j$,
\begin{align*}
f^{(\star)}(x)
\leq
f_j^{(*)}+c_1V_j^{\beta}.
\end{align*}
A similar argument to the above yields that for all $x\in B_j$
\begin{align*}
f^{(\bar{\pi}_t)}(x)
\geq
\bar{f}_j^{(\bar{\pi}_t)}-c_1V_j^{\beta}.
\end{align*}
Next we define $\tilde{R}_j(\bar{\pi})=\sum_{t=1}^{N_{B_j}(N)}\del[2]{f_j^{(*)}-\bar{f}_j^{(\hat{\pi}_{B_j,t)})}}$. This corresponds to the regret associated with a  treatment problem without covariates where treatment $i$ yields reward $\bar{f}_j^{(i)}$, and the best treatment yields $f_j^{(*)}=\max_i \bar{f}_j^{(i)}\leq \bar{f}_j^\star$. Therefore, we can write 
\begin{align*}
R_j(\bar{\pi})
&=
\sum_{t=1}^N\left(f^{(\star)}(X_{t})-f^{(\hat{\pi}_{B_j,N_{B_j}(t)})}(X_{t})\right)1_{\left(X_{t}\in B_j\right)}
\leq
\sum_{t=1}^N\left(f_j^{(*)}-\bar{f}_j^{(\hat{\pi}_{B_j,N_{B_j}(t)})}+2c_1V_j^{\beta}\right)1_{\left(X_{t}\in B_j\right)}\\
&=
\tilde{R}_j(\bar{\pi})+2c_1V_j^{\beta}N_{B_j}(N),
\end{align*}
where $N_{B_j}(N)$ is the number of observations falling in bin $j$ given that there are $N$ observations in total. Taking expectations, and using that the density of $X_t$ is bounded from above implies that $\mathbb{E}\left[N_j(N)\right]\leq \bar{c}n\bar{B}_j$ gives
\begin{align}
\mathbb{E}\left[R_j(\bar{\pi})\right]
\leq
\mathbb{E}\left[\tilde{R}_j(\bar{\pi})\right]+n\bar{c}\bar{B}_jc_1V_j^{\beta}.\label{RegretTransformLipschitzNoMargin}
\end{align}
Since $\mathbb{E}\sbr[1]{\tilde{R}_j(\bar{\pi})}$ is the expected regret of a treatment problem without covariates we can apply Theorem \ref{LipschitzBound} with the following values 
\begin{align*}
\Delta=\sqrt{\frac{\overline{m}K\log(\overline{m}K)}{n\bar{B}_j}},\quad \gamma=\mathcal{K}L,\quad T=n\bar{B}_j,
\end{align*}
for each bin $j=1,...,F$ to obtain the following bound on the regret accumulated across any group $j$:
\begin{align*}
\mathbb{E}\left[R_j(\bar{\pi})\right]&\leq C\left[\sqrt{\overline{m}K\log(\overline{m}K)n\bar{B}_j}+n\bar{B}_jV_j^{\beta}\right].
\end{align*}
Thus, adding up the expected regret over all $F$ groups yields
\begin{align*}
\mathbb{E}\left[R_N(\bar{\pi})\right]\leq C\sum_{j=1}^F\left[\sqrt{\overline{m}K\log(\overline{m}K)n\bar{B}_j}+n\bar{B}_jV_j^{\beta}\right].
\end{align*}
\end{proof}

\begin{proof}[Proof of Corollary \ref{Cor:SimpleBins}]
The result follows from Theorem \ref{RegretBoundBinsLipschitzNoMargin} upon noting that  $\bar{B}_j=P^{-d}$ and $V_j=\sqrt{d}P^{-1}$ for $j=1,...,P$ (and ignoring the constant $\sqrt{d}$) with $P$ as in the theorem completes the proof.
\end{proof}

\begin{proof}[Proof of Theorem \ref{CovMiniMax}]
Define $\underline{S}=\mathcal{S}(\alpha, \beta, L, \mathcal{K},d, \bar{c}, \bar{m})$ to be the subset of $\mathcal{S}(\beta, L, \mathcal{K},d, \bar{c}, \bar{m})=:\bar{S}$ which also satisfies the margin condition. Then, for $\bar{m}=1, K=2$, all $\alpha>0$ and any policy $\pi$
\begin{align*}
\sup_{\bar{S}}\E\left[R_N(\pi)\right]
&\geq
\sup_{\underline{S}}\E\left[R_N(\pi)\right]
\geq
Cn^{1-\frac{\beta+\beta\alpha}{2\beta+d}}
=
Cn^{1-\frac{\beta}{2\beta+d}}\cdot n^\frac{-\beta\alpha}{2\beta+d}
\end{align*}
for a constant $C$ not depending on $\alpha$ and where the second inequality follows from Theorem 4.1 in \cite{rigollet2010nonparametric}.
Now notice that for every $n$ there exists an $\alpha$ sufficiently small such that $n^\frac{-\beta\alpha}{2\beta+d}>\frac{1}{2}$. Thus, $\sup_{\bar{S}}\E\left[R_N(\pi)\right]\geq \frac{C}{2}n^{1-\frac{\beta}{2\beta+d}}$ as desired.
\end{proof}

\begin{proof}[Proof of Theorem \ref{RegretExogenousGroups}]
The proof follows from only considering the contribution to regret coming from $\mathbb{E}[\tilde{R}_j(\bar{\pi})]$ in (\ref{RegretTransformLipschitzNoMargin}) of the proof of Theorem \ref{RegretBoundBinsLipschitzNoMargin}. 
\end{proof}

\begin{proof}[Proof of Theorem \ref{ISR}]
The proof is similar to that found in \cite{Tsybakov2004} and \cite{rigollet2010nonparametric}. Fix $\delta<\delta_0$. Then, for any policy $\pi$
\begin{align*}
R_N(\pi)&\geq\delta\sum_{t=1}^N1_{\{f^{(\star)}(X_t)-f^{(\pi_t)}(X_t)>\delta\}}\\
&\geq \delta\left(S_N(\pi)- \sum_{t=1}^N1_{\{0<|f^{(\star)}(X_t)-f^{(\pi_t)}(X_t)|\leq\delta\}}\right)\\
&\geq  \delta\left(S_N(\pi)- \sum_{t=1}^N1_{\{0<|f^{(\star)}(X_t)-f^{(\sharp)}(X_t)|\leq\delta\}}\right)
\end{align*}
Since $S_n(\pi)\leq N$ there exists a $c>0$ not depending on $N$ such that $\left(\frac{S_n(\pi)}{cn}\right)^{\frac{1}{\alpha}}<\delta_0$. Thus, we can set $\delta=\left(\frac{S_n(\pi)}{cn}\right)^{\frac{1}{\alpha}}$ and use the margin condition upon integration on both sides of the above display to get (\ref{ISR1}). To obtain (\ref{ISR2}) insert (\ref{part2}) into (\ref{ISR1}). 
\end{proof}

\begin{proof}
The idea of the proof is based on localization as in the proof of Theorem \ref{RegretBoundBinsLipschitzNoMargin}. As in that proof let $c_1=6L\mathcal{K}+1$. Note that $Nr_N=N\sum_{j=1}^Fr_j$ where
\begin{align*}
r_j=\del[1]{f^{\star}(X_t)-f^{I_{n+1}}(X_t)}1_{\cbr[0]{X_t\in B_j}}
\end{align*} 
Following the arguments in Theorem \ref{RegretBoundBinsLipschitzNoMargin} gives $f^{\star}(x)\leq f_j^{(*)}+c_1V_j^\beta$ and $f^{I_{n+1}}(x)\geq \bar{f}_j^{I_{n+1}}-c_1V_j^\beta$ for all $x\in B_j$. Thus,
\begin{align*}
r_j
\leq 
\del[1]{f_j^{(*)}-\bar{f}_j^{I_{n+1}}+2c_1V_j^\beta}1_{\cbr[0]{X_t\in B_j}} 
\end{align*}
which upon using that $\mathbb{P}(B_j)\leq \bar{c}\bar{B}_j$ yields that
\end{proof}

\begin{proof}[Proof of Theorem \ref{Ethic1}]
The proof is identical to the proof of Theorem \ref{Thm3.2} but with with $n=\mathbb{E}(N)$ replaced by $\bar{c}n\bar{B}_j\geq \mathbb{E}\del[1]{N_{B_j}(N)}$. Thus, the expected number of assignments is replaced by an upper bound on the expected number of individuals falling in group $j$ and the result of Theorem \ref{Thm3.2} is applied on each group separately.
\end{proof}

\begin{proof}[Proof of Theorem \ref{RegretBound_DiscCov}] 
The proof is similar to the proof of Theorem \ref{RegretBoundBinsLipschitzNoMargin} once we fix a value of the discrete covariates. Let $c$ denote a positive constant which may change from line to line. By the construction of the treatment policy it follows that the regret can be written as $R_N(\tilde{\pi})=\sum_{a\in A}\sum_{j=1}^{F_a}R_{a,j}(\hat{\pi})$, where
\begin{align*}
R_{a,j}(\hat{\pi})=\sum_{t=1}^N\left(f^{(\star)}(X_{t})-f^{(\hat{\pi}_{(\cbr[0]{a}\times B_{a,j}), N_{{a,j}}(t)})}(X_{t})\right)1_{\left(X_{t,D}=a,X_{t,C}\in B_{a,j}\right)}.
\end{align*}
For any bin $B_{a,j}$ define
\begin{align*}
\bar{\mu}^{(i)}_{a,j}=\E(Y_{t}^{(i)}|X_{t,D}=a, X_{t,C}\in B_{a,j})=\frac{1}{\mathbb{P}_X(X_{t,D}=a,X_{t,C}\in B_{a,j})}\int_{a\times B_{a,j}}\mu^{(i)}(x)d\mathbb{P}_X(x)
\end{align*}
and
\begin{align*}
(\bar{\sigma}^2)^{(i)}_{a,j}
&=
Var(Y_t^{(i)}|X_{t,D}=a,X_{t,C}\in B_{a,j})\\
&=
\E({Y_t^{(i)}}^2|X_{t,D}=a,X_{t,C}\in B_{a,j})-[\E(Y_t^{(i)}|X_{t,D}=a,X_{t,C}\in B_{a,j})]^2
\end{align*}
Furthermore, let $\bar{f}_{a,j}^{(i)}=f(\bar{\mu}^{(i)}_{a,j},(\bar{\sigma}^2)^{(i)}_{a,j})$ with $f_{a,j}^{(*)}=\max_{1\leq i\leq K+1}\bar{f}_{a,j}^{(i)}$. By exactly the same arguments as in the proof of Theorem \ref{RegretBoundBinsLipschitzNoMargin} we now get that for $x\in \cbr[0]{a}\times B_{a,j}$,
\begin{align*}
f^{(\star)}(x)
\leq
f_{a,j}^{(*)}+cV_{a,j}^{\beta}.
\end{align*}
as well as,
\begin{align*}
f^{(\hat{\pi}_{(a\times B_{a,j}), N_{a,j}(t)})}(x)
\geq
\bar{f}_{a,j}^{(\hat{\pi}_{(\cbr[0]{a}\times B_{a,j}), N_{a,j}(t)})}-cV_{a,j}^{\beta}.
\end{align*}
Next we define $\tilde{R}_{a,j}(\tilde{\pi})=\sum_{t=1}^{N_{a,j}}\del[2]{f_{a,j}^{(*)}-\bar{f}_{a,j}^{(\hat{\pi}_{(\cbr[0]{a}\times B_{a,j}),t})}}$. This corresponds to the regret associated with a treatment problem without covariates where treatment $i$ yields reward $\bar{f}_{a,j}^{(i)}$, and the best treatment yields $f_{a,j}^{(*)}=\max_i \bar{f}_{a,j}^{(i)}\leq \bar{f}_{a,j}^\star$. Therefore, we can write 
\begin{align*}
R_{a,j}(\hat{\pi})
&=
\sum_{t=1}^N\left(f^{(\star)}(X_{t})-f^{(\hat{\pi}_{(a\times B_{a,j}), N_{a,j}(t)})}(X_{t})\right)1_{\left(X_{t,D}=a,X_{t,C}\in B_{a,j}\right)}\\
&\leq
\sum_{t=1}^N\left(f_{a,j}^{(*)}-\bar{f}_{a,j}^{(\hat{\pi}_{(\cbr[0]{a}\times B_{a,j}), N_{a,j}(t)})}+2cV_{a,j}^{\beta}\right)1_{\left(X_{t,D}=a,X_{t,C}\in B_{a,j}\right)}\\
&=
\tilde{R}_{a,j}(\hat{\pi})+2cV_{a,j}^{\beta}N_{a,j}(N),
\end{align*}
where $N_{a,j}(N)$ is the number of observations for which $x\in a\times B_{a,j}$ given that there are $N$ observations in total. Taking expectations, and using that $N$ is independent of all other random variables implies $\mathbb{E}\left[N_{a,j}(N)\right]\leq n\mathbb{P}(X_{t,D}=a,X_{t,C}\in B_{a,j})$ gives
\begin{align*}
\mathbb{E}\left[R_{a,j}(\hat{\pi})\right]
\leq
\mathbb{E}\left[\tilde{R}_{a,j}(\hat{\pi})\right]+n\mathbb{P}(X_{t,D}=a,X_{t,C}\in B_{a,j})cV_{a,j}^{\beta}.%\label{RegretTransformLipschitzNoMargin}
\end{align*}
Since $\mathbb{E}\sbr[1]{\tilde{R}_{a,j}(\hat{\pi})}$ is the expected regret of a treatment problem without covariates we can apply Theorem \ref{LipschitzBound} with the following values 
\begin{align*}
\Delta=\sqrt{\frac{\overline{m}K\log(\overline{m}K)}{n\mathbb{P}(X_{t,D}=a,X_{t,C}\in B_{a,j})}},\quad \gamma=\mathcal{K}L,\quad T=n\mathbb{P}(X_{t,D}=a,X_{t,C}\in B_{a,j}),
\end{align*}
for each $a\in A$ and $B_{a,j},\ j=1,...,F_{a}$ to obtain the following bound on the regret accumulated across any group:
\begin{align*}
\mathbb{E}\left[R_{a,j}(\bar{\pi})\right]&\leq c\left[\sqrt{\overline{m}K\log(\overline{m}K)n\mathbb{P}(X_{t,D}=a,X_{t,C}\in B_{a,j})}+n\mathbb{P}(X_{t,D}=a,X_{t,C}\in B_{a,j})_jV_j^{\beta}\right].
\end{align*}
Thus, adding up the expected regret over all groups yields
\begin{align*}
\mathbb{E}\left[R_N(\tilde{\pi})\right]\leq c\sum_{a\in A}\sum_{j=1}^{F_a}\left[\sqrt{\overline{m}K\log(\overline{m}K)n\mathbb{P}(X_{t,D}=a,X_{t,C}\in B_{a,j})}+n\mathbb{P}(X_{t,D}=a,X_{t,C}\in B_{a,j})V_{a,j}^{\beta}\right].
\end{align*}
\end{proof}

\subsection{Proof of Theorems in Section \ref{Sec:Delay}}
\begin{proof}[Proof of Theorem \ref{LipschitzBoundDelay}]
Define $\epsilon_{s}=u(s,T)=16\sqrt{\frac{2\bar{a}^2}{s}\overline{\log}\left(\frac{T}{s}\right)}$. In the following we will distinguish between two types of batches, namely batches of individuals that have to be assigned a treatment, and batches of information on the outcome of previously assigned treatments. The latter type of batches will be the key object of interest when determining whether or not to eliminate a given treatment, whereas the former will be relevant when counting the total regret from running the treatment policy. In this proof we let $\underline{B}(s)$ denote the minimal number of observed outcomes per treatment based on $s$ batches of information. Consider a batch $b$ of information. Recall that if the optimal treatment as well as some treatment $i$ have not been eliminated, then the optimal treatment will eliminate treatment $i$ if $\hat{\Delta}_i(\underline{B}(b))\geq\gamma\epsilon_{\underline{B}(b)}$, and treatment $i$ will eliminate the optimal treatment if $\hat{\Delta}_i(\underline{B}(b))\leq-\gamma\epsilon_{\underline{B}(b)}$.

To be able to say something about when either of these two events occurs we introduce the (unknown) quantity, $\tau_i^*$, which is defined through the relation
\begin{align*}
\Delta_i=24\gamma\sqrt{\frac{2\bar{a}^2}{\tau_i^*}\overline{\log}\left(\frac{T}{\tau_i^*}\right)}, \qquad i=1,...,K.
\end{align*}
Since $\tau_i^*$ in general will not be an integer, we also define $\tau_i=\ceil{\tau_i^*}$. Next introduce the hypothetical batch (of information) $b_i=\min\{l:\underline{B}(l)\geq \tau_i^*\}$. It is the first batch of information after which we have more than $\tau_i^*$ observations of the outcome of treatment $i$. Notice that
\begin{align}
\tau_i^*&\leq \underline{B}(b_i)\leq C\left(\frac{\bar{a}^2\gamma^2}{\Delta_i^2}\overline{\log}\left(\frac{T\Delta_i^2}{1152\bar{a}^2\gamma^2}\right)+\overline{m}\right),\label{delay.1}\\
\tau_i&\leq \underline{B}(b_i),\label{delay.2}\\
\underline{B}(b_i)&\leq \tau_i+\overline{m},\label{delay.3}
\end{align}
Notice that $1\leq \tau_1\leq ...\leq \tau_K$ and $1\leq b_1 \leq ...\leq b_K$.
Define the following events:
\begin{align*}
\mathcal{A}_i&=\{\text{The optimal treatment has not been eliminated after batch }b_i\text{ has been observed} \},\\
\mathcal{B}_i&=\{\text{Every treatment }j\in\{1,...,i\} \text{ has been eliminated after batch } b_j\text{ has been observed}\}.\\
\end{align*}
Furthermore, let $\mathcal{C}_i=\mathcal{A}_i\cap \mathcal{B}_i$, and observe that $\mathcal{C}_1\supseteq...\supseteq\mathcal{C}_K$. For any $i=1,...,K$, the contribution to regret incurred after batch $b_i$ of information is at most $\Delta_{i+1}N$ on $\mathcal{C}_i$. In what follows we fix a treatment, $K_0$, which we will be specific about later. Using this and letting $m$ denote the expected number of observations in a batch we get the following decomposition of expected regret:
\begin{align}
\mathbb{E}\left[R_N\left(\hat{\pi}\right)\right]&=\mathbb{E}\left[R_N\left(\hat{\pi}\right)\left(\sum_{i=1}^{K_0}1_{\mathcal{C}_{i-1}\backslash\mathcal{C}_i}+1_{\mathcal{C}_{K_0}}\right)\right]\nonumber\\
&\leq n\sum_{i=1}^{K_0}\Delta_i\mathbb{P}\left(\mathcal{C}_{i-1}\backslash\mathcal{C}_i\right)+\sum_{i=1}^{K_0}B_i(b_i)\Delta_i+n\Delta_{K_0+1}+Dm,\label{ExpRegretDelay}
\end{align}
where the last term is due to the fact that the delay means that the all treatment allocations during the first $D+1$ batches have to be made without any information about the treatment outcomes. For every $i=1,...,K$ the event $\mathcal{C}_{i-1}\backslash\mathcal{C}_i$ can be decomposed as follows
\begin{align*}
\mathcal{C}_{i-1}\backslash\mathcal{C}_i=\left(\mathcal{C}_{i-1}\cap \mathcal{A}_i^c\right)\cup\left(\mathcal{B}_i^c\cap\mathcal{A}_i\cap \mathcal{B}_{i-1}\right).
\end{align*}
Therefore, the first term on the right-hand side of (\ref{ExpRegretDelay}) can be written as
\begin{align}
n\sum_{i=1}^{K_0}\Delta_i\mathbb{P}\left(\mathcal{C}_{i-1}\backslash\mathcal{C}_i\right)=n\sum_{i=1}^{K_0}\Delta_i\mathbb{P}\left(\mathcal{C}_{i-1}\cap\mathcal{A}_i^c\right)+n\sum_{i=1}^{K_0}\Delta_i\mathbb{P}\left(\mathcal{B}_i^c\cap\mathcal{A}_i\cap \mathcal{B}_{i-1}\right).\label{FirstTermDelay}
\end{align}
Notice that the first term on the right-hand side will be zero if $b_{i-1}=b_i$. On the event $\mathcal{B}_i^c\cap\mathcal{A}_i\cap \mathcal{B}_{i-1}$ the optimal treatment has not eliminated treatment $i$ at batch $b_i$. Therefore, for the last term on the right hand side of equation (\ref{FirstTermDelay}) we find that
\begin{align}
\mathbb{P}\left(\mathcal{B}_i^c\cap\mathcal{A}_i\cap \mathcal{B}_{i-1}\right)&\leq \mathbb{P}\left(\hat{\Delta}_i(\underline{B}(b_i))\leq \gamma\epsilon_{\underline{B}(b_i)}\right)\nonumber\\
&\leq \mathbb{P}\left(\hat{\Delta}_i(\underline{B}(b_i))-\Delta_i\leq \gamma\epsilon_{\tau_i}-\Delta_i\right)\nonumber\\
&\leq
 \mathbb{E}\left[\mathbb{P}\left(|\hat{\Delta}_i(\underline{B}(b_i))-\Delta_i|\geq \frac{1}{2}\gamma\epsilon_{\tau_i}|\underline{B}(b_i)\right)\right].\label{12.1}
\end{align}
For any $s\geq \tau_i$ we have that
\begin{align}
&\mathbb{P}\left(|\hat{\Delta}_i(s)-\Delta_i|\geq \frac{1}{2}\gamma\epsilon_{\tau_i}\right)\notag\\
&\leq \mathbb{P}\left(|f(\hat{\mu}_s^{(*)},(\hat{\sigma}^2_s)^{(*)})-f(\hat{\mu}^{(i)}_s,(\hat{\sigma}^2_s)^{(i)})+f(\mu^{(i)},(\sigma^2)^{(i)})-f(\mu^{(*)},(\sigma^2)^{(*)})|\geq \frac{1}{2}\gamma\epsilon_{\tau_i}\right)\nonumber\\
&\leq \mathbb{P}\left(|f(\hat{\mu}_s^{(*)},(\hat{\sigma}^2_s)^{(*)})-f(\mu^{(*)},(\sigma^2)^{(*)})|\geq \frac{1}{4}\gamma\epsilon_{\tau_i}\right)
+\mathbb{P}\left(|f(\hat{\mu}_s^{(i)},(\hat{\sigma}^2_s)^{(i)})-f(\mu^{(i)},(\sigma^2)^{(i)})|\geq \frac{1}{4}\gamma\epsilon_{\tau_i}\right).\label{12.2}
\end{align}
Furthermore, for any $j\in\{i,*\}$, we have
\begin{align}
&\mathbb{P}\left(|f(\hat{\mu}_s^{(j)},(\hat{\sigma}^2_s)^{(j)})-f(\mu^{(j)},(\sigma^2)^{(j)})|\geq \frac{1}{4}\gamma\epsilon_{\tau_i}\right)\notag\\
&\leq 
\mathbb{P}\left(|\hat{\mu}_s^{(j)}-\mu^{(j)}|+|(\hat{\sigma}^2_s)^{(j)}-(\sigma^2)^{(j)}|\geq \frac{1}{4\mathcal{K}}\gamma\epsilon_{\tau_i}\right)\nonumber\\
&\leq \mathbb{P}\left(|\hat{\mu}_s^{(j)}-\mu^{(j)}|\geq \frac{1}{8\mathcal{K}}\gamma\epsilon_{\tau_i}\right)+\mathbb{P}\left(|(\hat{\sigma}^2_s)^{(j)}-(\sigma^2)^{(j)}|\geq \frac{1}{8\mathcal{K}}\gamma\epsilon_{\tau_i}\right).\label{12.3}
\end{align}
By the mean value theorem we have that
\begin{align}
\mathbb{P}\left(|(\hat{\sigma}^2_s)^{(j)}-(\sigma^2)^{(j)}|\geq \frac{1}{8\mathcal{K}}\gamma\epsilon_{\tau_i}\right)\leq \mathbb{P}\left(|\hat{\mu}_s^{(j)}-\mu^{(j)}|\geq \frac{1}{32\mathcal{K}}\gamma\epsilon_{\tau_i}\right)+\mathbb{P}\left(|(\hat{\mu}_{2,s})^{(j)}-\mu_2^{(j)}|\geq \frac{1}{16\mathcal{K}}\gamma\epsilon_{\tau_i}\right),\label{12.4}
\end{align}
where $\mu_2=\mathbb{E}\left[Y_1^2\right]$ and $\hat{\mu}_{2,s}=\frac{1}{s}\sum_{i=1}^sY_i^2$. Combining equations (\ref{12.1}), (\ref{12.2}), (\ref{12.3}) and (\ref{12.4}) and using Hoeffding's inequality to each of the three terms as well as the fact that $\gamma\geq\mathcal{K}$ we arrive at the following bound,
\begin{align*}
\mathbb{P}\left(|\hat{\Delta}_i(s)-\Delta_i|\geq \gamma\epsilon_{\tau_i}\right)&\leq C\exp\left(-\frac{1}{1024\bar{a}}\epsilon_{\tau_i}^2 s\right)\\
&\leq C\exp\left(-\frac{1}{1024\bar{a}}\epsilon_{\tau_i}^2 \tau_i\right)\\
&=C\exp\left(-\overline{\log}\left(\frac{T}{\tau_i}\right)\right)\\
&\leq C\frac{\tau_i}{T}.
\end{align*}
Thus,
\begin{align*}
\mathbb{P}\left(\mathcal{B}_i^c\cap\mathcal{A}_i\cap \mathcal{B}_{i-1}\right)&\leq C\frac{\tau_i}{T}
\end{align*} 
On the event $\mathcal{C}_{i-1}\cap\mathcal{A}_i^c$ the optimal treatment is eliminated between the time batch $b_{i-1}+1$ and $b_i$ of information arrives. Furthermore, every suboptimal treatment $j\leq i-1$ has also been eliminated. Therefore, the probability of this event can be bounded as follows:
\begin{align*}
\mathbb{P}\left(\mathcal{C}_{i-1}\cap\mathcal{A}_i^c\right)&\leq\mathbb{P}\left(\exists(j,s),i\leq j\leq K,b_{i-1}+1\leq s\leq b_i;\hat{\Delta}_j(\underline{B}(s))\leq -\gamma \epsilon_{\underline{B}(s)}\right)\\
&\leq \sum_{j=i}^K\mathbb{P}\left(\exists s,b_{i-1}+1\leq s\leq b_i;\hat{\Delta}_j(\underline{B}(s))\leq -\gamma \epsilon_{\underline{B}(s)}\right)\\
&= \sum_{j=i}^K\left[\Phi_j(b_i)-\Phi_j(b_{i-1})\right],
\end{align*}
where $\Phi_j(b)=\mathbb{P}\left(\exists s\leq b; \hat{\Delta}_j(\underline{B}(s))\leq -\gamma \epsilon_{\underline{B}(s)}\right)$. We proceed to bounding terms of the form $\Phi_j(b_i)$ for $j\geq i$. 
\begin{align*}
\mathbb{P}\left(\exists s\leq b_i; \hat{\Delta}_j(\underline{B}(s))\leq -\gamma \epsilon_{\underline{B}(s)}\right)&\leq \mathbb{P}\left(\exists s\leq b_i; \hat{\Delta}_j(\underline{B}(s))-\Delta_j\leq -\gamma \epsilon_{\underline{B}(s)}\right)\\
&\leq\mathbb{P}\left(\exists s\leq B_j(b_i);\hat{\Delta}_j(s)-\Delta_j\leq -\gamma \epsilon_{s}\right)\\
&\leq\mathbb{P}\left(\exists s\leq \tau_i+\overline{m};\hat{\Delta}_j(s)-\Delta_j\leq -\gamma \epsilon_{s}\right)\\
&\leq \mathbb{P}\left(\exists s\leq \tau_i+\overline{m};|f(\hat{\mu}_s^{(j)},(\hat{\sigma}_s^2)^{(j)})-f(\mu^{(j)},(\sigma^2)^{(j)})|\geq \frac{1}{2}\gamma \epsilon_{s}\right)\\
&+ \mathbb{P}\left(\exists s\leq \tau_i+\overline{m};|f(\hat{\mu}_s^{(*)},(\hat{\sigma}_s^2)^{(*)})-f(\mu^{(*)},(\sigma^2)^{(*)})|\geq \frac{1}{2}\gamma \epsilon_{s}\right).
\end{align*}
For any $j\in\{i,...,K,*\}$ we find that
\begin{align*}
&\mathbb{P}\left(\exists s\leq \tau_i+\overline{m};|f(\hat{\mu}_s^{(j)},(\hat{\sigma}_s^2)^{(j)})-f(\mu^{(j)},(\sigma^2)^{(j)})|\geq \frac{1}{2}\gamma \epsilon_{s}\right)\notag\\
&\leq
 \mathbb{P}\left(\exists s\leq \tau_i+\overline{m};|\hat{\mu}_s^{(j)}-\mu^{(j)}|\geq \frac{1}{4\mathcal{K}}\gamma \epsilon_{s}\right)
+\mathbb{P}\left(\exists s\leq \tau_i+\overline{m};|(\hat{\sigma}_s^2)^{(j)})-(\sigma^2)^{(j)})|\geq \frac{1}{4\mathcal{K}}\gamma \epsilon_{s}\right)\\
&\leq \mathbb{P}\left(\exists s\leq \tau_i+\overline{m};|\hat{\mu}_s^{(j)}-\mu^{(j)}|\geq \frac{1}{4\mathcal{K}}\gamma \epsilon_{s}\right)
+\mathbb{P}\left(\exists s\leq \tau_i+\overline{m};|(\hat{\mu}_{2,s})^{(j)}-\mu_2^{(j)}|\geq \frac{1}{8\mathcal{K}}\gamma \epsilon_{s}\right)\\
&+\mathbb{P}\left(\exists s\leq \tau_i+\overline{m};|\hat{\mu}_s^{(j)}-\mu^{(j)}|\geq \frac{1}{16\mathcal{K}}\gamma \epsilon_{s}\right)\\
&\leq C\frac{\tau_i+\overline{m}}{T},
\end{align*}
where we once more have used equation (\ref{delay.3}) and Lemma A.1 in \cite{CovBandit}. Using this we find that
\begin{align}
\sum_{i=1}^{K_0}\Delta_i\mathbb{P}\left(\mathcal{C}_{i-1}\cap\mathcal{A}_i^c\right)&\leq \sum_{i=1}^{K_0}\Delta_i\sum_{j=i}^{K}\left[\Phi_j(b_i)-\Phi_j(b_{i-1})\right]\nonumber\\
&\leq \sum_{j=1}^{K}\sum_{i=1}^{j\wedge K_0-1}\Phi_j(b_i)\left(\Delta_i-\Delta_{i+1}\right)+\sum_{j=K_0}^K\Delta_{K_0}\Phi_j(b_{K_0})+\sum_{j=1}^{K_0-1}\Delta_j\Phi_j(b_j)\nonumber\\
&\leq C\left(\frac{1}{T}\sum_{j=1}^{K}\sum_{i=1}^{j\wedge K_0-1}\left(\tau_i+\overline{m}\right)\left(\Delta_i-\Delta_{i+1}\right)+\frac{1}{T}\sum_{j=1}^K\Delta_{j\wedge K_0}\left(\tau_{j\wedge K_0}+\overline{m}\right)\right).\label{12.5}
\end{align} 
Observe that, by (\ref{delay.2}),
\begin{align}
\sum_{j=1}^{K}\sum_{i=1}^{j\wedge K_0-1}\tau_i\left(\Delta_i-\Delta_{i+1}\right)&\leq C\gamma^2\bar{a}^2\sum_{j=1}^{K}\sum_{i=1}^{j\wedge K_0-1}\frac{\left(\Delta_i-\Delta_{i+1}\right)}{\Delta_i^2}\overline{\log}\left(\frac{T\Delta_i^2}{1152\bar{a}^2\gamma^2}\right)\nonumber\\
&\leq C\bar{a}^2\gamma^2\sum_{j=1}^{K}\int_{\Delta_{j\wedge K_0}}^{\Delta_1}\frac{1}{x^2}\overline{\log}\left(\frac{Tx^2}{1152\bar{a}^2\gamma^2}\right)dx\nonumber\\
&\leq C\bar{a}^2\gamma^2\sum_{j=1}^{K}\frac{1}{\Delta_{j\wedge K_0}}\overline{\log}\left(\frac{T\Delta_{j\wedge K_0}^2}{1152\bar{a}^2\gamma^2}\right).\label{12.6}
\end{align}
The parts involving $\overline{m}$ in equation (\ref{12.5}) can be bounded by
\begin{align}
\overline{m}\sum_{j=1}^{K}\sum_{i=1}^{j\wedge K_0-1}\left(\Delta_i-\Delta_{i+1}\right)+\sum_{j=1}^K\Delta_{j\wedge K_0}\overline{m}&\leq \overline{m}K.  \label{12.7}
\end{align}
Bringing together equations (\ref{12.5}), (\ref{12.6}) and (\ref{12.7}) we see that 
\begin{align}
\sum_{i=1}^{K_0}\Delta_i\mathbb{P}\left(\mathcal{C}_{i-1}\cap\mathcal{A}_i^c\right)&\leq C\left(\frac{\bar{a}^2\gamma^2}{T}\sum_{j=1}^K\frac{1}{\Delta_{j\wedge K_0}}\overline{\log}\left(\frac{T\Delta_{j\wedge K_0}^2}{1152\bar{a}^2\gamma^2}\right)+\frac{K\overline{m}}{T}\right).
\end{align}
Combining this with equation (\ref{FirstTermDelay}) and (\ref{ExpRegretDelay}) we obtain
\begin{align}
\mathbb{E}\left[R_N\left(\hat{\pi}\right)\right]&\leq C\biggl(\frac{\bar{a}^2\gamma^2n}{T}\sum_{j=1}^K\frac{1}{\Delta_{j\wedge K_0}}\overline{\log}\left(\frac{T\Delta_{j\wedge K_0}^2}{1152\bar{a}^2\gamma^2}\right)
+\frac{n\bar{a}^2\gamma^2}{T}\sum_{j=1}^{K_0}\frac{1}{\Delta_{j}}\overline{\log}\left(\frac{T\Delta_{j}^2}{1152\bar{a}^2\gamma^2}\right)\nonumber\\
&+\sum_{i=1}^{K_0}B_i(b_i)\Delta_i+n\Delta_{K_0+1}+\frac{n\overline{m}K}{T}+mD\biggr)\nonumber\\
&\leq C\biggl(\left(1+\frac{n}{T}\right) \bar{a}^2\gamma^2\sum_{j=1}^{K_0}\frac{1}{\Delta_{j}}\overline{\log}\left(\frac{T\Delta_{j}^2}{1152\bar{a}^2\gamma^2}\right)+\frac{\bar{a}^2\gamma^2n}{T}\frac{K-K_0}{\Delta_{K_0}}\overline{\log}\left(\frac{T\Delta_{K_0}^2}{1152\bar{a}^2\gamma^2}\right)\nonumber\\
&+n\Delta_{K_0+1}+\frac{n\overline{m}K}{T}+\overline{m}D\biggr).
\end{align}
Fix $\Delta>0$ and let $K_0$ be such that $\Delta_{K_0+1}=\Delta^-$. Define the function $\phi(\cdot)$ by
\begin{align*}
\phi(x)=\frac{1}{x}\overline{\log}\left(\frac{Tx^2}{1152\bar{a}^2\gamma^2}\right),
\end{align*}
and notice that $\phi(x)\leq 2e^{-1/2}\phi(x')$ for any $x\geq x'\geq 0$. Using this with $x'=\Delta$ and $x=\Delta_i$ for $i\leq K_0$ we obtain the following bound on the expected regret.
\begin{align}
\mathbb{E}\left[R_N(\hat{\pi})\right]\leq C\left(\frac{\bar{a}^2\gamma^2K}{\Delta}\left(1+\frac{n}{T}\right)\overline{\log}\left(\frac{T\Delta^2}{1152\bar{a}^2\gamma^2}\right)+n\Delta^{-}+\frac{n\overline{m}K}{T}+mD\right).
\end{align}
Note that we by definition we have that $m\leq\overline{m}$. The theorem then follows by arguments similar to those in the proof of theorem \ref{NoCovRegret}. 

%Suppose $\bar{a}$ is such that the following inequality holds
%\begin{align}
%\frac{5939\bar{a}^2\gamma^2}{\Delta}\left(1+\frac{n}{T}\right)\overline{\log}\left(\frac{T\Delta^2}{1152\bar{a}^2\gamma^2}\right)\geq 1\label{a_big}
%\end{align}
%Then arguments as in the proof of theorem \ref{NoCovRegret} leads to the following bound:
%\begin{align}
%\mathbb{E}\left[R_N(\bar{\pi})\right]\leq C \min\left(\mathcal{K}^2\bar{a}^2\sum_{i=1}^K\frac{1}{\Delta_i}\bar{\log}\left(\frac{n\Delta_i^2}{\bar{a}^2}\right)+\overline{m}(K+D),\sqrt{n\mathcal{K}^3\bar{a}^3\overline{m}K\bar{\log}\left(\overline{m}K/\mathcal{K}\bar{a}\right)}+D\overline{m}\right)\label{part1}
%\end{align} 
%On the other hand, if the inequality \ref{a_big} does not hold, then expected regret can be bounded by
%\begin{align}
%\mathbb{E}\left[R_N(\bar{\pi})\right]\leq C\overline{m}\left(K+D\right)\label{part2}
%\end{align}
%Thus by combining \ref{part1} and \ref{part2} it follows that expected regret can be bounded by
%\begin{align*}
%\mathbb{E}\left[R_N(\bar{\pi})\right]\leq C \max\left(\min\left(\mathcal{K}^2\bar{a}^2\sum_{i=1}^K\frac{1}{\Delta_i}\bar{\log}\left(\frac{n\Delta_i^2}{\bar{a}^2}\right)+\overline{m}(K+D),\sqrt{n\mathcal{K}^3\bar{a}^3\overline{m}K\bar{\log}\left(\overline{m}K/\mathcal{K}\bar{a}\right)}+D\bar{m}\right),\overline{m}\left(K+D\right)\right)
%\end{align*}
\qed
\end{proof}

\begin{proof}[Proof of Theorem \ref{Delay}]
Recall equation (\ref{RegretTransformLipschitzNoMargin}). Applying Theorem \ref{LipschitzBoundDelay} with the following values 
\begin{align*}
\Delta=\sqrt{\frac{\overline{m}K\bar{a}\log(\overline{m}K/\bar{a})}{n\bar{B}_j}},\quad \gamma=\mathcal{K}L,\quad T=n\bar{B}_j,
\end{align*}
for each bin $j=1,...,F$, we obtain the following bound on the regret accumulated across the any bin $j$:
\begin{align*}
\mathbb{E}\left[R_j(\bar{\pi})\right]&\leq c\left[\sqrt{\overline{m}\bar{a}^3K\log(\overline{m}K/\bar{a})n\bar{B}_j}+n\bar{B}_jV_j^{\beta}+\overline{m}K+m_jD\right],
\end{align*}
where $m_j$ is the expected batch size associated with bin $j$. Note that $m_j\leq \bar{c}\overline{m}\bar{B}_j$. Thus,
\begin{align*}
\mathbb{E}\left[R_N(\bar{\pi})\right]\leq c\left(\sum_{j=1}^F\left[\sqrt{\overline{m}K\bar{a}^3\log(\overline{m}K/\bar{a})n\bar{B}_j}+n\bar{B}_jV_j^{\beta}+\overline{m}K\right]+\overline{m}D\right).
\end{align*}
\end{proof}

\bibliographystyle{plainnat}
\bibliography{bandit}
\end{document}